\documentclass[ejsv2,noshowframe,preprint]{imsart}

\RequirePackage[numbers]{natbib}
\RequirePackage[colorlinks,citecolor=blue,urlcolor=blue]{hyperref}
\RequirePackage{graphicx}

\startlocaldefs

\usepackage{mathrsfs}

\usepackage{accents}

\usepackage{multirow}
\usepackage[normalem]{ulem}
\usepackage[linesnumbered,lined,boxed]{algorithm2e}

\makeatletter
\newcommand{\mylabel}[2]{#2\def\@currentlabel{#2}\label{#1}}
\makeatother
\usepackage{amsmath,amsfonts,amsbsy,latexsym,tikz,verbatim}
\usepackage{hyperref}
\usepackage{graphicx,color}
\usepackage{caption}
\usepackage{subcaption}
\graphicspath{} 

\usepackage{enumerate}

 

\newcommand{\Cov}[0]{\text{Cov}}
\newcommand{\Var}[0]{\text{Var}}

\newcommand{\Pro}{\mathbb{P}}
\newcommand{\Exp}{\mathbb{E}}

  
\renewcommand{\leq}{\leqslant} 
\renewcommand{\geq}{\geqslant}



\newcommand{\cA}{\mathcal{A}}\newcommand{\cB}{\mathcal{B}}\newcommand{\cC}{\mathcal{C}}
\newcommand{\cE}{\mathcal{E}}\newcommand{\cF}{\mathcal{F}}
\newcommand{\cG}{\mathcal{G}}\newcommand{\cH}{\mathcal{H}}\newcommand{\cI}{\mathcal{I}}

\newcommand{\cQ}{\mathcal{Q}}
\newcommand{\cS}{\mathcal{S}}\newcommand{\cU}{\mathcal{U}}

\newcommand{\cZ}{\mathcal{Z}}  



\newcommand{\mvD}{\boldsymbol{D}}

\newcommand{\mvU}{\boldsymbol{U}}
\newcommand{\mvX}{\boldsymbol{X}}
\newcommand{\mvZ}{\boldsymbol{Z}}\newcommand{\mva}{\boldsymbol{a}}

\newcommand{\mve}{\boldsymbol{e}}

\newcommand{\mvu}{\boldsymbol{u}}\newcommand{\mvv}{\boldsymbol{v}}
\newcommand{\mvw}{\boldsymbol{w}}\newcommand{\mvx}{\boldsymbol{x}}
\newcommand{\mvz}{\boldsymbol{z}}

\newcommand{\mvgamma}{\boldsymbol{\gamma}}
\newcommand{\mvDelta}{\boldsymbol{\Delta}}
\newcommand{\mvtheta}{\boldsymbol{\theta}}\newcommand{\mvTheta}{\boldsymbol{\Theta}}

\newcommand{\mvmu}{\boldsymbol{\mu}}

\newcommand{\mvpsi}{\boldsymbol{\psi}}\newcommand{\mvPsi}{\boldsymbol{\Psi}}   


\newcommand{\bI}{\mathbb{I}}

\newcommand{\bN}{\mathbb{N}}
\newcommand{\bP}{\mathbb{P}}\newcommand{\bR}{\mathbb{R}}
\newcommand{\bU}{\mathbb{U}}
\newcommand{\bV}{\mathbb{V}}





\usepackage{enumitem}








\theoremstyle{plain}

\newtheorem{theorem}{Theorem}[section]
\newtheorem{lemma}[theorem]{Lemma}
\newtheorem{corollary}[theorem]{Corollary}


\theoremstyle{remark}

\newtheorem{remark}{Remark}


\endlocaldefs

\begin{document}
\begin{frontmatter}
\title{Truncated LinUCB for Stochastic Linear Bandits}
\runtitle{Truncated LinUCB  for Stochastic Linear Bandits}

\begin{aug}
\author[A]{\fnms{Yanglei}~\snm{Song}\ead[label=e1]{yanglei.song@queensu.ca}}
\and
\author[B]{\fnms{Meng}~\snm{Zhou}\ead[label=e3]{simon.zhou@queensu.ca}}
\address[A]{Department of Mathematics and Statistics, Queen's University \printead[presep={\ }]{e1}}

\address[B]{School of Computing and Department of Mathematics and Statistics, Queen's University\printead[presep={,\ }]{e3}}
\runauthor{Y.~Song and M.~Zhou}
\end{aug}

\begin{abstract}
This paper considers contextual bandits with a finite number of arms, where the contexts are independent and identically distributed $d$-dimensional random vectors, and the expected rewards are linear in both the arm parameters and contexts. The LinUCB algorithm, which is near minimax optimal for related linear bandits, is shown to have a cumulative regret that is suboptimal in both the dimension $d$ and time horizon $T$, due to its over-exploration. A truncated version of LinUCB is proposed and termed ``Tr-LinUCB", which follows LinUCB up to a truncation time $S$ and performs pure exploitation afterwards. The Tr-LinUCB algorithm is shown to achieve $O(d\log(T))$ regret if $S = Cd\log(T)$ for a sufficiently large constant $C$, and a matching lower bound is established, which shows the rate optimality of Tr-LinUCB in both $d$ and $T$ under a low dimensional regime. Further, if $S = d\log^{\kappa}(T)$ for $\kappa>1$, the loss compared to the optimal is an extra $\log\log(T)$ factor, which does not depend on $d$. This insensitivity to overshooting in choosing the truncation time of Tr-LinUCB is of practical importance.
\end{abstract}

\begin{keyword}[class=MSC]
\kwd[Primary ]{62L10}
\kwd[; secondary ]{62L12}
\end{keyword}

\begin{keyword}
\kwd{Stochastic linear bandits}
\kwd{Upper confidence bounds}
\kwd{Minimax optimality}
\end{keyword}

\end{frontmatter}

\section{Introduction}

Multi-armed bandit problems is a fundamental example of sequential decision making,
that has wide applications, such as personalized medicine \citep{tewari2017ads,shen2020learning}, advertisement placement \citep{li2010exploitation, chapelle2011empirical}, recommendation systems \citep{li2010contextual, xu2020contextual}. 
In its classical formulation, introduced by \citet{thompson1933likelihood} and popularized by  \citet{robbins1952some}, there are a finite number of arms, each associated with a mean reward, and one chooses arms sequentially with the goal to minimize the cumulative regret, relative to the maximum reward, over some time horizon. Many algorithms that are based on different principles, 
including upper confidence bound (UCB) \citep{lai1985asymptotically, auer2002finite,cappe2013kullback}, Thompson sampling \citep{gupta2011thompson, agrawal2012analysis, russo2014learning_thompson}, information-directed sampling \citep{russo2018learning, kirschner2018information}, and $\epsilon$-greedy \citep{sutton2018reinforcement, cesa1998finite}, have been proposed, that attain either the instance-dependent lower bound \citep{lai1985asymptotically, cappe2013kullback} or minimax lower bound \citep{audibert2009minimax, bubeck2012regret} or both \citep{menard2017minimax}. 


In applications mentioned above, however, there is usually context information (i.e., covariates) that can assist decision making, and each arm may be optimal for some contexts. For example, in clinical trials for testing a new treatment, whether it is more effective may depend on the genetic or demographic information of patients \citep{tewari2017ads}. The availability of contexts introduces a range of possibilities in terms of modeling: parametric \citep{filippi2010parametric, li2010contextual} versus non-parametric \citep{perchet2013multi, guan2018nonparametric}, linear \citep{auer2002using,  dani2008stochastic, rusmevichientong2010linearly, abbasi2011improved, goldenshluger2013linear, bastani2020online} versus non-linear \citep{jun2017scalable,kveton2020randomized, ding2021efficient}, finite \citep{goldenshluger2013linear, bastani2020online} versus infinite \citep{dani2008stochastic,abbasi2011improved,rusmevichientong2010linearly,kirschner2018information} number of arms, stochastic \citep{goldenshluger2013linear,bastani2020online,bastani2021mostly} versus adversarial \citep{auer2002nonstochastic,kakade2008efficient} contexts, etc; see the textbook \citep{lattimore2020bandit} for a comprehensive survey. Due to the vast literature and inconsistent terminology across research communities, we first state the framework in the current paper, and focus on the most relevant works. 

Specifically, we consider \textit{stochastic linear bandits} with $2 \leq K < \infty$ arms, where the sequence of contexts $\{\mvX_t: t \geq 1\}$ are independent and identically distributed (i.i.d.) $\bR^d$-random vectors.  At each time $t \geq 1$, one observes the context $\mvX_t$, and there is a potential reward $Y_t^{(k)}$ for each arm $k \in [K] := \{1,\ldots,K\}$, where
\begin{align}\label{def:linear_response}
    Y_t^{(k)} = \mvtheta_k'\mvX_t + \epsilon_t^{(k)}, \quad \text{ with }\;\; \Exp[  \epsilon_t^{(k)} \, \vert\, \mvX_{t}] = 0.
\end{align}
That is, each arm $k \in [K]$ is associated with a $d$-dimensional unknown parameter vector $\mvtheta_k$, and its expected reward given $\mvX_t$ is $\mvtheta_k'\mvX_t$. Denote by $A_t \in [K]$ the selected arm at time $t$, and if $A_t = k$, i.e., $k$-th arm is selected, then a reward $Y_t=Y_t^{(k)}$ is realized. In choosing  which arm to pull at time $t$ (i.e., $A_t$), one may only use the previous observations $(\mvX_s,Y_s), s < t$ and the current context $\mvX_t$. We evaluate the performance of an admissible rule by its cumulative regret up to a known time horizon $T$, denoted by $R_T$, which is relative to an oracle with the knowledge of arm parameters $\{\mvtheta_k: k \in [K]\}$.

Under this framework, \citet{goldenshluger2013linear} proposes a ``forced sampling" strategy for the two-arm case (i.e., $K=2$), referred as the ``OLS" algorithm,  and establishes a $O(d^3\log(T))$\footnote{Note that in the upper (resp.~lower) bound notation $O(\cdot)$ (resp. $\Omega(\cdot)$), the hidden multiplicative constant does not depend on the variables inside the parentheses, but may on other quantities, which are understood to be \textit{fixed}; for example, for $O(\log(T))$, the hidden constant may depend on $d,K$, but for $O(d^3\log(T))$, it does not depend on $d$.} upper bounded on $R_T$ under a ``margin" condition, which  requires that the probability of a context vector falling within $\tau$ distance to the boundary $\{\mvx \in \bR^d: \mvtheta_1' \mvx = \mvtheta_2'\mvx\}$ is $O(\tau)$, for small $\tau >0$;    the upper bound is improved to $O(d^2\log^{3/2}(d)\log(T))$ in \citet{bastani2020online}. Further, for any admissible procedure, \citet{goldenshluger2013linear} establishes a $\Omega(\log(T))$ lower bound on the worst-case regret over  a family of problem instances, and  conclude that the OLS algorithm achieves the optimal logarithmic dependence on $T$. In practice, however, it is sensitive to its tuning parameters, including the rate of exploration $q$. Specifically,  the OLS algorithm is scheduled to choose arm $1$ (resp. $2$) at time $\tau_n := \lfloor \exp(qn)\rfloor$ (resp. $\tau_n+1$) for $n \geq 1$, where $\lfloor\cdot\rfloor$ is the floor function, leading to about $2q^{-1}\log(T)$ forced sampling. Both undershoot and overshoot in selecting $q$ entail large cost: on one hand, $q$ is required to be small enough to ensure sufficient exploration \citep[see][Theorem 1]{goldenshluger2013linear}; on the other hand, if $q$ vanishes as $T$ increases, resulting in, say,  $\Omega(\log^{\kappa}(T))$ forced action for some $\kappa > 1$, then the regret would be $\Omega(\log^{\kappa}(T))$.

For  more general linear bandits (see Subsection  \ref{subsec:review_linear_bandits}),  ``optimism in the face of uncertainty" is a popular design principle, which, for each $t \geq 1$, chooses an arm $A_t \in [K]$ that maximizes an upper bound $\text{UCB}_t(k)$ on the potential reward $\mvtheta_k'\mvX_t$  \citep{auer2002using,dani2008stochastic,rusmevichientong2010linearly,li2010contextual,abbasi2011improved,hamidi2021better,wu2020stochastic}. Among this family, the LinUCB algorithm in \citet{abbasi2011improved} is perhaps the best known, and is near minimax optimal \citep[Chapter 24]{lattimore2020bandit}. In \citet{hamidi2021better}, in the framework  under consideration, the LinUCB algorithm is shown to have a $O(\log^2(T))$ regret, and it was not clear whether the $\log(T)$ gap between this upper bound and the optimal rate, achieved by the OLS algorithm, does exist or is an artifact of the proof techniques therein.

It is commonly perceived that the exploration–exploitation trade-off is at the heart of multi-armed bandit problems. However, \citet{bastani2021mostly} shows that a  greedy, pure-exploitation  algorithm is rate optimal, i.e., achieving a $O(\log(T))$ regret, under a ``covariate adaptive" condition, which however does not hold if there exist discrete components in the context, e.g., an intercept. In the absence of this condition, \citet{bastani2021mostly} proposes a ``Greedy-First" algorithm, that starts initially with the greedy algorithm, and switch to another algorithm, such as OLS or LinUCB, if it detects that the greedy algorithm fails. In addition to deciding when to switch, the Greedy-First algorithm has the same issue as the algorithm that it may transit into, e.g., the sensitivity to  parameters of OLS, and the potential sub-optimality of LinUCB.

\subsection{Our contributions}
First, we construct explicit problem instances, for which the cumulative regret of the LinUCB algorithm is both $\Omega(d^2\log^2(T))$ and $O(d^2\log^2(T))$, and thus prove that LinUCB is suboptimal for stochastic linear bandits in both the dimension $d$ and the horizon $T$. 
The suboptimality of LinUCB is because the path-wise upper confidence bounds in LinUCB, based on the self-normalization principle \citep{pena2008self}, is wider than the actual order  of statistical error in estimating the arm parameters; see subsection \ref{subsec:subopt_linUCB}.


Second, in view of its over-exploration, we propose to truncate the duration of the LinUCB algorithm, and call the proposed algorithm  ``Tr-LinUCB". Specifically, we run LinUCB up to a truncation time $S$, and then perform pure exploitation afterwards. For Tr-LinUCB, if the truncation time $S = Cd\log(T)$ for a large enough $C$, its cumulative regret is  $O(d\log(T))$; more importantly, in practice, if we choose $S = d\log^{\kappa}(T)$ for some $\kappa > 1$, the regret is  $O(d\log(T)\log\log(T))$. Thus unlike OLS, whose regret would be linear in the number of forced sampling, the cost of overshooting for Tr-LinUCB, i.e., $S$ being a larger order than the optimal, is a multiplicative  $\log\log(T)$ factor, that does not depend on $d$.  The practical implication is that Tr-LinUCB is insensitive to the selection of the truncation time $S$ if we err on the side of overshooting. Extensive experiments, including on several real-world datasets, corroborate our theory.

Third, we establish a matching $\Omega(d\log(T))$ lower bound on the worst-case regret over concrete families of  problem instances,
and thus show the rate optimality of Tr-LinUCB, with a proper  truncation time, in both the dimension $d$ and horizon $T$, for such families. 
More specifically, the characterization of the optimal dependence on $d$, in both the upper and lower bounds, appears novel,  holds under the low dimensional regime $d = O(\log(T)/\log\log(T))$, and relies on an assumption on  contexts that relates the expected instant regret to the second moment of  the arm parameters estimation error; see condition \ref{cond:unit_sphere}. Under this assumption, by similar arguments, it can be shown that the OLS algorithm proposed by \cite{goldenshluger2013linear} also achieves $O(d\log(T))$ regret. Thus  our contribution in this regard should be understood as proposing and working with such a condition, 
and verifying it for concrete problem instances, e.g., when contexts have a log-concave Lebesgue density.  Without this condition, we establish   $O(d^2\log(2d) \log(T))$ upper bound for Tr-LinUCB,  similar to that for OLS  \citep{bastani2020online}, which, however, may not be tight (in $d$) for any family of problem instances. As discussed above, the main practical advantage of Tr-LinUCB is its insensitivity to tuning parameters.

Finally, we note that the elliptical potential lemma \citep[Lemma 19.4]{lattimore2020bandit}, which is the main tool for the analysis of LinUCB \citep{abbasi2011improved,li2019nearly,wu2020stochastic,hamidi2021better,li2019nearly},  does not lead to the $O(\log(T))$ upper bound for Tr-LinUCB, and a tailored analysis is required to handle the dependence among observations induced by sequential decision making, and to show that information accumulates at a linear rate in time for each arm.

\subsection{More on stochastic linear bandits} \label{subsec:review_linear_bandits}
In the formulation \eqref{def:linear_response}, under the  ``large margin" condition (for $K=2$) that
$\bP( |(\mvtheta_1-\mvtheta_2)' \mvX_1| \leq \tau) = O(\tau^{\alpha})$ with $\alpha > 1$, the optimal regret is $O(1)$, achieved by the Greedy algorithm  \citep[Corollary 1]{bastani2021mostly}  and the LinUCB algorithm \citep[Remark 8.4]{wu2020stochastic,hamidi2021better}. We note that if $d$ is fixed, and $\mvX_1$ has a continuous component with a bounded  density, then the margin condition (i.e.,~$\alpha = 1$) holds, and thus it has a wider applicability.
Under the high dimensional regime, 
\citet{bastani2020online} extends the OLS algorithm by replacing the least squares estimator by Lasso, which achieves a $O(s_0^2\log^2(T))$ regret if $\log(d) = O(\log(T))$, where $s_0$ is the number of non-zero elements in $\mvtheta_1,  \mvtheta_2$. In addition, \citet{bastani2020online} conjectures $\Omega(d\log(T))$ lower bound in the low dimensional regime (see Section 3.3 therein), which we prove in the current work. Note that the Tr-LinUCB algorithm uses the ridge regression as the estimation method, and thus the targeted regime is low dimensional.

Next, we discuss a more general version of stochastic linear bandits. Specifically, at each time $t \geq 1$, based on previous observations, a decision maker chooses an action $A_t$ from a possibly infinite action set $\cA_t \subset \bR^{p}$, and receives a reward  $Y_t = \mvtheta_{*}' A_t + \epsilon_t$, with the goal of maximizing the cumulative reward, where $\mvtheta_{*} \in \bR^{p}$ is an unknown vector, and $\epsilon_t$ is a zero mean observation noise. To see how  the formulation in \eqref{def:linear_response} fits into this general framework,  when $K=2$, we let $\mvtheta_{*} = (\mvtheta_1',\mvtheta_2')'$ and $\cA_{t} = \{ (\mvX_t',\boldsymbol{0}_p')',(\boldsymbol{0}_p',\mvX_t')'\}$. Then $A_t = 1$ (resp. $2$) is identified with the first (resp. second) vector in $\cA_t$, and $\epsilon_t = \sum_{k=1}^{2} \epsilon_{t}^{(k)} I(A_t = k)$. Thus the formulation in \eqref{def:linear_response} may be viewed as a special case  where the action sets $\{\cA_t, t \in [T]\} \subset \bR^{p}$ are i.i.d.~with $p=dK$, and each $\cA_t$ has $K$ actions that are constructed from the context vector $\mvX_t \in \bR^d$.

When the size of action set $\cA_t \subset \bR^{d}$ is infinite (resp.~bounded by $K < \infty$), without further assumptions, the optimal worst-case regret has a  $\Omega(d\sqrt{T})$ (resp.~$\Omega(\sqrt{dT})$) lower bound, and is achieved,  up to a logarithmic factor in $T$ (resp. $T$ and $K$), by, e.g.,    \citet{dani2008stochastic,abbasi2011improved,rusmevichientong2010linearly,kirschner2018information} (resp.~by \citet{auer2002using,chu2011contextual,li2019nearly,russo2018learning}). When the action set is {fixed} and finite, i.e., $\cA_t = \cA$ for $t \geq 1$ with $|\cA| < \infty$, and  
there is a positive gap between the reward for the best and the second best action in $\cA$, the algorithms in \citet{lattimore2017end,combes2017minimal,pmlr-v108-hao20b,kirschner2021asymptotically} achieve the asymptotically optimal regret $C_*\log(T)$ as $T \to \infty$, where $C_{*}$ is a problem dependent quantity. In contrast, for the formulation in \eqref{def:linear_response}, under the margin condition, the dominant part of the cumulative regret is incurred when contexts appear (arbitrarily) close to the boundary.

\subsection{Outline and notations}
In Section \ref{sec:prob_formulation}, we formulate the stochastic linear bandit problem, and propose the Tr-LinUCB algorithm. In Section \ref{sec:opt_Tr-LinUCB}, we establish upper bounds on the cumulative regret of Tr-LinUCB, and  matching lower bounds on the worst-case regret over families of problem instances. Further, we show that  LinUCB is suboptimal in both $d$ and $T$. In Section \ref{sec:simulation}, we present experiments on both synthetic and real-world data. We present the upper and lower bound analysis in Section \ref{proof:regret_main_Td} and \ref{sec:proof_lower_bound}  respectively and conclude in Section \ref{sec:conclusion}. The remaining proofs are provided in the appendix. 

\smallskip

\noindent \textbf{Notations.} For a positive integer $n$, define $[n] := \{1,\ldots,n\}$, and denote by $\bN$ (resp. $\bN_{+}$) the set of all non-negative (resp. positive) integers. 
For $\tau \geq 0$, let $\lfloor\tau\rfloor:=\sup\{n \in \bN:n \leq \tau\}$ and $\lceil \tau\rceil := \inf\{n \in \bN: n \geq \tau\}$ be the floor and ceiling of $\tau$.
All vectors are column vectors. For $d \in \bN_{+}$, denote by $\bR^d$ the $d$-dimensional Euclidean space and by $\cS^{d-1} := \{\mvx \in \bR^d: \|\mvx\| = 1\}$ the unit sphere in $\bR^d$, where $\|\mvx\|$ denotes the Euclidean norm of $\mvx$.  For  a $d$-by-$d$ matrix $\bV$ and a vector $\mvx$ of length $d$, define $\|\mvx\|_{\bV} = \sqrt{\mvx'\bV \mvx}$, where $\mvx'$ denotes the transpose of $\mvx$, and denote by $\lambda_{\min}(\bV)$ and $\lambda_{\max}(\bV)$ the smallest and largest (real) eigenvalue of $\bV$. Denote by $\boldsymbol{0}_{d}$ (resp.~$\boldsymbol{1}_{d}$) the $d$-dimensional all-zero (resp.~one) vector, and by $\bI_d$ the $d$-by-$d$ identity matrix.

Denote by $\sigma(Z_1,\ldots,Z_t)$ the sigma-algebra generated by   random variables $Z_1,\ldots,Z_{t}$, and 
 by  $I(A)$ the indicator function of an event $A$.
Denote by $\textup{Unif}(0,1)$ and  $\textup{Unif}(\sqrt{d}\cS^{d-1})$ the uniform distribution on the interval $(0,1)$ and on the sphere with radius $\sqrt{d}$ in $\bR^d$, respectively. 
Denote by $N_d(\mvmu, \bV)$ the $d$-dimensional normal distribution with the mean vector $\mvmu$ and the covariance matrix $\bV$; the subscript $d$ is omitted if $d=1$. For a random vector $\mvZ$, denote by $\Cov(\mvZ)$ its covariance matrix.

\section{Problem Formulation and Tr-LinUCB Algorithm} \label{sec:prob_formulation}

As discussed in the introduction, we consider $2 \leq K < \infty$ arms, and assume that the sequence of contexts $\{\mvX_t:t \geq 1\}$ are i.i.d.~$\bR^d$-random vectors, which may or may not contain an intercept. Recall that at each time $t \in \bN_{+}$, one observes the context $\mvX_t$, and the potential outcome, $Y_t^{(k)}$, for arm $k \in [K]$ is given by equation \eqref{def:linear_response}. If arm $k$ is selected at time $t$,  the realized reward $Y_t$ is $Y_t^{(k)}$. For simplicity, we assume that $(\mvX_t; \epsilon_t^{(k)}, k \in [K])$ for $ t \in  \bN_{+}$  are independent and identically distributed as a generic random vector  $(\mvX; \epsilon^{(k)}, k \in [K])$. Thus a  problem instance is determined by arm parameters $\{\mvtheta_{k}: k \in [K]\}$, and the  distribution of this generic random vector.


We assume that the time horizon $T \geq \max\{d,16\}$ is known, and then  an admissible rule is described by a sequence of measurable functions $\pi_t: (\bR^{d}*[K]*\bR)^{t-1}*\bR^{d}*\bR \to [K]$ for $t \in [T]$, where $\pi_t$ selects an arm based on the observations up to time $t-1$ and the current context $\mvX_t$, maybe  randomly with the help of a $\textup{Unif}(0,1)$ random variables $\xi_t$, that is,
\begin{align}\label{eq:admissible_rule} 
    A_t = \pi_t(\{\mvX_s,A_s,Y_s: s < t\},\; \mvX_t,\; \xi_t), \qquad
     Y_t =  Y_t^{(A_t)},\quad \text{ for } t \in [T],
\end{align}
where $\{\xi_t: t \in \bN_{+}\}$ are  i.i.d., independent from all  potential observations $\{\mvX_t, Y_t^{(k)}: k \in [K], t \in \bN_{+}\}$. 
Let $\cF_0 = \sigma(0)$; for each $t \in [T]$,  denote by $\cF_t :=  \sigma(\mvX_s,A_s,Y_s: s \in [t])$ the available information up to time $t$, and by $\cF_{t+} := \sigma(\cF_{t}, \mvX_{t+1}, \xi_{t+1})$ the information set during the decision making at time $t+1$. Then $A_t \in \cF_{(t-1)+}$ for each $t \in [T]$.



We evaluate the performance of an admissible rule in \eqref{eq:admissible_rule} in terms of its cumulative regret $R_T$, i.e.,
\begin{align} \label{def:regret}
    R_T(\{\pi_t:t \in [T]\}) := \sum_{t \in [T]} \Exp[\hat{r}_t], \quad \text{ where }\;\; \hat{r}_t := \max_{k \in [K]} (\mvtheta_k'\mvX_t) - \mvtheta_{A_t}'\mvX_t.
\end{align}
In particular, $\hat{r}_t$ may be viewed as the regret, averaged over the observation noises, at time $t$ given the context $\mvX_{t}$ and action $A_t$. If the rule $\{\pi_t\}$ is understood from its context, we omit the argument and simply write $R_T$.

Throughout the paper, we  assume that the arm parameters are bounded in length,  that the observation noises are subgaussian, and that the length of contexts are almost surely bounded, where the upper bound is allowed to increase with the dimension $d$.  Specifically,\\

\noindent \mylabel{assumption_parameter_noise}{(C.I)}  There exist absolute positive constants  $m_{\theta}$,  $m_{R}$, $m_{X}$, $\sigma^2$ such that for each  $k \in [K]$,    $\|\mvtheta_k\| \leq m_{\theta}$, $\Exp[|\mvtheta_k' X|] \leq m_{R}$, $\|\mvX\| \leq \sqrt{d} m_{X}$,  
$\Exp[e^{\tau \epsilon^{(k)}} \vert \mvX] \leq e^{\tau^2 \sigma^2/2}$ for  $\tau \in \bR$,  almost surely.\\

It is common in the literature to assume
that $\lambda_{\min}(\Exp[\mvX\mvX']) = \Omega(1)$; see \ref{cond:posdef} ahead. Since $\lambda_{\min}(\Exp[\mvX\mvX']) \leq d^{-1} \Exp[\|\mvX\|^2]$, it implies that the upper bound on $\|\mvX\|$ must be $\Omega(\sqrt{d})$. Note that \citet[Assumption 1]{bastani2020online} assumes  the $\ell_1$ norm of $\mvtheta_k$ and $\ell_{\infty}$ norm of $\mvX$ bounded by $m_{\theta}$ and $m_{X}$,  respectively, which are \textit{stronger} than the first three conditions in 
\ref{assumption_parameter_noise}.


\subsection{The proposed Tr-LinUCB Algorithm}

As discussed in the introduction, the exploration of the popular LinUCB algorithm \citep{li2010contextual,abbasi2011improved,lattimore2020bandit,hamidi2021better,wu2020stochastic} is excessive, which leads to its suboptimal performance. We propose to stop the LinUCB algorithm early, and perform pure exploitation afterwards; we call the proposed algorithm ``Tr-LinUCB", where ``Tr" is short for ``Truncated".

The Tr-LinUCB algorithm assumes that the constants $m_{\theta}$ and $\sigma^2$ in \ref{assumption_parameter_noise} known, and requires user provided parameters $\lambda > 0$ and  $S \leq T$, where $\lambda$ is used in estimating the arm parameters by  ridge regression, and $S$ denotes the truncation time of LinUCB. Specifically, let $\bV_0^{(k)} = \lambda \bI_{d}$ and $\mvU_0^{(k)} = \boldsymbol{0}_{d}$ for $k \in [K]$. At each time $t \in [T]$, it involves two steps.

\begin{enumerate}[wide]  
    \item (Arm selection) If $t \leq S$,  we follow the  LinUCB algorithm, by selecting the arm that maximizes  upper confidence bounds for potential rewards; otherwise, we select an arm greedily. Specifically,  $A_t =\arg\max_{k \in [K]}  \text{UCB}_t(k) I(t \leq S) + ((\hat \mvtheta_{t-1}^{(k)})' \mvX_t) I(t > S)$, where 
  \begin{align}\label{eq:arm_selection}
    \begin{split}
&\text{UCB}_t(k):= (\hat \mvtheta_{t-1}^{(k)})' \mvX_t + \sqrt{\beta_{t-1}^{(k)}} \|\mvX_t\|_{(\bV_{t-1}^{(k)})^{-1}},  \qquad 
\hat \mvtheta_{t-1}^{(k)} = (\bV_{t-1}^{(k)})^{-1} \mvU_{t-1}^{(k)},\quad \text{and}\\
& \sqrt{\beta_{t-1}^{(k)}} = m_{\theta} \sqrt{\lambda} + \sigma \sqrt{2\log(T) + \log\left({\text{det}(\bV_{t-1}^{(k)})}/{\lambda^d} \right)}.
    \end{split}
    \end{align}
The ties in the ``argmax" are broken either according to a fixed rule or at random.
    
    \item (Update estimates) We update the associated quantities using the current context and reward for the selected arm: let $(\bV_{t}^{(k)}, \mvU_{t}^{(k)} ) = (\bV_{t-1}^{(k)}, \mvU_{t-1}^{(k)} )$ for each $k \neq A_t$, and
    \begin{align}\label{eq:update_para}
        \bV_{t}^{(A_t)} = \bV_{t-1}^{(A_t)} + \mvX_t\mvX_t',\qquad
        \mvU_{t}^{(A_t)} = \mvU_{t-1}^{(A_t)} + \mvX_t Y_t.
    \end{align}
\end{enumerate}

If we set $S = T$, the Tr-LinUCB algorithm reduces to LinUCB. Here, $\hat \mvtheta_{t-1}^{(k)}$ is the ridge regression estimator for $\mvtheta_{k}$ based on data in those rounds, up to time $t-1$, for which the $k$-th arm is selected, i.e., $\{(\mvX_s, Y_s): 1 \leq s < t  \text{ and } A_s = k\}$. The next lemma explains the choice of $\{\beta_t^{(k)}\}$, leading to upper confidence bounds for the potential rewards, whose proof is essentially due to \citet{abbasi2011improved} and can be found in Appendix \ref{proof:lemma:LinUCB}. We note that it holds for all $t \in [T]$, \textit{beyond the time of truncation}, $S$.

\begin{lemma}\label{lemma:LinUCB}
Assume the condition \ref{assumption_parameter_noise} holds. With probability at least $1-K/T$, $
    \|\hat \mvtheta_{t}^{(k)} - \mvtheta_{k}\|_{\bV_{t}^{(k)}} \leq \sqrt{\beta_t^{(k)}}$     for all $t \in [T]$ and $k \in [K]$.
\end{lemma}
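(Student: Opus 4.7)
The plan is to adapt the standard self-normalized ridge-regression confidence bound from \citet{abbasi2011improved} to the per-arm filtration that arises here. Fix an arm $k \in [K]$. Let $\tau_{s,k} := I(A_s = k)$, which is $\cF_{(s-1)+}$-measurable. Writing $Y_s = \mvtheta_k' \mvX_s + \epsilon_s^{(k)}$ whenever $A_s = k$, the update in \eqref{eq:update_para} gives
\begin{equation*}
\mvU_t^{(k)} = \sum_{s \leq t} \tau_{s,k}\, \mvX_s \mvX_s'\, \mvtheta_k + \sum_{s \leq t} \tau_{s,k}\, \mvX_s \epsilon_s^{(k)}
= (\bV_t^{(k)} - \lambda \bI_d)\mvtheta_k + \mvS_t^{(k)},
\end{equation*}
where $\mvS_t^{(k)} := \sum_{s \leq t} \tau_{s,k}\, \mvX_s \epsilon_s^{(k)}$. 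Hence $\hat\mvtheta_t^{(k)} - \mvtheta_k = (\bV_t^{(k)})^{-1}\mvS_t^{(k)} - \lambda (\bV_t^{(k)})^{-1}\mvtheta_k$, and by the triangle inequality for $\|\cdot\|_{\bV_t^{(k)}}$,
\begin{equation*}
\|\hat\mvtheta_t^{(k)} - \mvtheta_k\|_{\bV_t^{(k)}} \leq \|\mvS_t^{(k)}\|_{(\bV_t^{(k)})^{-1}} + \lambda \|\mvtheta_k\|_{(\bV_t^{(k)})^{-1}} \leq \|\mvS_t^{(k)}\|_{(\bV_t^{(k)})^{-1}} + \sqrt{\lambda}\,\|\mvtheta_k\|,
\end{equation*}
since $\bV_t^{(k)} \succeq \lambda \bI_d$ implies $(\bV_t^{(k)})^{-1} \preceq \lambda^{-1}\bI_d$. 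By \ref{assumption_parameter_noise}, the second term is at most $m_\theta \sqrt{\lambda}$, matching the first summand in $\sqrt{\beta_t^{(k)}}$.

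Next I would control $\|\mvS_t^{(k)}\|_{(\bV_t^{(k)})^{-1}}$ by the self-normalized martingale inequality. Introduce the filtration $\cG_s := \cF_{s+}$, so that $\tau_{s,k}\mvX_s$ is $\cG_{s-1}$-measurable and $\epsilon_s^{(k)}$ is conditionally $\sigma$-subgaussian given $\cG_{s-1}$ (by \ref{assumption_parameter_noise}, since the MGF bound holds conditional on $\mvX_s$, and $A_s, \xi_s$ are independent of $\epsilon_s^{(k)}$ given $\mvX_s$ through the potential-outcome product structure). Consequently $\tau_{s,k}\epsilon_s^{(k)}$ is also conditionally $\sigma$-subgaussian. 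Applying Theorem 1 of \citet{abbasi2011improved} with regularizer $\lambda \bI_d$ to the martingale $\mvS_t^{(k)}$ with the per-arm Gram matrix $\bV_t^{(k)} = \lambda \bI_d + \sum_{s \leq t}\tau_{s,k}\mvX_s\mvX_s'$ yields: for every $\delta \in (0,1)$, with probability at least $1-\delta$, simultaneously for all $t \geq 0$,
\begin{equation*}
\|\mvS_t^{(k)}\|_{(\bV_t^{(k)})^{-1}} \leq \sigma \sqrt{2 \log(1/\delta) + \log\bigl(\det(\bV_t^{(k)})/\lambda^d\bigr)}.
\end{equation*}

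Choosing $\delta = 1/T$ matches the $2\log(T)$ term inside $\sqrt{\beta_t^{(k)}}$. Combining with the ridge bias bound gives $\|\hat\mvtheta_t^{(k)} - \mvtheta_k\|_{\bV_t^{(k)}} \leq \sqrt{\beta_t^{(k)}}$ for all $t \in [T]$ with probability $\geq 1 - 1/T$. A union bound over $k \in [K]$ concludes the proof with failure probability at most $K/T$.

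The only subtle point, and what I would be most careful about, is verifying the martingale/subgaussian conditions in the right filtration: $\mvX_s$ and $A_s$ must both be $\cG_{s-1}$-measurable, while the conditional MGF bound on $\epsilon_s^{(k)}$ from \ref{assumption_parameter_noise} is stated only given $\mvX_s$. This is why one takes $\cG_s = \cF_{s+}$ and uses the assumed mutual independence of $\{\xi_t\}$ and the potential outcomes together with the noise MGF bound; everything else is a routine application of a well-known inequality.
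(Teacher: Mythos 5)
Your proposal is correct and follows essentially the same route as the paper: define the per-arm masked covariates and noises, work in the filtration $\cF_{(s-1)+}$ (the paper's $\cH_{t-1}$) so that $I(A_s=k)\mvX_s$ is predictable and $\epsilon_s^{(k)}$ remains conditionally $\sigma$-subgaussian, invoke the self-normalized confidence bound of \citet{abbasi2011improved}, and union bound over $k$. The only cosmetic difference is that you re-derive their confidence-ellipsoid result from the self-normalized martingale inequality plus the ridge bias term, whereas the paper cites that theorem directly.
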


By the Cauchy–Schwarz inequality, with probability at least $1-K/T$, $\text{UCB}_t(k)$ is an upper bound for $\mvtheta_{k}'\mvX_t$ for all $t \in [T]$ and $k \in [K]$. 
Due to \ref{assumption_parameter_noise} and \citep[Section 20.2]{lattimore2020bandit}, we may use an upper bound $\tilde{\beta_t}$ in place of $\beta_t^{(k)}$, where 
\begin{align}
       \label{eq:uuper_beta_t}
       \sqrt{\tilde{\beta}_t} = \sqrt{\lambda} m_{\theta} + \sigma \sqrt{2\log(T) + d \log\left(1+{t m_X^2}/{\lambda}\right)},\quad \text{ for } t \in[T].
\end{align}
Using  $\beta_t^{(k)}, k \in [K]$  has the advantage of not requiring the knowledge of $m_X$ in practice, while $\tilde{\beta_t}$ is deterministic and does not depend on $k \in [K]$, and will be used in our analysis.


\section{Regret analysis for Tr-LinUCB}\label{sec:opt_Tr-LinUCB}
For regret analysis, we focus on the  $K = 2$  case for simplicity. The upper bound part extends to the $K >2$ case in a straightforward way, while the optimal dependence on $K$ requires new ideas and further investigation.


\subsection{Assumptions}\label{sect:assumptions}
In this subsection, we collect assumptions and their discussions. For each $k \in [2]$ and $h \geq 0$,  define $\cU_{h}^{(k)} := \{x \in \bR^d: \mvtheta_k'x > \max_{j \neq k} \mvtheta_j'x + h\}$ to be the set of context vectors for which the potential reward for the $k$-th arm is better than for the other arm by at least $h$.   Let  $\text{sgn}(\tau) = I(\tau >0) - I(\tau<0)$ for $\tau \in \bR$ be  the sign function. 

Assume that for some absolute positive constants $L_0, L_1> 1$ and $\ell_0, \ell_1 < 1$,\\

\noindent \mylabel{cond:margin}{(C.II)}  $\Pro\left( |(\mvtheta_1 - \mvtheta_2)' \mvX| \leq \tau\right) \leq L_0 \tau$ for all $\tau > 0$.\\

\noindent \mylabel{cond:posdef}{(C.III)} $\lambda_{\min}\left( \Exp\left[ \mvX \mvX'I\left(\mvX \in \cU_{{\ell_0}}^{(k)} \right)\right]\right) \geq \ell_0^2$ for $k = 1,2$.\\

\noindent \mylabel{cond:continuity}{(C.IV)}   $\Pro(|\mvu'\mvX| \leq \ell_1) \leq 1/4$ for all $\mvu \in \cS^{d-1}$.\\

\noindent \mylabel{cond:unit_sphere}{(C.V)}  
$\|\mvtheta_1- \mvtheta_2\| \geq L_1^{-1}$ and 
for any $\mvv \in \cS^{d-1}$, $\Exp[|\mvu_{*}'\mvX| I(\text{sgn}(\mvu_{*}'\mvX) \neq \text{sgn}(\mvv'\mvX))] \leq L_1 \|\mvu_{*}-\mvv\|^2$, where $\mvu_{*} = (\mvtheta_1 - \mvtheta_2)/\|\mvtheta_1 - \mvtheta_2\|$.\\



The first two conditions are standard in the literature; see, e.g., \citet{goldenshluger2013linear,bastani2020online, bastani2021mostly}, and the discussions therein. In particular, the condition \ref{cond:margin} is known as the ``margin condition", requiring that the probability of $\mvX$ falling within $\tau$ distance to the boundary $\{x \in \bR^d: \mvtheta_1'x = \mvtheta_2'x\}$ is upper bounded by $L_0 \tau$; note that since we can always increase $L_0$,   \ref{cond:margin} is in force only for small $\tau > 0$. The condition \ref{cond:posdef} is known as the ``positive-definiteness condition", which requires roughly that each arm is optimal by at least $\ell_0$ with a positive probability, and that conditional on this event, the context $\mvX$ spans $\bR^d$; note that we use $\ell_0$ on both sides of \ref{cond:posdef}, which is without loss of generality, since the left (resp.~right) hand side increases (resp. decreases) as $\ell_0$ becomes smaller.

The condition \ref{cond:continuity} requires that the projection of $\mvX$ onto any direction is not concentrated about zero. We refer to it as ``absolute continuity condition", as justified by the following lemma. Specifically, it holds with some constant $\ell_1$, depending on $d$, if the context vector $\mvX$ is absolutely continuous with respect to the Lebesgue measure, after maybe removing the intercept. If  its Lebesgue density is log-concave, then $\ell_1$ is dimension free, i.e., independent of $d$. Note that a density $p$ is log-concave, if $\log(p)$ is a concave function. 
In the following lemma, if the context $\mvX = (1,({\mvX}^{(-1)})')'$ has an intercept, let $\tilde{X} = {\mvX}^{(-1)}$ and $\tilde{d} = d-1$; otherwise, 
let  $\tilde{X} = {\mvX}$ and $\tilde{d} = d$. 


\begin{lemma}\label{lemma:continuity}
Let $C > 0$ be some constant, and assume that $\tilde{d} \geq 1$ and that $\tilde{X}$ has a density $p_{\tilde{X}}$ with respect to the $\tilde{d}$-dimensional Lebesgue measure.
\begin{enumerate}[label=(\roman*)]
    \item Assume that the condition \ref{assumption_parameter_noise} holds and that $d$ is fixed. If  $p_{\tilde{\mvX}}$ is upper bounded by $C$, then \ref{cond:continuity} holds for some constant $\ell_1$ that depends only on $C, m_X, d$.
    \item If $p_{\tilde{\mvX}}$ is log-concave,  $\|\Exp[\tilde{\mvX}]\| \leq C$, and the eigenvalues of $\Cov(\tilde{\mvX})$ are between $[C^{-1},C]$, then \ref{cond:continuity} holds for some constant $\ell_1$ that depends only on $C$.
\end{enumerate}
\end{lemma}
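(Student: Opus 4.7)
My plan is to exploit a common split: for each $\mvu \in \cS^{d-1}$, write $\mvu = (u_1, \tilde{\mvu})$ with $u_1$ the coefficient on the intercept coordinate of $\mvX$ (set $u_1 = 0$ and $\tilde{\mvu} = \mvu$ if there is no intercept), so that $\mvu'\mvX = u_1 + \tilde{\mvu}'\tilde{X}$ and $u_1^2 + \rho^2 = 1$ for $\rho := \|\tilde{\mvu}\|$. The argument then proceeds by a dichotomy in $\rho$, with a threshold $\rho_0 \in (0, 1/2)$ to be chosen: when $\rho > \rho_0$ the randomness in $\tilde{\mvu}'\tilde{X}$ carries the anti-concentration, and when $\rho \leq \rho_0$ the scalar $u_1$ keeps $|\mvu'\mvX|$ bounded below. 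The small-$\rho$ regime is vacuous when there is no intercept, since then $\rho = 1$.

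For $\rho > \rho_0$ I aim for a density bound of the form $\sup_t f_{\tilde{\mvu}'\tilde{X}}(t) \leq B/\rho$, which yields $\Pro(|\mvu'\mvX| \leq \ell_1) = \Pro(|u_1 + \tilde{\mvu}'\tilde{X}| \leq \ell_1) \leq 2\ell_1 B/\rho_0$, and choosing $\ell_1 \leq \rho_0/(8B)$ makes this $\leq 1/4$. In part (i), the density of $\mve'\tilde{X}$ along any unit $\mve$ is the integral of $p_{\tilde{X}} \leq C$ over the corresponding hyperplane, restricted to the bounded support $\{\|\cdot\| \leq \sqrt{d}\, m_X\}$ from \ref{assumption_parameter_noise}, hence at most $C$ times the $(\tilde{d}-1)$-volume of a ball of radius $\sqrt{d}\, m_X$; pulling out the scale $\rho$ gives $B = B(C, m_X, d)$. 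In part (ii), $\tilde{\mvu}'\tilde{X}$ is one-dimensional log-concave by Pr\'ekopa with $\Var(\tilde{\mvu}'\tilde{X}) = \tilde{\mvu}'\Cov(\tilde{X})\tilde{\mvu} \geq \rho^2/C$, and the sharp sup-norm bound for one-dimensional log-concave densities, $\sup_t f(t) \leq c_0/\sqrt{\Var}$ for an absolute constant $c_0$ (Fradelizi 1997; see also Saumard-Wellner 2014), gives $B = c_0 \sqrt{C}$, depending only on $C$.

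For $\rho \leq \rho_0$ I exploit $|u_1| = \sqrt{1-\rho^2} \approx 1$ together with control on $|\tilde{\mvu}'\tilde{X}|$, using $|\mvu'\mvX| \geq |u_1| - |\tilde{\mvu}'\tilde{X}|$. In (i), \ref{assumption_parameter_noise} gives the deterministic bound $|\tilde{\mvu}'\tilde{X}| \leq \rho_0 \sqrt{d}\, m_X$, so taking $\rho_0 \lesssim 1/\sqrt{d}$ yields $|\mvu'\mvX| \geq 1/4$ almost surely, and the small-ball probability is zero. In (ii), Chebyshev's inequality gives $\Pro(|\tilde{\mvu}'(\tilde{X} - \Exp\tilde{X})| > 1/4) \leq 16\Var(\tilde{\mvu}'\tilde{X}) \leq 16 C \rho_0^2$; combined with $|\tilde{\mvu}'\Exp\tilde{X}| \leq \rho_0 C$, on the complement $|\mvu'\mvX| \geq \sqrt{1-\rho_0^2} - \rho_0 C - 1/4 \geq 1/4$ provided $\rho_0 = \Theta(1/\sqrt{C})$, and the same $\rho_0$ bounds the Chebyshev tail by $1/4$. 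Setting $\ell_1 := \min(1/4, \rho_0/(8B))$ then finishes both parts with the stated dependencies.

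The main obstacle is the dimension-free constant in (ii): all four ingredients -- the intercept split, Pr\'ekopa's projection to a 1-d log-concave, the sharp $c_0/\sigma$ sup-norm bound, and the Chebyshev estimate in the small-$\rho$ regime -- must be arranged so that no factor of $d$ or $\tilde{d}$ leaks in, which is why the $\rho$-dichotomy is essential. Part (i), in contrast, is essentially routine once \ref{assumption_parameter_noise} is invoked, at the expected cost of a $d$-dependent prefactor coming from the volume of a $(\tilde{d}-1)$-ball.
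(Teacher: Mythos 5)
Your proposal is correct and follows essentially the same route as the paper's proof: the same dichotomy on the size $\rho=\|\tilde{\mvu}\|$ of the non-intercept part of $\mvu$, a one-dimensional projection density bound in the large-$\rho$ regime (volume of a $(\tilde d-1)$-ball for (i), the log-concave marginal sup-density bound for (ii)), and intercept domination in the small-$\rho$ regime. The only substantive deviation is in the small-$\rho$ case of (ii), where the paper integrates the exponential tail $p_{\mvu}(\tau)\leq \tilde C e^{-|\tau|/\tilde C}$ of the log-concave marginal while you use Chebyshev with $\Var(\tilde{\mvu}'\tilde{X})\leq C\rho_0^2$ -- a harmless, slightly more elementary substitution (your $\rho_0=\Theta(1/\sqrt{C})$ should really be $\Theta(\min\{1/\sqrt{C},1/C\})$ to also absorb the mean shift $\rho_0 C$, but this only changes how $\ell_1$ depends on $C$).
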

\begin{proof}
See Appendix \ref{proof:lemma:continuity}.
\end{proof}

When the context $\mvX$ has more  discrete components than the intercept, we require a generalization of the condition \ref{cond:continuity}; see 
 Subsection \ref{subsec:discrete}.
We choose to first focus on \ref{cond:continuity} in order to streamline our proofs. We refer readers to \citet[Chapter 10]{artstein2015asymptotic} for more information about log-concave densities. For a random vector $\mvZ$ with a log-concave density, $\mvZ$ is said to be \textit{isotropic} if $\Exp[\mvZ] = \boldsymbol{0}_{d}$ and $ \Cov(\mvZ) = \bI_d$; clearly, if $\tilde{X}$ has an isotropic log-concave density, then the part (ii) applies. More concrete examples are when  
components of $\tilde{X}$ are independent, and each has a log-concave density with mean $0$ and variance between $[C^{-1},C]$ (e.g., the uniform distribution on $[-1,1]$), or the uniform distribution on the Euclidean ball $\{\mvx \in \bR^{d}: \|\mvx\| \leq \sqrt{d}\}$.

The condition \ref{cond:unit_sphere}, together with its lower bound version, is the key to characterize the dependence of the optimal regret on the dimension $d$  for families of problem instances. We discuss its role in detail in Subsection \ref{subsec:optimal_d}, and here provide examples for which it holds.

\begin{lemma}\label{lemma:logconcave_example}
Let $C > 0$ be some constant. Assume $\mvX$ has a log-concave density  on $\bR^d$ with $\Exp[\mvX] = \boldsymbol{0}_{d}$ and the eigenvalues of $\Cov(\mvX)$  between $[C^{-1},C]$. Then there exists a constant $L > 0$, that depends only on $C$, such that  for any $\mvu, \mvv \in \cS^{d-1}$,
$$L^{-1} \|\mvu-\mvv\|^2 \;\leq\; \Exp[|\mvu'\mvX| I(\textup{sgn}(\mvu'\mvX) \neq \textup{sgn}(\mvv'\mvX))] \;\leq\; L \|\mvu-\mvv\|^2.
$$
The upper bound part continues to hold if $\|\Exp[\mvX]\| \leq C$, without requiring $\mvX$ centered.
\end{lemma}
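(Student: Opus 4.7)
The plan is to reduce to a two-dimensional calculation exploiting a key orthogonality. By Pr\'ekopa's theorem, the joint density of $(U,V) := (\mvu'\mvX,\mvv'\mvX)$ is log-concave on $\bR^2$. The degenerate cases ($\mvu = \pm\mvv$ and, more generally, angle $\theta \in [\pi/2,\pi]$ between $\mvu,\mvv$) reduce to a bound on $\Exp[|U|]$; the lower-bound side uses the reverse Khintchine-type inequality for centered log-concave distributions. The substantive regime is $\theta \in (0,\pi/2)$, so $\rho := \|\mvu-\mvv\| = 2\sin(\theta/2)$ is small.

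I introduce $A := (U+V)/2 = \mvp'\mvX$ and $B := (V-U)/2 = \mvw'\mvX/2$, where $\mvp := (\mvu+\mvv)/2$ and $\mvw := \mvv-\mvu$. Because $\|\mvu\| = \|\mvv\| = 1$, the vectors $\mvp$ and $\mvw$ are Euclidean-orthogonal, which decouples the small direction $\mvw$ (of length $\rho$) from the sum direction $\mvp$. In these coordinates $UV < 0 \iff |A| < |B|$ and $|U| = |A-B|$. Rescaling $\tilde B := 2B/\rho$ to a unit-norm projection of $\mvX$, the pair $(A,\tilde B)$ is a 2D log-concave random vector whose covariance has eigenvalues bounded in terms of $C$ alone.

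For the upper bound, $|U| = |A-B| \leq 2|B| = \rho|\tilde B|$ on $\{|A| < |B|\}$, so
\[
\Exp[|U| I(UV<0)] \leq \rho \int |\tilde b|\Big(\int_{|a|<\rho|\tilde b|/2} f_{A,\tilde B}(a,\tilde b)\,da\Big)d\tilde b \leq \rho^2 \int \tilde b^{\,2}\, \sup_a f_{A,\tilde B}(a,\tilde b)\,d\tilde b.
\]
Standard concentration for 2D log-concave densities with bounded mean and covariance (e.g., via isotropic rescaling and the estimates in \citep[Chapter 10]{artstein2015asymptotic}) gives $\sup_a f_{A,\tilde B}(a,\tilde b) \leq M e^{-c|\tilde b|}$ with $M, c$ depending only on $C$; the right-hand side is then $O(\rho^2)$. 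The hypothesis $\|\Exp[\mvX]\| \leq C$ (in lieu of centering) suffices here because it only shifts these decay constants by a bounded amount.

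For the lower bound (using $\Exp[\mvX] = 0$), the orthogonality $\mvp \perp \mvw$ together with the eigenvalue bound on $\Cov(\mvX)$ yields $\det\Cov(A,B) \gtrsim \rho^2$, so $f_{A,B}(a,b) \gtrsim 1/\rho$ on a bounded Mahalanobis neighborhood of the origin. Integrating $|U| = |A-B|$ over a subregion such as $\{a \in [-\epsilon\rho, 0],\ b \in [\epsilon\rho, 2\epsilon\rho]\}$, on which $|a|<|b|$, $|U| \geq \epsilon\rho$, and the area is of order $\rho^2$, yields $\Exp[|U|I(UV<0)] \gtrsim \rho^2$. The main technical step is the uniform exponential-decay bound on $\sup_a f_{A,\tilde B}$; using only the global bound $\sup f_{A,\tilde B} \leq M$ would yield a suboptimal $\rho^2\log^3(1/\rho)$ rate, so the tail decay of log-concave densities is essential.
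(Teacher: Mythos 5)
Your argument is correct in its essentials and reaches the right conclusion, but it takes a genuinely different route through the two-dimensional reduction. Both proofs rest on the same technical backbone — projecting $\mvX$ onto an orthonormal pair spanning the plane of $\mvu,\mvv$ and invoking the uniform upper bound with exponential tails (and, for the lower bound, the density lower bound near the origin) for the resulting 2D log-concave density, i.e.\ the content of the paper's Lemma \ref{app:logconcave_proj}. The difference is in the geometry: the paper orthogonalizes as $\mvv=\gamma\mvu+\sqrt{1-\gamma^2}\mvw$ with $\mvw\perp\mvu$, passes to polar coordinates, and observes that the integral factors into a radial part (a constant depending only on $C$, sandwiched between the near-origin lower bound and the exponential-tail upper bound) and an angular part that evaluates \emph{exactly} to $1-\mvu'\mvv=\|\mvu-\mvv\|^2/2$; this handles every angle in one uniform computation with no case analysis. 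Your sum/difference coordinates $A=\mvp'\mvX$, $B=(\mvv-\mvu)'\mvX/2$ with $\mvp\perp(\mvv-\mvu)$ instead isolate the short direction explicitly and replace the exact angular integral by region and tail estimates; this is a perfectly valid alternative and arguably makes the mechanism ($|U|\lesssim\rho|\tilde B|$ on the sign-disagreement event) more transparent, at the cost of a case split. The one thin spot is your dispatch of the lower bound for angles in $[\pi/2,\pi]$: the claim that this "reduces to a bound on $\Exp[|U|]$" is only literally true at $\theta=\pi$, since for intermediate obtuse angles the event $\{UV<0\}$ still has to be shown to carry constant mass. This is fixable — your main argument in fact survives for all $\theta$ bounded away from $\pi$ (the only degeneracy is $\|\mvp\|\to0$), and near $\theta=\pi$ one can write $\Exp[|U|I(UV<0)]=\Exp[|U|]-\Exp[|U|I(\mathrm{sgn}(U)\neq\mathrm{sgn}(-V))]$ and apply your own upper bound to the subtracted term — but as written that step is asserted rather than proved. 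The paper's polar computation avoids the issue entirely.
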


\begin{proof}
See Appendix \ref{proof:lemma:logconcave_example}.
\end{proof}

\begin{remark}
Relevant properties regarding log-concave densities  are in Appendix \ref{app:logconcave}. In short, if $\mvX$ has an isotropic log-concave density on $\bR^d$, so does $(\mvu'\mvX, \mvw'\mvX)$ on $\bR^2$, for any $\mvu,\mvw \in \cS^{d-1}$ with $\mvu'\mvw = 0$. Further, isotropic log-concave densities  in low dimensions   are  uniformly upper bounded, bounded away from zero near the origin, and decay exponentially fast away from  the origin, which lead to the dimension-free results in Lemma \ref{lemma:continuity} and \ref{lemma:logconcave_example}.
\end{remark}

\begin{remark}
In addition to log-concave densities, conditions \ref{cond:continuity} and \ref{cond:unit_sphere} hold with absolute constants for any $d \geq 3$ if $\mvX$ has the uniform distribution on the sphere in $\bR^{d}$ with center $\boldsymbol{0}_{d}$ and radius $\sqrt{d}$, i.e., $\textup{Unif}(\sqrt{d} \cS^{d-1})$, which is verified in the proof of Theorem \ref{theorem:lower_bound}. Further, if a distribution $F$ for the context $\mvX$ verifies  conditions \ref{assumption_parameter_noise}-\ref{cond:unit_sphere}, then so does any equivalent distribution $G$, such that the Radon–Nikodym derivative $dG/dF$ takes value in $[C^{-1},C]$, for some absolute constant $C > 0$.
\end{remark}
\subsection{Regret analysis without the condition \ref{cond:unit_sphere}}
We denote by $\mvTheta_0 := (m_{\theta}, m_{R}, m_{X}, \sigma^2,\ell_0,\ell_1,L_0)$ the collection of parameters appearing in  conditions \ref{assumption_parameter_noise}-\ref{cond:continuity}, and  define
\begin{align} \label{chi_order}
    \Upsilon_{d,T} = d\log(T) + d^2 \log(d\log(T)).
\end{align}

\begin{theorem}\label{theorem:regret_main_T}
Consider problem instances that satisfy  conditions \ref{assumption_parameter_noise}-\ref{cond:continuity}, and the Tr-LinUCB algorithm with a fixed $\lambda > 0$. There exist positive constants $C_0$ and $C_1$, depending only on $\mvTheta_0, \lambda$, such that 
if the truncation time $S \geq S_0$ with $S_0 = \lceil C_0 \Upsilon_{d,T} \rceil$, then
\begin{align*}
   R_T\;\; \leq \;\; C_1  S_0  + C_1(d \log(T) + d^{2} \log(S))\log(S/S_0) + C_1 d^{2}\log(2d)\log(T/S).
\end{align*}
\end{theorem}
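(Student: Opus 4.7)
The plan is to decompose $[1,T]$ into three pieces---the burn-in $[1,S_0]$, the LinUCB segment $(S_0,S]$, and the greedy segment $(S,T]$---each contributing one term to the bound. I would first set up two good events: $\mathcal{E}_1$ is the UCB event of Lemma~\ref{lemma:LinUCB} (probability $\geq 1-2/T$ since $K=2$), and $\mathcal{E}_2 = \bigcap_{k\in[2],\,t\geq S_0}\{\lambda_{\min}(\bV_t^{(k)})\geq c_0 t\}$ for some constant $c_0>0$ depending only on $\mvTheta_0,\lambda$. On $(\mathcal{E}_1\cap\mathcal{E}_2)^c$ the crude bound $\hat r_t\leq 2m_{\theta}m_X\sqrt{d}$ from \ref{assumption_parameter_noise} contributes $O(1)$, absorbable into $C_1 S_0$; the burn-in sum $\sum_{t\leq S_0}\Exp[\hat r_t]\leq 2m_R S_0$ is immediate from \ref{assumption_parameter_noise}. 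The main technical ingredient is to show $\Pro(\mathcal{E}_2^c)\leq 2/T$ for $S_0=\lceil C_0\Upsilon_{d,T}\rceil$: condition \ref{cond:posdef} gives $\Exp[\mvX\mvX' I(\mvX\in\cU_{\ell_0}^{(k)})]\succeq\ell_0^2\bI$, so the growth would follow from matrix Bernstein if arm $k$ were always pulled on $\{\mvX_t\in\cU_{\ell_0}^{(k)}\}$, but $\{A_t=k\}$ is endogenous. I would argue inductively: on the UCB event, if $\lambda_{\min}(\bV_s^{(k)})\geq c_0 s$ for $S_0\leq s<t$, then Lemma~\ref{lemma:LinUCB} together with \ref{cond:continuity} forces arm $k$ to be selected whenever $\mvX_t\in\cU_{\ell_0}^{(k)}$, outside a small-probability event quantified by the induction hypothesis, and a Freedman-type matrix-martingale inequality applied to the PSD increments $\mvX_t\mvX_t' I(A_t=k,\mvX_t\in\cU_{\ell_0}^{(k)})$ then upgrades the linear-in-$t$ expected growth to a high-probability lower bound. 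The threshold $\Upsilon_{d,T}=d\log T + d^2\log(d\log T)$ is calibrated exactly so that the matrix-concentration deviations of order $\sqrt{d^2\log(\cdot)/t}$ are dominated by the linear expectation.

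Next I would handle the LinUCB segment. On $\mathcal{E}_1\cap\mathcal{E}_2$, if $A_t$ is suboptimal then the UCB property and arm-selection rule give $\hat r_t\leq 2\max_{k\in[2]}\sqrt{\beta_t^{(k)}}\|\mvX_t\|_{(\bV_{t-1}^{(k)})^{-1}}$. To upgrade this pathwise width bound into an expected \emph{squared}-width bound I would invoke the elementary inequality $\Exp[|h|\,I(|h|\leq M)\mid\cF_{t-1},M]\leq L_0 M^2$ with $h=(\mvtheta_1-\mvtheta_2)'\mvX_t$ and $M$ the $\cF_{t-1}$-measurable sum of the two UCB widths, which is justified by \ref{cond:margin}. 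Using $\beta_t^{(k)}\leq\tilde\beta_t$, $\text{trace}(AB)\leq\|A\|_{\op}\text{trace}(B)$, $\text{trace}(\Exp[\mvX\mvX'])\leq d m_X^2$, and the eigenvalue lower bound on $\mathcal{E}_2$,
\begin{align*}
\Exp[\hat r_t\mid\cF_{t-1}]\;\leq\; C\,\tilde\beta_t\sum_{k\in[2]}\text{trace}\bigl((\bV_{t-1}^{(k)})^{-1}\Exp[\mvX\mvX']\bigr)\;\leq\;\frac{C'(d\log T + d^2\log t)}{t}.
\end{align*}
Summing over $t\in(S_0,S]$, together with $\sum t^{-1}\leq\log(S/S_0)$ and $\sum \log t/t\leq\log S\cdot\log(S/S_0)$, yields the middle term.

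Finally, in the exploitation segment a wrong pick forces $|(\mvtheta_1-\mvtheta_2)'\mvX_t|\leq (\|\hat\Delta^{(1)}_{t-1}\|+\|\hat\Delta^{(2)}_{t-1}\|)\sqrt{d}m_X$ by Cauchy--Schwarz and \ref{assumption_parameter_noise}, where $\hat\Delta^{(k)}_t:=\hat\mvtheta_t^{(k)}-\mvtheta_k$; the same margin argument gives $\Exp[\hat r_t\mid\cF_{t-1}]\leq C\,d\sum_k\|\hat\Delta^{(k)}_{t-1}\|^2$. I would then invoke a ridge-regression error bound $\|\hat\Delta^{(k)}_{t-1}\|^2\lesssim d\log(2d)/t$ valid on $\mathcal{E}_2$, obtained via a direction-covering/tail argument that avoids the extra $\log T$ factor of Lemma~\ref{lemma:LinUCB}, to get per-round regret of order $d^2\log(2d)/t$, which sums to the third term. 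One also has to check that both arms keep being pulled in this phase so the eigenvalue growth persists: once $\|\hat\Delta^{(k)}_{t-1}\|\leq\ell_0/(2\sqrt{d}m_X)$, the greedy rule selects arm $k$ correctly on $\{\mvX_t\in\cU_{\ell_0}^{(k)}\}$, and \ref{cond:posdef} again supplies the linear growth. The hard part of the whole argument is the linear-$\lambda_{\min}$ bound: the standard elliptical potential lemma cannot enforce $\Omega(t)$ growth for each arm individually, and a bespoke bootstrap coupling the UCB event, \ref{cond:posdef}, \ref{cond:continuity}, and matrix concentration is required, with $S_0=\lceil C_0\Upsilon_{d,T}\rceil$ being exactly the burn-in after which concentration dominates.
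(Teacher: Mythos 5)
Your three-phase decomposition, the two good events, and the treatment of the burn-in and of the two later segments all mirror the paper's actual proof, and the orders you extract per round ($O(1)$, $O((d\log T + d^2\log t)/t)$, $O(d^2\log(2d)/t)$) are the right ones. The genuine gap is in the key eigenvalue lemma. Your induction ``if $\lambda_{\min}(\bV_s^{(k)})\geq c_0 s$ for $S_0\leq s<t$, then arm $k$ is selected whenever $\mvX_t\in\cU_{\ell_0}^{(k)}$'' has no base case: before either arm has accumulated information, the bonus terms are enormous and arm selection is essentially arbitrary, so nothing guarantees that \emph{either} design matrix reaches $\Omega(S_0)$ by time $S_0$. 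The paper resolves this in two steps you do not supply. First, condition \ref{cond:continuity} (not \ref{cond:posdef}) is used to show that, \emph{whatever} the algorithm does, at least one arm is well conditioned by time $C(d+\log T)$: for any directions $\mvu,\mvv\in\cS^{d-1}$ one has $\mvu'\bV_t^{(1)}\mvu+\mvv'\bV_t^{(2)}\mvv\geq \ell_1^2\sum_{s\le t} I(|\mvu'\mvX_s|\geq\ell_1,\,|\mvv'\mvX_s|\geq\ell_1)$, and a uniform (VC/Talagrand) empirical-process bound over pairs of directions makes this $\Omega(t)$, so $\max_k\lambda_{\min}(\bV_t^{(k)})=\Omega(t)$. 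Second, a handoff step: once one arm is well estimated, its UCB is within $\ell_0/4$ of its true reward, so on $\{\mvX_t\in\cU_{\ell_0}^{(\bar k)}\}$ the \emph{other} arm must be pulled, and the matrix Chernoff bound under \ref{cond:posdef} then gives both arms $\Omega(T_0)$ information by time $2T_0$; only then does your induction (the paper's Step 3) run. You cite \ref{cond:continuity} inside the inductive step, where it plays no role; its entire purpose is this base case, and without it the argument does not start.

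Two smaller points. In the middle segment you apply the margin condition with threshold $M=2\sum_k\sqrt{\beta_t^{(k)}}\|\mvX_t\|_{(\bV_{t-1}^{(k)})^{-1}}$, but $M$ depends on $\mvX_t$ and is therefore not $\cF_{t-1}$-measurable, so $\Exp[|h|I(|h|\leq M)]\leq L_0 M^2$ does not follow directly from \ref{cond:margin}; the paper avoids this by replacing $M$ with the deterministic bound $\tilde\delta_0=C\sqrt{\log T+d\log t}\,(d/t)^{1/2}$ valid on the eigenvalue event. In the exploitation segment, a high-probability bound $\|\hat\Delta_{t-1}^{(k)}\|^2\lesssim d\log(2d)/t$ cannot simply be declared ``valid on $\cE_2$'' and plugged in; the paper instead combines the tail bound $\bP(\|\hat\Delta_{t-1}^{(k)}\|\geq C_2(d\log(2d)/t)^{1/2}\tau,\ \cE_{t-1}^{(k)})\leq 2e^{-\tau^2}$ with a peeling over margin shells $\{2n\delta_0<|(\mvtheta_1-\mvtheta_2)'\mvX_t|\leq 2(n+1)\delta_0\}$. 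Both of these are repairable with the paper's devices (and your Cauchy--Schwarz-plus-margin observation combined with the second-moment bound of Lemma \ref{lemma:beta_V} would in fact also close this segment), but the missing base case for the design-matrix growth is a real hole in the argument as proposed.
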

\begin{proof}
See Section \ref{proof:regret_main_Td}, where we also discuss the proof strategy. 
\end{proof}

In the following immediate corollary, we establish upper bounds on the cumulative regret corresponding to different choices of the truncation time $S$. 


\begin{corollary}\label{corollary:regret_main_T}
Consider the setup in Theorem \ref{theorem:regret_main_T}.
\begin{enumerate}
    \item[(i)] There exists  a positive constant $C_0$, depending only on $\mvTheta_0,   \lambda$, such that if $S = C \Upsilon_{d,T}$ for some  $C \geq C_0$, then $R_T\leq C_1 d^2\log(2d)\log(T)$, where the constant $C_1$ depends only on $\mvTheta_0,   \lambda$, and $C$.
    \item[(ii)] If $S = \Upsilon_{d,T} \log^{\kappa}(T)$ for  $\kappa > 0$, then $R_T\leq C_1 \tilde{\kappa}(d^2 \log(2d) \log(T) + d\log(T)\log\log(T))$, where $\tilde{\kappa} = \max\{\kappa,1\}$, and the constant $C_1$ depends only on $\mvTheta_0,   \lambda$. 
    \item[(iii)] If $S = T$, then $R_T \leq C_1d^2 \log^2(T)$,  where the constant $C_1$ depends only on $\mvTheta_0,   \lambda$.
\end{enumerate}
\end{corollary}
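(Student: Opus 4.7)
\textbf{Proof plan for Corollary \ref{corollary:regret_main_T}.}

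The plan is to derive all three bounds directly from Theorem \ref{theorem:regret_main_T} by substituting the corresponding choice of $S$ into
\[
R_T \;\leq\; C_1 S_0 \;+\; C_1\bigl(d\log T + d^2\log S\bigr)\log(S/S_0) \;+\; C_1 d^2\log(2d)\log(T/S),
\]
with $S_0 = \lceil C_0 \Upsilon_{d,T}\rceil$ and $\Upsilon_{d,T} = d\log T + d^2\log(d\log T)$. The preparatory observation is the crude bound
\[
\Upsilon_{d,T} \;\leq\; C\, d^2 \log(2d)\log T,
\]
valid for $T \geq 16$ and $d \geq 1$, because $d\log T \leq d^2\log(2d)\log T$ and $d^2 \log(d\log T) \leq d^2(\log d + \log\log T) \leq C\, d^2\log(2d)\log T$. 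Hence $S_0$ is always $O(d^2\log(2d)\log T)$, which will absorb the first term in all three cases.

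For part (i), choose $S = C\,\Upsilon_{d,T}$ with $C \geq C_0$. Then $S \asymp S_0$, so $\log(S/S_0) = O_C(1)$ and the middle term contributes $O(d\log T + d^2\log(d\log T))$, which is $O(d^2\log(2d)\log T)$ by the preparatory bound. The last term is at most $C_1 d^2\log(2d)\log T$, and the first is $O(S_0) = O(d^2\log(2d)\log T)$. Summing gives the claimed $O(d^2\log(2d)\log T)$ bound.

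For part (ii), take $S = \Upsilon_{d,T}\log^{\kappa}T$. Then $\log(S/S_0) = \kappa\log\log T + O(1) \leq C\tilde{\kappa}\log\log T$ with $\tilde{\kappa}=\max\{\kappa,1\}$, and $\log S = O(\log d + \log\log T)$. The first term is again $O(d^2\log(2d)\log T)$; the middle term becomes
\[
O\!\bigl(\tilde{\kappa}(d\log T + d^2\log d + d^2\log\log T)\log\log T\bigr),
\]
whose dominant contributions are $O(\tilde{\kappa}\, d\log T\log\log T)$ and $O(\tilde{\kappa}\, d^2\log(2d)\log T)$ after using $(\log\log T)^2 \leq \log T$ for $T\geq 16$. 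The third term is at most $C_1 d^2\log(2d)\log T$, so the assembled bound matches the statement. (If $S \geq T$ the third term is absent, in which case we may replace $S$ by $T$ in the argument without worsening constants.) For part (iii), $S = T$ zeros out the third term, $\log(S/S_0) \leq \log T$, and $\log S = \log T$, so the middle term is $O((d\log T + d^2\log T)\log T) = O(d^2\log^2 T)$, which dominates $S_0$ and yields the bound.

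The corollary is essentially a bookkeeping exercise in logarithms; the only nuisance is keeping track of the three regimes of $S$ relative to $S_0$ and $T$ and verifying that the lower-order log-log factors fit inside $\log(2d)\log T$, which is handled uniformly by the preparatory estimate on $\Upsilon_{d,T}$.
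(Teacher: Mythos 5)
Your derivation is correct and coincides with the paper's own (unwritten) argument: the corollary is presented as an immediate consequence of Theorem \ref{theorem:regret_main_T}, obtained by exactly this substitution of $S$ together with the preparatory estimate $\Upsilon_{d,T}=O(d^2\log(2d)\log T)$ and the inequality $(\log\log T)^2\leq \log T$ for $T\geq 16$. The one imprecision is in part (ii), where you assert $\log S = O(\log d+\log\log T)$ even though $\log S=\log\Upsilon_{d,T}+\kappa\log\log T$ carries a factor of $\kappa$, so the middle term is really of order $\tilde{\kappa}^2$ rather than $\tilde{\kappa}$ unless one restricts to bounded $\kappa$ or invokes $\kappa\log\log T\leq\log T$ (from $S\leq T$); this matches an elision already present in the paper's statement and does not affect the practically relevant regime.
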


As we shall see, the dependence on $d$ in the above corollary is not optimal, so we assume $d$ fixed for now, and in particular $\Upsilon_{d,T} = O(\log(T))$.  If we select $S = C\log(T)$ for a large enough constant $C$, the regret is of order $\log(T)$, which matches the optimal dependence on $T$; see   \citet[Theorem 2]{goldenshluger2013linear} and also Theorem \ref{theorem:lower_bound} ahead. In practice, the constant $C_0$ in part (i) above is unknown. However, part (ii) shows that the cost is only a  $\log(\log(T))$ multiplicative factor, if we choose $S$ to be of order $\log^{\kappa}(T)$ with $\kappa > 1$, larger than the optimal $\log(T)$ order. This suggests that we prefer ``overshooting" than ``undershooting" in deciding the truncation time $S$ in practice. 

Further, since the proposed Tr-LinUCB algorithm with $S=T$ reduces to  LinUCB, part (iii) establishes a $O(d^2 \log^2(T))$ upper bound for LinUCB, which generalizes \citet[Corollary 8.1]{hamidi2021better} in making the dependence on $d$ explicit. More importantly, we establish a matching lower bound for LinUCB in Section \ref{subsec:subopt_linUCB}, and thus explicitly show that LinUCB is \textit{sub-optimal in both $d$ and $T$}, and that the truncation is necessary.

Finally, we note that \citet{bastani2020online} establishes an $O(d^2\log^{3/2}(d)\log(T))$ upper bound for the OLS algorithm proposed by \citet{goldenshluger2013linear} under conditions 
\ref{cond:margin}, \ref{cond:posdef}, and a slightly stronger version of \ref{assumption_parameter_noise}.
In part (i) above, we establish a similar result for Tr-LinUCB, under the additional assumption \ref{cond:continuity}, which does not allow discrete components other than the intercept; we relax this condition in Subsection \ref{subsec:discrete}. As mentioned in the introduction, the main practical advantage of Tr-LinUCB over OLS is  its insensitivity to tuning parameters. 

\subsection{Optimal dependence on the dimension $d$}\label{subsec:optimal_d}

\begin{figure}[!tbp]
    \centering
    \includegraphics[width=\textwidth]{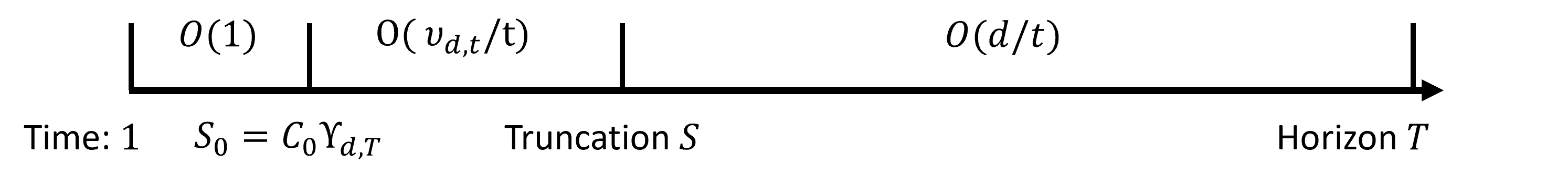}
    \caption{Above the time axis are orders of the expected regret at time $t$ within each stage, where $\upsilon_{d,t} = d\log(T) + d^2 \log(t)$ and $\Upsilon_{d,T}$ in \eqref{chi_order}, and below are important moments for the proposed Tr-LinUCB algorithm.}
    \label{fig:regret_analysis}
\end{figure}

Next, we show that under the additional condition \ref{cond:unit_sphere}, the Tr-LinUCB algorithm achieves the optimal dependence in both the dimension $d$ and horizon $T$.  We start with a discussion on the strategy for the regret analysis, and emphasize the role of  \ref{cond:unit_sphere}.


One of the key steps is to show that with a high probability, $\lambda_{\min}(\bV_t^{(k)})$ is $\Omega(t)$  for  each $k \in [2]$ and $t \geq S_0 := C_0\Upsilon_{d,T}$, where $C_0$ is an appropriate constant, which implies that
$$
|\text{UCB}_t(k) -  (\hat \mvtheta_{t-1}^{(k)})' \mvX_{t}| =O_{P}((\upsilon_{d,t}/t)^{1/2}), \qquad
\|\hat \mvtheta_t^{(k)} -\mvtheta_k\| = O_{P}((d/t)^{1/2}),
$$
where the former is  the bonus part in the upper confidence bound with  $\upsilon_{d,t} := d\log(T) + d^2 \log(t)$ (see \eqref{eq:arm_selection}), and
the latter  the estimation error. As depicted in Figure \ref{fig:regret_analysis}, the analysis  involves three periods. In the first stage, up to time $S_0$, due to the bonus part, the behavior of Tr-LinUCB is close to random guess.
In the second stage, i.e., from $S_0$ to  the truncation time $S$,  Tr-LinUCB chooses an action $A_t$ by maximizing $\text{UCB}_t(k)$ over $k \in [K]$. Since the bonus dominates the estimation error,  Tr-LinUCB suffers a $O((\upsilon_{d,t}/t)^{1/2})$ regret when $\mvX_{t}$ falls within $O((\upsilon_{d,t}/t)^{1/2})$ distance to the  boundary, which leads to an expected $O(\upsilon_{d,t}/t)$ regret at time $t$ under the ``margin" condition \ref{cond:margin}. 



The condition \ref{cond:unit_sphere} is used in the analysis for the third stage, i.e., after the truncation time $S$, and is the key to remove a $d\log(d)$-factor in the cumulative regret bound in Theorem \ref{theorem:regret_main_T}. Specifically, for some $t > S$, denote by $\hat{\mvDelta}_{t-1}:=\hat \mvtheta_{t-1}^{(1)}-\hat \mvtheta_{t-1}^{(2)}$ an estimator for $\mvDelta = \mvtheta_1 - \mvtheta_2$, and note that $\mvX_{t}$ is independent from $\cF_{t-1}$, and $\hat{\mvDelta}_{t-1} \in \cF_{t-1}$. In the proof of Theorem \ref{theorem:regret_main_T},
we establish an \textit{exponential} bound on the tail probability of $(\hat{\mvDelta}_{t-1}-\mvDelta)'\mvX_t$, \textit{conditional on} $\mvX_t$, using the pessimistic  $O(\sqrt{d})$ bound  in \ref{assumption_parameter_noise} for $\|\mvX_t\|$, i.e.,
$|(\hat{\mvDelta}_{t-1}-\mvDelta)'\mvX_t| \leq \sqrt{d} m_{X} \|\hat{\mvDelta}_{t-1}-\mvDelta\|$. 
Now, assume the condition \ref{cond:unit_sphere}  holds.
When the sign of $\hat{\mvDelta}_{t-1}'\mvX_{t}$ differs from that of $\mvDelta'\mvX_{t}$, an instant regret $|\mvDelta'\mvX_{t}|$ is incurred;  by conditioning on $\hat{\mvDelta}_{t-1}$, \ref{cond:unit_sphere} upper bounds the expected regret at time $t$  by, up to a multiplicative constant, the \textit{second moment} of  the estimation error $\|\hat{\mvDelta}_{t-1} - \mvDelta\|$.
Thus, exchanging the order of conditioning, i.e., from $\mvX_t$ to $\hat{\mvDelta}_{t-1}$, leads to the removal of a $d$-factor. The additional $\log(d)$-factor is due to the difference between the exponential and polynomial moment bounds.

Denote by $\mvTheta_1: = \mvTheta_0 \cup \{L_1\}$ the parameters appearing in conditions  \ref{assumption_parameter_noise}-\ref{cond:unit_sphere}.


\begin{theorem}\label{theorem:regret_main_d}
Consider problem instances for which conditions  \ref{assumption_parameter_noise}-\ref{cond:unit_sphere} hold, 
and the Tr-LinUCB algorithm with a fixed $\lambda > 0$.  There exist positive constants $C_0$ and $C_1$, depending only on $\mvTheta_1, \lambda$, such that 
if the truncation time $S \geq S_0$ with $S_0 = \lceil C_0 \Upsilon_{d,T} \rceil$, then 
$R_T \leq  C_1 d \log(T)\log(2S/S_0) +  C_1 d^{2}\log(S)\log(2S/S_0)$.
\end{theorem}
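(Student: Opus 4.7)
The plan is to reuse the three-stage decomposition from the proof of Theorem~\ref{theorem:regret_main_T} and to sharpen only the post-truncation contribution, replacing its $O(d^{2}\log(2d)\log(T/S))$ bound by $O(d\log(T/S))$ via the quadratic sign-crossing inequality in condition~\ref{cond:unit_sphere}. Throughout, I work on the high-probability event $\Omega_{t}:=\{\lambda_{\min}(\bV_{t-1}^{(k)})\geq c\,t,\ k\in[2]\}$ constructed in that proof, with $\Pro(\Omega_{t}^{c})=O(1/T)$. Stages one and two go through verbatim: stage one ($t\leq S_{0}$) contributes $O(S_{0})=O(d\log T+d^{2}\log(d\log T))$, absorbed by the target bound since $\log(2S/S_{0})\geq\log 2$; stage two ($S_{0}<t\leq S$) incurs $O((d\log T+d^{2}\log t)/t)$ per step under the UCB bonus, summing to $O((d\log(T)+d^{2}\log(S))\log(S/S_{0}))$, which matches the target.

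All novelty is in stage three ($t>S$), where Tr-LinUCB selects greedily. Let $\mvDelta:=\mvtheta_{1}-\mvtheta_{2}$, $\mvu_{*}:=\mvDelta/\|\mvDelta\|$, and $\hat{\mvDelta}_{t-1}:=\hat{\mvtheta}_{t-1}^{(1)}-\hat{\mvtheta}_{t-1}^{(2)}$. The instant regret $\hat{r}_{t}=|\mvDelta'\mvX_{t}|$ is incurred only when $\text{sgn}(\hat{\mvDelta}_{t-1}'\mvX_{t})\neq\text{sgn}(\mvDelta'\mvX_{t})$. On the event $\{\|\hat{\mvDelta}_{t-1}-\mvDelta\|<\|\mvDelta\|/2\}$ the unit vector $\mvv_{t-1}:=\hat{\mvDelta}_{t-1}/\|\hat{\mvDelta}_{t-1}\|\in\cS^{d-1}$ is well-defined, and since $\mvX_{t}$ is independent of $\cF_{t-1}$, applying \ref{cond:unit_sphere} with $\mvv=\mvv_{t-1}$ gives
\[
\Exp[\hat{r}_{t}\mid\cF_{t-1}]=\|\mvDelta\|\,\Exp\!\bigl[|\mvu_{*}'\mvX_{t}|\,I(\text{sgn}(\mvu_{*}'\mvX_{t})\neq\text{sgn}(\mvv_{t-1}'\mvX_{t}))\bigm|\cF_{t-1}\bigr]\leq L_{1}\|\mvDelta\|\,\|\mvu_{*}-\mvv_{t-1}\|^{2}.
\]
Combining the elementary normalization bound $\|\mvu_{*}-\mvv_{t-1}\|\leq 2\|\mvDelta-\hat{\mvDelta}_{t-1}\|/\|\mvDelta\|$ with $\|\mvDelta\|\geq L_{1}^{-1}$ yields $\Exp[\hat{r}_{t}\mid\cF_{t-1}]\leq 4L_{1}^{3}\|\mvDelta-\hat{\mvDelta}_{t-1}\|^{2}$. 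Thus the stage-three analysis reduces to the unconditional MSE bound $\Exp[I(\Omega_{t})\|\hat{\mvtheta}_{t-1}^{(k)}-\mvtheta_{k}\|^{2}]=O(d/t)$ per arm.

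This MSE bound follows by decomposing $\hat{\mvtheta}_{t-1}^{(k)}-\mvtheta_{k}=-\lambda(\bV_{t-1}^{(k)})^{-1}\mvtheta_{k}+(\bV_{t-1}^{(k)})^{-1}\mvM_{t-1}^{(k)}$ with $\mvM_{t-1}^{(k)}:=\sum_{s<t}\mvX_{s}\epsilon_{s}^{(k)}I(A_{s}=k)$. On $\Omega_{t}$, the bias term is $O(1/t^{2})$; for the noise term, $(\bV_{t-1}^{(k)})^{-2}\preceq(ct)^{-2}\bI_{d}$ on $\Omega_{t}$ and the martingale-difference structure of the noise (conditional mean-zero under sub-Gaussianity) kills cross-covariances, producing $\Exp[\|\mvM_{t-1}^{(k)}\|^{2}]\leq\sigma^{2}\Exp[\text{tr}(\bV_{t-1}^{(k)})]\leq Ctd$, hence $\Exp[I(\Omega_{t})\|(\bV_{t-1}^{(k)})^{-1}\mvM_{t-1}^{(k)}\|^{2}]=O(d/t)$. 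Off $\Omega_{t}$ and on the complement $\{\|\hat{\mvDelta}_{t-1}-\mvDelta\|\geq\|\mvDelta\|/2\}$ (which has probability $O(d/t)$ by Markov), $|\mvDelta'\mvX_{t}|$ has expectation at most $2m_{R}$ by \ref{assumption_parameter_noise} and is independent of these $\cF_{t-1}$-measurable events, so these contributions sum to $O(1)+O(d\log(T/S))$. Altogether $\sum_{t>S}\Exp[\hat{r}_{t}]=O(d\log(T/S))\leq(\log 2)^{-1}d\log(T)\log(2S/S_{0})$, which fits inside the claimed bound.

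The main obstacle is the adaptive dependence in the MSE step: $\bV_{t-1}^{(k)}$ is itself a random function of past noises, so the usual independent-design computation does not apply. Restricting to $\Omega_{t}$ supplies a deterministic operator bound $\|(\bV_{t-1}^{(k)})^{-1}\|\lesssim 1/t$ that can be pulled out of the expectation, while conditional mean-zero noise still forces cross-terms in $\Exp[\|\mvM_{t-1}^{(k)}\|^{2}]$ to vanish. This is exactly how the extra $d\log d$ factor that the self-normalized bound of Lemma~\ref{lemma:LinUCB} would introduce in the third stage is avoided.
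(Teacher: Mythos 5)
Your proposal is correct and follows essentially the same route as the paper: the same three-stage decomposition, the same application of condition \ref{cond:unit_sphere} to the normalized estimate $\hat{\mvDelta}_{t-1}/\|\hat{\mvDelta}_{t-1}\|$ combined with the normalization inequality of Lemma \ref{lemma:triangle}, and the same second-moment bound $O(d/t)$ on the estimation error via the martingale-difference structure (which the paper packages as the second claim of Lemma \ref{lemma:beta_V} rather than re-deriving inline). The only cosmetic difference is how the degenerate case is handled — you condition on $\{\|\hat{\mvDelta}_{t-1}-\mvDelta\|<\|\mvDelta\|/2\}$ and use Markov on its complement, while the paper only excises $\{\hat{\mvDelta}_{t-1}=\boldsymbol{0}_d\}$ — and both work.
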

\begin{proof}
See Section \ref{proof:regret_main_Td}. 
\end{proof}

As an immediately corollary, we improve the dependence on $d$ over Theorem \ref{theorem:regret_main_T}. For simplicity, we focus on the following low dimensional regime:
\begin{align}\label{def:regime}
    d \leq \log(T)/(\log\log(T)),
\end{align}
under which we are able to characterize the optimal regret.

\begin{corollary}\label{cor:regret_d}
Consider the setup in Theorem \ref{theorem:regret_main_d}, and assume \eqref{def:regime} holds. 
\begin{enumerate}
    \item[(i).] There exists a positive  constant $C_0$, depending only on $\mvTheta_1,   \lambda$, such that 
if $S = C d\log(T)$ for some  $C \geq C_0$, then $R_T\leq C_1 d\log(T)$, where the constant $C_1$ depends only on $\mvTheta_1,   \lambda$ and $C$.
    \item[(ii).] If $S = d \log^{\kappa}(T)$ for some $\kappa > 1$, then $R_T\leq C_1 \kappa^2  d\log(T)\log\log(T)$, where the constant $C_1$ depends only on $\mvTheta_1,   \lambda$.
\end{enumerate}
\end{corollary}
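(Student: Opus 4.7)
The plan is to deduce both parts directly from Theorem \ref{theorem:regret_main_d} by simplifying the cutoff $S_0 = \lceil C_0 \Upsilon_{d,T}\rceil$ and the factors $\log(2S/S_0)$ and $\log S$ under the regime $d \leq \log T/\log\log T$.

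First I would reduce $\Upsilon_{d,T}$. The delicate summand satisfies $d^{2}\log(d\log T) \leq d^{2}(\log d + \log\log T) \leq 2 d^{2}\log\log T$; the regime bound $d\log\log T \leq \log T$ upgrades this to $d^{2}\log\log T \leq d\log T$. Hence $\Upsilon_{d,T} \leq 3d\log T$, and $S_0 \leq c\, d\log T$ for a constant $c$ depending only on $C_0$. The same arithmetic tricks ($d^{2}\leq d\log T$, $d^{2}\log\log T \leq d\log T$, and $d^{2}(\log\log T)^{2} \leq d\log T\log\log T$) will power both parts.

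For part (i), I would enlarge the threshold $C_0$ if necessary so that $C_0 d\log T \geq S_0$; then for any $C \geq C_0$, $S = C d\log T$ exceeds $S_0$, and $2S/S_0$ is bounded in terms of $C$, so $\log(2S/S_0) = O_{C}(1)$. Combined with $\log S \leq \log C + 2\log\log T$, Theorem \ref{theorem:regret_main_d} and the reductions above give $R_T = O_{C}(d\log T)$.

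For part (ii), small $T$ is absorbed via $T\geq 16$ into the multiplicative constant, and for $T$ large enough we have $S = d\log^{\kappa}T \geq S_0$ (since $\kappa > 1$); then $S/S_0 \leq c^{-1}\log^{\kappa-1}(T)$, yielding $\log(2S/S_0) \leq (\kappa-1)\log\log T + O(1) \leq \kappa\log\log T$, while $\log S \leq (\kappa+1)\log\log T$. Substitution into Theorem \ref{theorem:regret_main_d} produces
\begin{align*}
R_T \;\leq\; C_{1}\kappa\, d\log T\log\log T \;+\; C_{1}\kappa(\kappa+1)\, d^{2}(\log\log T)^{2},
\end{align*}
and the regime inequality $d^{2}(\log\log T)^{2}\leq d\log T\log\log T$ collapses the second term to at most $2\kappa^{2} C_{1} d\log T\log\log T$, giving the announced $R_T \leq 3\kappa^{2} C_{1} d\log T \log\log T$. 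The only obstacle is careful bookkeeping of the logarithmic factors against the regime \eqref{def:regime}; no new probabilistic content enters beyond Theorem \ref{theorem:regret_main_d}.
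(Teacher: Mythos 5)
Your derivation is correct and coincides with the paper's (implicit) argument: Corollary \ref{cor:regret_d} is presented as an immediate consequence of Theorem \ref{theorem:regret_main_d}, and the only content is exactly the bookkeeping you carry out, namely $\Upsilon_{d,T}\leq 3d\log(T)$ under \eqref{def:regime} (so $S_0 = O(d\log T)$), $\log(2S/S_0)=O_C(1)$ together with $d^2\log(S)=O_C(d\log T)$ in part (i), and $\log(2S/S_0)\lesssim \kappa\log\log T$, $\log S\lesssim(\kappa+1)\log\log T$, $d^{2}(\log\log T)^{2}\leq d\log T\log\log T$ in part (ii). The one caveat --- shared with the paper, which offers no proof --- concerns your dismissal of the range of $T$ for which $d\log^{\kappa}(T)<S_0$ in part (ii): absorbing those $T$ via the trivial bound $R_T\leq 2m_R T$ forces the multiplicative constant to grow like $\exp\bigl((cC_0)^{1/(\kappa-1)}\bigr)$, so the claimed $\kappa$-independence of $C_1$ strictly requires either restricting to $T$ large enough or letting $C_1$ depend on $\kappa$, as the paper does explicitly in Theorem \ref{theorem:discrete_regret_main_T}(ii).
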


Next we establish a lower bound that matches the order in the part (i) of  Corollary \ref{cor:regret_d}. For $0\leq r_1\leq r_2$, denote by $\cB_d(r_1,r_2) = \{\mvx \in \bR^d: \|\mvx\| \in [r_1,r_2]\}$ the region between two spheres with radius $r_1$ and $r_2$. Consider the following problem instances.

\begin{enumerate}[label=\textsc{(P.I)}, wide]  
    \item $K = 2$ and $d \geq 3$. $\mvtheta_1 = \boldsymbol{0}_{d}$, and $\mvtheta_2 \in \cB_d(1/2,1)$;  the context $\mvX$ has a distribution $F$, independent from $\epsilon^{(1)},\epsilon^{(2)}$, which are i.i.d.~$N(0,1)$ random variables.
    \label{prob_instances_all_lower}
\end{enumerate}

Given $T$, $d$, $\mvtheta_2$ and $F$, a problem instance in \ref{prob_instances_all_lower} is completely specified, and to emphasize the dependence, we write $R_T(\{\pi_t, t \in[T]\};\ d,\mvtheta_2,F)$ for the cumulative regret $R_T$ of an admissible rule $\{\pi_t:t\in [T]\}$. 

\begin{theorem}\label{theorem:lower_bound}
Consider problem instances in \ref{prob_instances_all_lower} under the assumption  \eqref{def:regime}.  
Assume either the distribution $F$ is $\textup{Unif}(\sqrt{d}\cS^{d-1})$ or  $F$ has an isotropic log-concave density and $\|\mvX\| \leq \sqrt{d} m_{X}$ almost surely. Then 
there exist an absolute constant $c > 0$ and a constant $C > 0$, that only depends on $m_X$, such that
\begin{align*}
c d \log(T) \;\;\leq \;\;
    \inf_{\{\pi_t, t \in[T]\}} \ \sup_{\mvtheta_2 \in \cB_d(1/2,1)} R_T(\{\pi_t, t \in[T]\};\ d,\mvtheta_2, F) 
    \;\;\leq \;\; C d\log(T),
\end{align*}
where the infimum is taken over  all admissible algorithms.
\end{theorem}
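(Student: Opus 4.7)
The statement contains matching upper and lower bounds on the worst-case regret, which I would prove separately. The upper bound follows from Corollary \ref{cor:regret_d}(i), while the lower bound requires a time-localized Assouad-type packing argument.

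For the upper bound, I would show that conditions \ref{assumption_parameter_noise}--\ref{cond:unit_sphere} hold uniformly over $\mvtheta_2 \in \cB_d(1/2, 1)$ with constants depending only on $m_X$, and then invoke Corollary \ref{cor:regret_d}(i) with $S = Cd\log(T)$ for a sufficiently large $C = C(m_X)$. The pieces of \ref{assumption_parameter_noise} are immediate with $m_\theta = 1$, $m_R \leq m_X$, $\sigma^2 = 1$, and $m_X = 1$ for the sphere case. For $F = \textup{Unif}(\sqrt{d}\cS^{d-1})$, spherical symmetry identifies the 1D marginal $\mvu'\mvX$ (for $\mvu \in \cS^{d-1}$) in distribution with $\sqrt{d}\, Z_1/\|\mvZ\|$ where $\mvZ \sim N_d(\boldsymbol{0}_d, \bI_d)$, whose density at zero is $O(1)$ uniformly in $d \geq 3$ (by Stirling), giving \ref{cond:margin} and \ref{cond:continuity}; condition \ref{cond:posdef} follows from $\Exp[\mvX\mvX'] = \bI_d$ combined with a symmetry-based lower bound on $\Pro(\mvX \in \cU_{\ell_0}^{(k)})$; and \ref{cond:unit_sphere} is verified by reducing the expectation to a 2D integral in the plane spanned by $\mvu_*$ and $\mvv$, analogously to the proof of Lemma \ref{lemma:logconcave_example}. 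For isotropic log-concave $F$ with $\|\mvX\| \leq \sqrt{d}\, m_X$, Lemmas \ref{lemma:continuity}(ii) and \ref{lemma:logconcave_example} directly deliver \ref{cond:margin}, \ref{cond:continuity}, and \ref{cond:unit_sphere}, while \ref{cond:posdef} follows from $\Cov(\mvX) = \bI_d$ and a dimension-free lower bound on $\Pro(\mvX \in \cU_{\ell_0}^{(k)})$ via standard log-concave marginal bounds.

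For the lower bound, I would partition $[S_0, T]$ into dyadic blocks $\cI_j := [2^j, 2^{j+1} \wedge T)$ and apply Assouad's lemma on each block with a scale-adapted packing. Fix an orthonormal basis $\mvu_0, \mvu_1, \ldots, \mvu_{d-1}$ of $\bR^d$, let $\delta_j = c\sqrt{d/2^j}$ for a small absolute constant $c$, and consider the family
\begin{align*}
\mvtheta_2^{\omega, j} \;:=\; \tfrac{1}{2}\mvu_0 + \frac{\delta_j}{\sqrt{d-1}}\sum_{i=1}^{d-1} \omega_i \mvu_i, \qquad \omega \in \{-1, +1\}^{d-1},
\end{align*}
which is contained in $\cB_d(1/2, 1)$ for $c$ small. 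For neighbors $\omega, \omega'$ differing only in coordinate $i$, the KL divergence between the associated observation laws up to time $2^{j+1}$ is, by the Gaussian bandit chain rule,
\begin{align*}
\mathrm{KL}(P_\omega \| P_{\omega'}) \;=\; \frac{2\delta_j^2}{d-1}\, \Exp_\omega\!\left[\sum_{t \leq 2^{j+1}} (\mvu_i'\mvX_t)^2\, I(A_t = 2)\right] \;\leq\; C\,\frac{\delta_j^2 \cdot 2^{j+1}}{d-1} \;=\; O(c^2),
\end{align*}
using $\Exp[(\mvu_i'\mvX_t)^2] = O(1)$ in both distributional settings. Assouad's lemma then forces the algorithm to suffer constant-probability coordinate-identification errors in $\Omega(d)$ of the $d-1$ coordinates; and for contexts $\mvX_t$ in the margin region $\{|\mvu_0'\mvX_t| \leq \delta_j\}$---which has probability $\asymp \delta_j$ by the anti-concentration established above---the sign of $(\mvtheta_2^\omega)'\mvX_t$ genuinely depends on $\omega$, and a sign misidentification contributes $\Theta(\delta_j)$ instant regret. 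The per-step block regret is thus $\Omega(\delta_j \cdot \delta_j) = \Omega(d/2^j)$, giving $\Omega(d)$ per block and $\Omega(d\log(T))$ after summing the $\Theta(\log(T))$ blocks.

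The main obstacle lies in the final step of the lower bound: converting a constant-probability Hamming-type error on the coordinates $(\omega_i)_{i=1}^{d-1}$ into a constant-probability sign-misidentification for the random quantity $(\mvtheta_2^\omega)'\mvX_t$. Since this sign depends on $\sum_i \omega_i(\mvu_i'\mvX_t)$ rather than any single $\omega_i$, the conversion requires a symmetrization argument---e.g., averaging over random signs of the $\omega_i$'s conditional on $\mvX_t$ in the margin region---to ensure that a constant fraction of margin-region decisions are genuinely ambiguous for an algorithm that has not identified $\omega$ on those coordinates. A secondary concern is the careful handling of the adaptive dependence of $A_t$ on the past when writing the KL bound above, which follows standard bandit lower-bound arguments but must be carried out to preserve the $O(c^2)$ bound uniformly across blocks.
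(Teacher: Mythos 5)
Your upper bound argument coincides with the paper's: Appendix \ref{proof:theorem:lower_bound_upper} performs exactly the verifications you sketch (the two-dimensional reduction for \ref{cond:unit_sphere} on the sphere is Lemma \ref{app:unit_sphere_lu}, the log-concave case uses Lemmas \ref{lemma:continuity} and \ref{lemma:logconcave_example}) and then invokes Corollary \ref{cor:regret_d}(i). That half is fine.

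The lower bound, however, has a gap that is more fundamental than the two obstacles you flag. Your dyadic-block Assouad construction uses a packing scale $\delta_j = c\sqrt{d/2^j}$ that shrinks with the block, so for each block you exhibit a \emph{block-dependent} hypercube of parameters on which the block-$j$ regret is $\Omega(d)$. But the target quantity is $\sup_{\mvtheta_2}\sum_j(\text{regret on block }j)$, not $\sum_j\sup_{\mvtheta_2}(\text{regret on block }j)$: no single $\mvtheta_2$ lies in the hard hypercube at every scale simultaneously, and superposing perturbations across scales is dominated by the coarsest scale, so the per-block $\Omega(d)$ bounds do not accumulate to $\Omega(d\log T)$ for any one instance (or any one prior). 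This is precisely what the paper's proof is built to avoid: it fixes a \emph{single} smooth prior $\rho_d$ on $\cB_d(1/2,1)$, converts the Bayes instant regret $\Exp[\hat r_t]$ at every $t$ into the posterior mean-squared error of the direction $\mvTheta_2/\|\mvTheta_2\|$ via the lower-bound analogue of \ref{cond:unit_sphere} (Lemma \ref{app:unit_sphere_lu}), and then applies the van Trees inequality (Lemma \ref{app:lower_bound_van_Tree}) to get $\Exp[\hat r_t]\geq c(d-1)^2/(td+Cd^2)$ under that one prior, which sums to $\Omega(d\log(T/d))$. This $L^2$ reduction also disposes of your acknowledged difficulty of converting per-coordinate Hamming errors into sign misidentification of $\sum_i\omega_i(\mvu_i'\mvX_t)$. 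To repair your route you would essentially have to replace block-wise Assouad by a fixed-prior Bayesian information bound, which is the van Trees argument the paper uses.
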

\begin{proof}
See Section \ref{sec:proof_lower_bound} for the lower bound proof, and Appendix \ref{proof:theorem:lower_bound_upper} for the upper bound. 
\end{proof}

First,  the proof for the lower bound is in the same spirit as that for \citet[Theorem 2]{goldenshluger2013linear}. 
The novel steps include 
establishing a lower bound version of the condition \ref{cond:unit_sphere} (i.e., Lemma \ref{lemma:logconcave_example}), and  
an application of van Tree's inequality to make the dependence on $d$ explicit (Appendix \ref{app:van_tree}). Note that the lower bound does not require the condition  $\|\mvX\| \leq \sqrt{d} m_{X}$, and holds beyond the low-dimensional regime.

Second, for the upper bound part, we verify the conditions \ref{assumption_parameter_noise}-\ref{cond:unit_sphere}, and apply part (i) of Corollary \ref{cor:regret_d} for the proposed Tr-LinUCB algorithm. In particular, we conclude that Tr-LinUCB achieves the optimal dependence in both $d$ and $T$, if we choose the truncation time $S = Cd \log(T)$ for some sufficiently large $C$, for the problem instances in \ref{prob_instances_all_lower}, under the low dimensional regime \eqref{def:regime}. We also note that if $S = d\log^{\kappa}(T)$ for some $\kappa > 1$, the cost is a multiplicative $\log\log(T)$-factor, that \textit{does not} depend on $d$.

\subsection{Sub-optimality of LinUCB}\label{subsec:subopt_linUCB}
In Corollary  \ref{corollary:regret_main_T}, we establish a $O(d^2\log^2(T))$ upper bound for LinUCB, i.e., Tr-LinUCB with $S=T$. 
Below, we construct concrete problem instances, for which the cumulative regret of LinUCB is $\Omega(d^2\log^2(T))$. This indicates that the $O(d^2\log^2(T))$ upper bound is in fact tight, and demonstrates that  LinUCB is sub-optimal, suffering a $d\log(T)$-factor compared to its appropriately truncated version; see discussions below.


\begin{enumerate}[label=\textsc{(P.II)}, wide]  
    \item $K = 2$, $d \geq 3$, $\mvtheta_1 = (1, \boldsymbol{0}_{d-1}')'$, $\mvtheta_2 = (-1, \boldsymbol{0}_{d-1}')'$.  The context vector $\mvX$ is distributed as $(\iota |\mvPsi_1|,\mvPsi_2,\ldots, \mvPsi_{d})$, where $\mvPsi = (\mvPsi_1,\mvPsi_2,\ldots, \mvPsi_{d})$ has the $\textup{Unif}(\sqrt{d}\cS^{d-1})$ distribution, and $\iota$ takes value $+1$ and $-1$ with probability $p$ and $1-p$ respectively, independent from $\mvPsi$. Further,  $\epsilon^{(1)}, \epsilon^{(2)}$ are i.i.d.~$N(0, \sigma^2)$ random variables, independent from $\mvX$.
    \label{prob_instances_LinUCB}
\end{enumerate}

That is, for each context, with probability $p$ and $1-p$, respectively, it is uniformly distributed over the ``northern" and ``southern" hemisphere with radius $\sqrt{d}$ in $\bR^d$.

\begin{theorem}\label{them:lb_linucb}
Consider problem instances in \ref{prob_instances_LinUCB} with $p = 0.6, \sigma^2=1$, and the cumulative regret $R_T$ for the LinUCB algorithm, i.e., Tr-LinUCB with $S = T$, with $\lambda= m_{\theta} =  1$. Assume \eqref{def:regime} holds. There exists an absolute  positive constant $C$ such that $ R_T/(d^2\log^2(T)) \in [C^{-1},C]$.
\end{theorem}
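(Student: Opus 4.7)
The proof decomposes into the upper bound $R_T \le C d^2 \log^2(T)$, which is immediate from Corollary \ref{corollary:regret_main_T}(iii) once one verifies that the distribution in \ref{prob_instances_LinUCB} satisfies conditions \ref{assumption_parameter_noise}--\ref{cond:continuity}, and the matching lower bound $R_T \ge c\,d^2 \log^2(T)$. The assumption check is short: writing $\mvX = (\iota|\Psi_1|, \Psi_2, \ldots, \Psi_d)'$, the sign symmetry of the marginals of $\Psi$ yields $\Exp[\mvX \mvX'] = \bI_d$ and $\|\mvX\| = \sqrt d$, while the margin and absolute-continuity conditions follow from the fact that the marginal density of $\Psi_1$ on $\sqrt{d}\,\cS^{d-1}$ is bounded above and below on a fixed neighborhood of the origin, uniformly in $d \ge 3$.

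The core is the lower bound. The mechanism is the asymmetry $p = 0.6$: LinUCB pulls arm $1$ approximately $p\,t$ times and arm $2$ approximately $(1-p)\,t$ times, so the smaller design matrix $\mvV_{t-1}^{(2)}$ inflates arm $2$'s exploration bonus by a strictly positive constant factor, producing wrong selections in a $\Theta(d\sqrt{\log(t)/t})$-neighborhood of $\{X^{(1)} = 0\}$. The plan is to fix a good event $\cG$, of at least constant probability, on which for every $t$ in a suitable range one has $N_{t-1}^{(k)} \asymp p_k\,t$ (with $p_1 = p$, $p_2 = 1 - p$), $\mvV_{t-1}^{(k)}$ concentrates around $N_{t-1}^{(k)} \bI_d$ in operator norm, and $\|\hat{\mvtheta}_{t-1}^{(k)} - \mvtheta_k\|$ is of order $\sqrt{d \log(t)/t}$ via Lemma \ref{lemma:LinUCB}. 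Under \eqref{def:regime} one has $\beta_{t-1}^{(k)} \asymp d \log(t)$, each bonus then equals $\asymp d \sqrt{\log(t)/(p_k t)}$, and the bonus gap is $\Theta(d \sqrt{\log(t)/t})$ because $1/\sqrt{1-p} > 1/\sqrt{p}$. A careful comparison of the two UCBs---using that the random quantity $(\hat{\mvtheta}^{(2)}_{t-1} - \hat{\mvtheta}^{(1)}_{t-1} - (\mvtheta_2-\mvtheta_1))'\mvX_t$ has typical magnitude $O(\sqrt{d\log(t)/t})$ by averaging the estimation error against the uniform direction of $\mvX_t$, i.e., a $\sqrt d$-factor smaller than the bonus gap---shows that, with probability bounded below by an absolute constant, arm $2$ is selected on the event $\{X^{(1)}_t \in (0,\,c_1 d\sqrt{\log(t)/t}),\ \iota_t = 1\}$, on which arm $1$ is optimal and the instant regret equals $2X^{(1)}_t$. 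Integrating this instant regret against the density of $X^{(1)}$, which is bounded below in a fixed neighborhood of zero, yields $\Exp[\hat{r}_t] \asymp d^2 \log(t)/t$, and summing over $t \in [T]$ gives $R_T = \Omega(d^2 \log^2(T))$.

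The main obstacle is establishing the concentration of $N_{t-1}^{(k)}$ and $\mvV_{t-1}^{(k)}$ on $\cG$ despite the sequential adaptivity of LinUCB. For the counts, the plan is to bootstrap from the already-proven upper bound: combined with the margin condition \ref{cond:margin}, the $O(d^2 \log^2(T))$ regret limits the number of ``mismatched pulls'' (i.e., pulls of the suboptimal arm when $|X^{(1)}|$ exceeds a fixed constant) to $o(t)$ with constant probability via Markov's inequality, which forces $N_{t-1}^{(k)} = p_k t\,(1+o(1))$. For the operator-norm concentration of $\mvV_{t-1}^{(k)}$, the key identity is $\Exp[\mvX_s \mvX_s' \mid \iota_s = \pm 1] = \bI_d$, which can be combined with a matrix Freedman inequality applied to $\sum_{s \le t}(\mvX_s \mvX_s' - \bI_d)\,I(A_s = k)$, treating the rare mismatched pulls separately via the regret upper bound and using $\|\mvX_s\| \le \sqrt d$ to control the predictable variation.
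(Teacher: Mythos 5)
Your decomposition (upper bound from Corollary \ref{corollary:regret_main_T}(iii), lower bound via the asymmetric bonus gap created by $p=0.6$) and the quantitative mechanism (a $\Theta(d\sqrt{\log(t)/t})$ bonus gap forcing wrong pulls in a band of that width around $\{X^{(1)}=0\}$, hence $\Exp[\hat r_t]=\Omega(d^2\log(t)/t)$) coincide with the paper's proof in Appendix \ref{app:proof_lower_LinUCB}. Where you genuinely diverge is in how the two supporting estimates are obtained. For the two-sided eigenvalue control of $\bV_t^{(1)},\bV_t^{(2)}$, you bootstrap from the already-proved $O(d^2\log^2 T)$ regret via Markov to limit mismatched pulls, and then apply matrix concentration to the adaptive sums $\sum_s(\mvX_s\mvX_s'-\bI_d)I(A_s=k)$. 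The paper instead reuses Lemma \ref{lemma:small_eignenvalues_anyt} to get $\lambda_{\min}(\bV_t^{(k)})=\Omega(t)$ for both arms, deduces that the UCBs are within $\ell_0/2$ of the truth so that the correct arm is pulled deterministically whenever $|X_s^{(1)}|>\ell_0$, and then applies matrix Chernoff to the \emph{i.i.d.} sums $\sum_s\mvX_s\mvX_s'I(\mvX_s\in\cU_{\ell_0}^{(k)})$; the upper bound on $\lambda_{\max}(\bV_t^{(2)})$ comes from the complementarity inequality $\lambda_{\max}(\bV_t^{(1)}+\bV_t^{(2)})\geq\lambda_{\max}(\bV_t^{(2)})+\lambda_{\min}(\bV_t^{(1)})$ rather than from direct concentration (Lemma \ref{app:lunucb_info}). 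The paper's route avoids martingale/adaptivity issues entirely and needs no threshold tuning; yours is more self-contained in spirit but costs more bookkeeping. For the estimation-error cross term, you take the weaker uniform bound $\|\hat\mvtheta_t^{(k)}-\mvtheta_k\|=O(\sqrt{d\log(t)/t})$ from Lemma \ref{lemma:LinUCB} and recover a $\sqrt d$ factor by averaging the error direction against the near-uniform $\mvX_{t+1}$; the paper instead uses the second-moment bound of Lemma \ref{lemma:beta_V} plus Markov to get $\|\hat\mvtheta_t^{(k)}-\mvtheta_k\|=O(\sqrt{d/t})$ with probability $0.9$, after which the crude Cauchy--Schwarz bound $|(\hat\mvtheta_t^{(k)}-\mvtheta_k)'\mvX_{t+1}|\leq C'dt^{-1/2}$ is already dominated by the bonus gap once $\log t$ is large. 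Both work; the paper's is shorter.

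Two points in your sketch need care to close. First, the good event on which $N_{t-1}^{(k)}\asymp p_k t$ holds must be $\cF_{t}$-measurable (or at least independent of $\mvX_{t+1}$) so that intersecting it with the low-probability boundary event does not bias $\mvX_{t+1}$; your Markov bound as stated controls the trajectory over all of $[T]$, so you should instead use the running mismatch count up to time $t$, whose tail is monotone and inherits the same probability bound. Second, the two exploration radii $\beta_t^{(1)},\beta_t^{(2)}$ are \emph{not} equal: $\log\det(\bV_t^{(1)})\approx d\log(0.6t)$ exceeds $\log\det(\bV_t^{(2)})\approx d\log(0.4t)$, which works against your bonus gap. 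This discrepancy is $O(d)$ inside the square root against a main term of $2\log T$, and is negligible precisely because of \eqref{def:regime}; the paper's Lemma \ref{aux:lb_linucb_cal} makes this explicit, and your argument should too.
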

\begin{proof}
See Appendix
 \ref{app:proof_lower_LinUCB}.
\end{proof}

We note that the problem instances in \ref{prob_instances_LinUCB} verify  conditions  \ref{assumption_parameter_noise}-\ref{cond:unit_sphere}, due to 
Theorem \ref{theorem:lower_bound} and since the context $\mvX$ has a density, relative to $\textup{Unif}(\sqrt{d} \cS^{d-1})$, that takes value in $[2(1-p),2p]$ if $p > 0.5$.
Thus by Corollary \ref{cor:regret_d}, the regret for Tr-LinUCB is $O(d\log(T))$ if $S = C d\log(T)$ for some sufficiently large constant $C$.
 


Next, we provide intuition for the $\Omega(d^2\log^2(T))$ regret of LinUCB, which explains its ``excessive" exploration. 
Recall from the discussions in Subsection \ref{subsec:optimal_d} that for $t \geq C_0 \Upsilon_{d,T}$
since the bonus part, $|\text{UCB}_t(k)-(\hat \mvtheta_{t-1}^{(k)})' \mvX_t|$, in the upper confidence bound, dominates the estimation error, we have  $|\text{UCB}_t(k) - \mvtheta_k'X_{t}| \asymp c_k (\upsilon_{d,t}/t)^{1/2}$ for $k \in [2]$, where $\upsilon_{d,t} := d\log(T)+d^2\log(t)$. If $p > 0.5$, then the \textit{proportion} of times arm $1$ selected is larger than arm $2$, and thus $c_1 < c_2$. As a result, on the event $\{0 < \mvtheta_1'X_{t} - \mvtheta_2'X_{t} < (c_2-c_1) (\upsilon_{d,t}/t)^{1/2}\}$, which occurs with a probability $\Omega((\upsilon_{d,t}/t)^{1/2})$, we have 
$\text{UCB}_t(1) < \text{UCB}_t(2)$, and a $\Omega((\upsilon_{d,t}/t)^{1/2})$ regret is incurred, which implies $\Omega(\upsilon_{d,t}/t)$ expected regret at time $t$, and $\Omega(d^2\log^2(T))$ cumulative regret.



\subsection{Discrete components in contexts} \label{subsec:discrete}
In Lemma \ref{lemma:continuity}
we show that  the condition \ref{cond:continuity} holds if the context vector $\mvX$ has a bounded Lebesgue density, after maybe removing the intercept.  In this section, we allow $\mvX$ to have both discrete and continuous components. In order not to over-complicate the proof, we assume the dimension $d$ \textit{fixed} in this subsection, and note that by similar arguments as for Theorem \ref{theorem:regret_main_T}, we could make the dependence on $d$ explicit, e.g., $O(d^2\log(2d)\log(T))$ with a properly chosen truncation time.

Suppose the context vector $\mvX = ((\mvX^{(\text{d})})', (\mvX^{(\text{c})})')'$, where $\mvX^{(\text{d})} \in \bR^{d_1}$, $\mvX^{(\text{c})} \in \bR^{d_2}$, and $d = d_1+d_2$ with $d_1,d_2 \geq 1$. Here, $\mvX^{(\text{d})}$ is a discrete random vector, with support $\cZ = \{ \mvz_1,\ldots, \mvz_{L_2} \} \subset \bR^{d_1}$. Further, we denote by $\bar{\mvX}^{(\text{c})} = (1,(\mvX^{(\text{c})})')'$, and assume that for some absolute constant $\ell_2>0$, \\

\noindent \mylabel{cond:discrete}{(C.IV')} 
For each $j \in [L_2]$,  $k \in [2]$, and $\bar{\mvu} \in \cS^{d_2}$, $\bP(|\bar{\mvu}'\bar{\mvX}^{(\text{c})}| \leq \ell_2 \;\vert\; \mvX^{(\text{d})} = \mvz_j) \leq 1/4$, and $\lambda_{\min}\left( \Exp\left[ \bar{\mvX}^{(\text{c})} (\bar{\mvX}^{(\text{c})})'I\left(\mvX \in \cU_{{\ell_2}}^{(k)},  \mvX^{(\text{d})} = \mvz_j \right)\right]\right) \geq \ell_2^2$.\\

The two parts in the above condition may be viewed as the conditional version of conditions \ref{cond:continuity} and \ref{cond:posdef}, given the value of the first $d_1$ components. By Lemma \ref{lemma:continuity}, the first condition  holds if for each $j \in [L_2]$, given $\{\mvX^{(\text{d})} = \mvz_j\}$, $\mvX^{(\text{c})}$ has a Lebesgue density on $\bR^{d_2}$ that is upper bounded by some constant $C > 0$. 
The second condition   requires, for each $j \in [L_2]$, that $\mvX^{(\text{d})}$ assumes $\mvz_j$ with a positive probability, and conditional on $\{\mvX^{(\text{d})} = \mvz_j\}$, $\mvX$ is optimal for each arm $k \in [2]$ by at least $\ell_2 > 0$ with a positive probability, and that $\bar{\mvX}^{(\text{c})}$ expands $\bR^{d_2+1}$ on the event $\{\mvX \in \cU_{{\ell_2}}^{(k)},  \mvX^{(\text{d})} = \mvz_j \}$.  Denote by $\mvTheta_2 := (\mvTheta_0\setminus \{\ell_1\}) \cup \{\ell_2, L_2\}$ the collection of parameters in conditions \ref{assumption_parameter_noise}-\ref{cond:posdef},  \ref{cond:discrete}, and  the size of support for the discrete components $\mvX^{\text{(d)}}$.


\begin{theorem}\label{theorem:discrete_regret_main_T} 
Consider problem instances for which conditions \ref{assumption_parameter_noise}-\ref{cond:posdef} and \ref{cond:discrete} hold, and the Tr-LinUCB algorithm with a fixed $\lambda > 0$.  Assume $d$ is fixed. (i).  There exist  a constant $C_0 > 0$, depending only on $\mvTheta_2,  d, \lambda$, such that if $S = C \log(T)$ for some  $C \geq C_0$, then $R_T\leq C_1 \log(T)$, where the constant $C_1$ depends only on $\mvTheta_2,d,   \lambda$, and $C$. (ii). If $S = \log^{\kappa}(T)$ for some $\kappa > 1$, then $R_T\leq C_1 \log(T)\log\log(T)$, where the constant $C_1$ depends only on $\mvTheta_2, d,  \lambda$, and $\kappa$.
\end{theorem}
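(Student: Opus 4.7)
The plan is to follow the three-stage regret decomposition used in the proof of Theorem~\ref{theorem:regret_main_T} (initial warm-up through time $S_0$, Tr-LinUCB phase from $S_0$ to $S$, and pure exploitation after $S$), with the condition \ref{cond:continuity} replaced by its conditional counterpart \ref{cond:discrete}. The key observation is that on the event $\{\mvX^{(\text{d})} = \mvz_j\}$, the linear reward can be written as $\mvtheta_k'\mvX = (\mvtheta_{k,j}^{*})'\bar{\mvX}^{(\text{c})}$, where $\mvtheta_{k,j}^{*} := ((\mvtheta_k^{(\text{d})})'\mvz_j,\; (\mvtheta_k^{(\text{c})})')' \in \bR^{d_2+1}$; so the problem restricted to each discrete cell behaves like a $(d_2+1)$-dimensional linear bandit in the context $\bar{\mvX}^{(\text{c})}$, for which \ref{cond:discrete} supplies exactly the analogues of \ref{cond:continuity} (first part) and \ref{cond:posdef} (second part).

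First I would establish the high-probability lower bound $\lambda_{\min}(\bV_t^{(k)}) \geq c\, t$ for all $t \geq S_0 := \lceil C_0 \log T \rceil$, arguing along the same lines as in Theorem~\ref{theorem:regret_main_T} but invoking only the unconditional \ref{cond:posdef}. Note that \ref{cond:posdef} implicitly forces $\{\mvz_1,\ldots,\mvz_{L_2}\}$ to span $\bR^{d_1}$ (else no $d$-dimensional eigenvalue bound could hold) and also forces $p_j := \bP(\mvX^{(\text{d})} = \mvz_j) \geq c > 0$ for every $j \in [L_2]$. Next, to bound the expected instantaneous regret at each $t$, I would decompose
\[
\Exp[\hat{r}_t] \;=\; \sum_{j \in [L_2]} p_j\, \Exp\!\left[\hat{r}_t \;\big|\; \mvX_t^{(\text{d})} = \mvz_j\right],
\]
and control each conditional term exactly as in the proof of Theorem~\ref{theorem:regret_main_T}: the margin condition \ref{cond:margin} yields, via Bayes' rule, the conditional margin bound $\bP(|(\mvtheta_1-\mvtheta_2)'\mvX_t| \leq \tau \mid \mvX_t^{(\text{d})} = \mvz_j) \leq (L_0/p_j)\tau$, and the first part of \ref{cond:discrete} plays the role of \ref{cond:continuity} in controlling the probability that $\bar{\mvu}'\bar{\mvX}_t^{(\text{c})}$ concentrates near $0$ for algorithm-dependent directions $\bar{\mvu} \in \cS^{d_2}$. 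Summing the $L_2$ conditional contributions and re-running the three-stage calculation (and, for part (ii), the overshoot calculation from Corollary~\ref{corollary:regret_main_T}(ii)) then produces the claimed $O(\log T)$ and $O(\log T \cdot \log\log T)$ bounds.

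The principal obstacle is that $\hat\mvtheta_{t-1}^{(k)}$ is built by pooling observations across all discrete cells, so one cannot simply invoke Theorem~\ref{theorem:regret_main_T} as a black box per cell. What one must verify is that the \emph{globally pooled} matrix $\bV_t^{(k)}$ still enjoys the $\Omega(t)$ eigenvalue lower bound (supplied by the unconditional \ref{cond:posdef}), and that although $\hat\mvtheta_{t-1}^{(k)}$ depends on histories from all cells, the bonus-versus-estimation-error tradeoff at time $t$ can be controlled conditionally on $\{\mvX_t^{(\text{d})} = \mvz_j\}$ using \ref{cond:discrete} applied to the fresh context $\bar{\mvX}_t^{(\text{c})}$, which is independent of $\cF_{t-1}$ given $\mvX_t^{(\text{d})}$. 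Since both $d$ and $L_2$ are fixed, all factors of the form $1/p_j$, $1/\ell_2$, and $\min_j\|\mvz_j\|$ absorb into the final constant $C_1$.
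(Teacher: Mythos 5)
Your high-level architecture matches the paper's: keep the three-stage regret decomposition of Theorem~\ref{theorem:regret_main_T} (whose stage-2 and stage-3 calculations use only \ref{cond:margin} and Lemma~\ref{lemma:beta_V}, not \ref{cond:continuity}), and isolate the real work in re-proving the eigenvalue lemma, i.e.\ that $\lambda_{\min}(\bV_t^{(k)})=\Omega(t)$ for both arms once $t\gtrsim\log T$. However, your resolution of that crux is circular. You assert that the $\Omega(t)$ bound for the pooled matrix is ``supplied by the unconditional \ref{cond:posdef},'' but \ref{cond:posdef} only delivers linear growth \emph{after} one has shown that the algorithm reliably selects arm $k$ whenever $\mvX_t\in\cU_{\ell_0}^{(k)}$, which in turn requires the UCB bonus $\sqrt{\beta_{t-1}^{(k)}}\,\|\mvX_t\|_{(\bV_{t-1}^{(k)})^{-1}}$ to already be small --- and that is exactly the quantity one cannot control in the initial phase without some eigenvalue information. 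In the continuous case the bootstrap is seeded by Lemma~\ref{lemma:cst_adaptive}, which shows that \emph{at least one} arm's design matrix grows in all directions no matter how arms are assigned; that argument needs $\bP(|\mvu'\mvX|\le\ell_1)\le 1/4$ for \emph{every} $\mvu\in\cS^{d-1}$ and fails outright when $\mvX$ has discrete coordinates (take $\mvu$ supported on the discrete block). So ``arguing along the same lines but invoking only the unconditional \ref{cond:posdef}'' does not get the induction started.

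The missing device is the paper's Lemma~\ref{aux:replaced_by_1} together with Lemma~\ref{lemma:discrete_diverse}. One cannot run the seed argument on the cell-restricted $d$-dimensional matrices $\tilde{\bV}_t^{(k)}(\mvz_j)$ either, since for $d_1\ge 2$ their smallest eigenvalue is stuck at $\lambda$ forever (Remark~\ref{rk:replace_a}). Instead the paper shows that for a context in cell $j$, the pooled quadratic form satisfies $\|\mvX_t\|_{(\bV_{t-1}^{(k)})^{-1}}\le (1+\sqrt{d}m_X)\,\|\bar{\mvX}_t^{(\text{c})}\|_{(\bar{\bV}_{t-1}^{(k)}(\mvz_j))^{-1}}$, where $\bar{\bV}(\mvz_j)$ is the reduced $(1+d_2)$-dimensional matrix with the discrete block replaced by an intercept; the seed argument (Lemma~\ref{lemma:cst_adaptive} applied conditionally, using the first part of \ref{cond:discrete}) and the subsequent ``one arm, then both arms'' step are then carried out on $\bar{\bV}(\mvz_j)$ for each cell, and only afterwards does the unconditional \ref{cond:posdef} take over to grow the pooled $\lambda_{\min}(\bV_t^{(k)})$ linearly. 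Your opening observation that each cell is a $(d_2+1)$-dimensional bandit in $\bar{\mvX}^{(\text{c})}$ is the right intuition, but without the comparison inequality linking the pooled bonus to the reduced per-cell matrix, the proof does not close. (Minor aside: your per-cell conditional margin decomposition in the regret stages is unnecessary --- the unconditional \ref{cond:margin} is applied directly, as in Theorem~\ref{theorem:regret_main_T}.)
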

\begin{proof}
See Appendix \ref{app:proof_discrete}. 
\end{proof}

\begin{remark}
By similar but longer arguments, we may allow that for a subset  $\tilde{\cZ} \subset \cZ$, if $\mvX^{(\text{d})} = \mvz \in \tilde{\cZ}$, one arm has a better reward than the other, regardless the value of $\mvX^{(\text{c})}$.
\end{remark}

Next, we indicate the key step in the proof of above Theorem. Note that if $d_1 \geq 2$, then $\Exp[\mvX \mvX'I(\mvX^{(\text{d})} = \mvz_j)]$ is not invertible, which motivates us to replace the first $d_1$ coordinates by a constant $1$, resulting in $\bar{\mvX}^{(\text{c})}$. The next lemma shows that if we cluster contexts based on the value of their discrete components $\mvX^{\text{(d)}}$, then we can deal with $\mvX^{\text{(d)}}$ in the same way as an intercept.

\begin{lemma}\label{aux:replaced_by_1}
Fix $\lambda > 0$ and let $n,d_1, d_2 \geq 1$ be integers.
Let $\mva \in \bR^{d_1}$, and $\mvz_{1},\ldots, \mvz_{n}$ be $\bR^{d_2}$-vectors. Define $\tilde{\mvz}_i = [\mva',\mvz_{i}']'$ and $\bar{\mvz}_i = [1,\mvz_{i}']'$ for each $i \in [n]$. For any $\mvv \in \bR^{d_2}$,
$$
\tilde{\mvv}' (\lambda \bI_{d_1+d_2} + \sum_{i=1}^{n} \tilde{\mvz}_i \tilde{\mvz}_i')^{-1} \tilde{\mvv} \;\;\leq\;\; \max(1,\|a\|^2)\  
\bar{\mvv}' (\lambda \bI_{1+d_2} + \sum_{i=1}^{n} \bar{\mvz}_i \bar{\mvz}_i')^{-1} \bar{\mvv},
$$
where $\tilde{\mvv} = [\mva',\mvv']'$ and $\bar{\mvv} = [1,\mvv']'$.
\end{lemma}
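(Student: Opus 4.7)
The plan is to recast the claimed inequality as a PSD operator ordering and establish it via the push-through identity followed by a diagonal symmetrization. Define the $(d_1+d_2)\times(1+d_2)$ block matrix
\[
B \;:=\; \begin{pmatrix} \mva & \boldsymbol{0}_{d_1\times d_2}\\ \boldsymbol{0}_{d_2\times 1} & \bI_{d_2} \end{pmatrix},
\]
so that $\tilde{\mvz}_i = B\bar{\mvz}_i$, $\tilde{\mvv} = B\bar{\mvv}$, and consequently $\sum_{i=1}^{n}\tilde{\mvz}_i\tilde{\mvz}_i' = BMB'$ with $M := \sum_{i=1}^n \bar{\mvz}_i\bar{\mvz}_i'$. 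Writing $c := \max(1,\|\mva\|^2)$, the lemma reduces to the PSD matrix inequality $B'(\lambda \bI_{d_1+d_2} + BMB')^{-1} B \preceq c\,(\lambda \bI_{1+d_2} + M)^{-1}$.

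Next, apply the push-through identity $(\lambda \bI + XY)^{-1}X = X(\lambda \bI + YX)^{-1}$ with $X=B$ and $Y=MB'$, which yields $B'(\lambda \bI + BMB')^{-1}B = D(\lambda \bI + MD)^{-1}$ where $D := B'B = \diag(\|\mva\|^2,1,\ldots,1)$. Assuming first that $\mva \neq \mvzero$, so $D$ is invertible, the identity $D^{1/2}(\lambda \bI + D^{1/2}MD^{1/2}) = (\lambda \bI + DM)D^{1/2}$ (a routine similarity valid because $D$ is diagonal) lets one rewrite this in the manifestly symmetric form $D^{1/2}(\lambda \bI + D^{1/2}MD^{1/2})^{-1}D^{1/2}$. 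Inverting both sides of the target inequality, which reverses the ordering, converts the goal into $\lambda D^{-1} + M \succeq c^{-1}(\lambda \bI + M)$, equivalently
\[
\lambda D^{-1} + (1 - c^{-1}) M \;\succeq\; (\lambda/c)\,\bI.
\]
Since $c \geq 1$ and $M\succeq 0$, the term $(1 - c^{-1})M$ is PSD and may be dropped, and the surviving inequality $\lambda D^{-1} \succeq (\lambda/c)\bI$ is simply $D \preceq c\,\bI$. This is immediate: every diagonal entry of $D$ is either $\|\mva\|^2$ or $1$, both at most $c$.

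The only subtlety, and the main obstacle in writing the proof cleanly, is the degenerate case $\mva = \mvzero$. Then $D$ has a zero eigenvalue and the symmetrization step breaks. I would handle this by continuity: both sides of the inequality in the lemma are continuous functions of $\mva$ (the left-hand side through the dependence of $B$ on $\mva$, the right-hand side through $c$), so the inequality for $\mva \neq \mvzero$ extends to $\mva = \mvzero$ on taking a limit $\mva_\epsilon \to \mvzero$ along any nonzero direction.
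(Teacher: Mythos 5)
Your proof is correct, but it takes a genuinely different route from the paper. You recast the claim as an operator inequality $B'(\lambda \bI + BMB')^{-1}B \preceq \max(1,\|\mva\|^2)(\lambda \bI + M)^{-1}$ for the block matrix $B$ with $\tilde{\mvz}_i = B\bar{\mvz}_i$, and establish it via the push-through identity, the symmetrization $D(\lambda\bI+MD)^{-1} = D^{1/2}(\lambda\bI+D^{1/2}MD^{1/2})^{-1}D^{1/2}$ with $D = B'B = \diag(\|\mva\|^2,1,\dots,1)$, and operator monotonicity of the inverse, which reduces everything to $D \preceq \max(1,\|\mva\|^2)\bI$. The paper instead invokes the least-norm dual representation $\|\mvz\|_{\bV^{-1}}^2 = \inf\{\|\mvgamma\|^2 : \sum_i \mvgamma_i \mvz_i = \mvz\}$ (its Lemma on Lagrange duality), observes that any feasible coefficient vector for the $\tilde{\ }$-problem must have its first $d_1$ coordinates proportional to $\mva$, and compares the two feasible sets directly; the factor $\max(1,\|\mva\|^2)$ appears as the cost of rescaling that one coefficient. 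Your argument proves the stronger conclusion that the inequality holds for every test vector $\bar{\mvw} \in \bR^{1+d_2}$, not just $\bar{\mvv} = [1,\mvv']'$, and rests on standard matrix identities; the paper's variational argument is more elementary, makes the provenance of the constant transparent, and handles $\mva = \boldsymbol{0}_{d_1}$ without the separate limiting step that your symmetrization requires (your continuity argument for that degenerate case is valid, since $\lambda > 0$ keeps both resolvents well defined, but it is an extra step you must not omit).
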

\begin{proof}
See Appendix \ref{proof:aux:replaced_by_1}.
\end{proof}

\begin{remark}\label{rk:replace_a}
Let $\tilde{\bV}_n = \lambda \bI_{d_1+d_2} + \sum_{i=1}^{n} \tilde{\mvz}_i \tilde{\mvz}_i'$. If $d_1 \geq 2$, then the smallest eigenvalue of $\tilde{\bV}_n$ does not grow with $n$, since $\tilde{\mvu}'\tilde{\bV}_n \tilde{\mvu} = \lambda$ for any $n \geq 1$, where $\tilde{\mvu} = (\mvu',\boldsymbol{0}_{d_2}')'$ and $\mvu \in \cS^{d_1-1}$ is any vector such that $\mvu'\mva = 0$. Note that if $d_1 = 1$ and $\mva = 1$, such $\mvu$ does not exist.

The above lemma implies that $\tilde{\mvv}' \tilde{\bV}_n^{-1} \tilde{\mvv}$ decays as $n$ increases for those $\tilde{\mvv} \in \bR^{d_1+d_2}$ such that the first $d_1$ components is $\mva$, which may not hold for general $\tilde{\mvv}$.
\end{remark}

\section{Experiments} \label{sec:simulation}
In this section, we conduct two simulation studies to compare the empirical  performance of the following algorithms: (i). the proposed Tr-LinUCB algorithm in Section \ref{sec:prob_formulation};\footnote{The implementation can be found at \url{https://github.com/simonZhou86/Tr_LinUCB}. The LinUCB algorithm corresponds to Tr-LinUCB with the truncation time $S=T$.} (ii). the LinUCB algorithm \citep{abbasi2011improved}; (iii). the OLS algorithm \citep{goldenshluger2013linear}; (iv). the Greedy-First algorithm \citep{bastani2021mostly}.\footnote{The implementation for \citet{bastani2021mostly} can be found at \url{https://github.com/khashayarkhv/contextual-bandits}. We used their implementation for the OLS algorithm and the Greedy-First algorithm. The only modification we made is that 
in simulationsynth.m, we set the intercept-scale variable on line 78 to 1, and remove the $/2$ part on line $112$ and $114$.}\\

\subsection{Synthetic Data} \label{sim_syn}
\noindent \textbf{Problem instances.} 
Except for Figure \ref{fig:varyT}, we consider the following setup, that matches the implementation in \citet{bastani2021mostly}.
The arm parameters $\{\mvtheta_k: k \in [K]\}$ are a random sample from the mixture of two $d$-dimensional normal distributions with equal weight, $2^{-1}N_d(\boldsymbol{1}_{d}, \bI_d)+ 2^{-1}N_d(-\boldsymbol{1}_{d}, \bI_d)$, where the first (resp. second) component has the mean vector $\boldsymbol{1}_{d}$ (resp. $-\boldsymbol{1}_{d}$), and both covariance matrices are the identity matrix.
For the context vector $\mvX$, its first component $\mvX^{(1)}$ is set to be $1$ (i.e., intercept), and the remaining $d-1$ components have the same distribution as $h(\mvZ)$, where $\mvZ$ has the $N_{d-1}(\boldsymbol{1}_{d-1}, 0.5\bI_{d-1})$ distribution, $h(x) = \min(\max(x,-1),1)$ for $x \in \bR$, and $h(\mvZ)$ means applying $h$ to each component in $\mvZ$. The observation noises $\{\epsilon_t^{(k)}: t \in [T], k \in [K]\}$ are i.i.d.~$N(0,\sigma^2)$ random variables with $\sigma^2 = 0.25$. The arm parameters, contexts, and noises are all independent. Further, each reported data point below is averaged over $1000$  realizations, where the arm parameters $\{\mvtheta_k: k \in [K]\}$ are also independently generated for each realization.\\

\noindent \textbf{Parameters.} For Tr-LinUCB, we set $\lambda=0.1$, $m_\theta = 1$, $\sigma^2 = 0.25$, and $S = Kd\log^{\kappa}(T)$ with $\kappa=2$. 
For LinUCB, we set $\lambda=0.1$, $m_\theta = 1$ and $\sigma^2 = 0.25$. For OLS, it requires the specification of exploration rate $q$ and sub-optimality gap $h$, and we set $q=1$ and $h=5$ following the implementation for \citet{bastani2021mostly}. For Greedy-First, from some time $t_0$ onward, it starts checking  whether the greedy algorithm fails, and if so, it transits into OLS; following the implementation for \citet{bastani2021mostly}, we set $t_0 = c_0 Kd$ with $c_0 = 4$, and $q=1, h =5$ for the OLS algorithm. These parameters are used in all studies, except for sensitivity analysis for $\kappa$ in Tr-LinUCB, $q,h$ in OLS, and $c_0$ in Greedy-First.

\begin{figure}[!tb]
\centering
\begin{subfigure}{0.48\textwidth}
\centering
    \includegraphics[width=6.5cm]{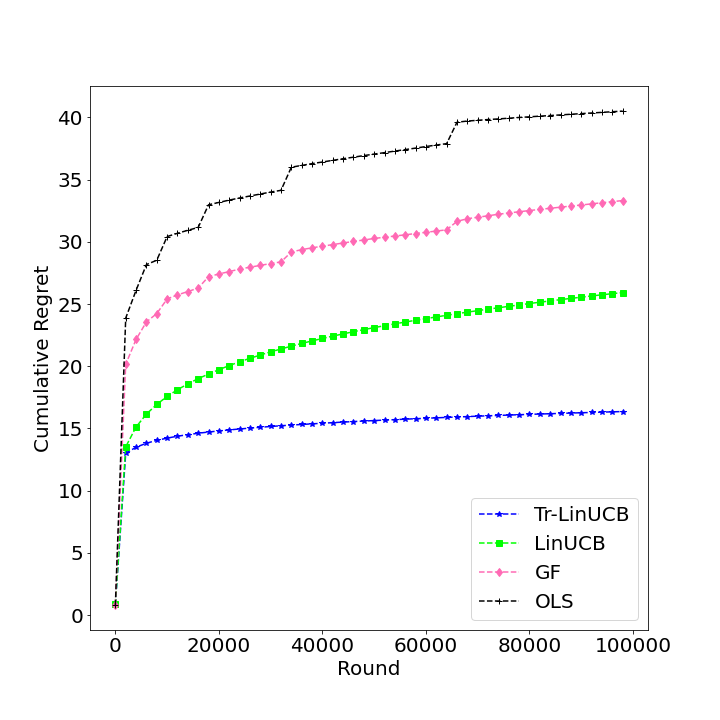}
    \caption{Cumulative regret from time $0$ to $T$ for $T=10^5$, $K=2$, $d=4$; ``GF" is  for Greedy-First.}
    \label{fig:cumu_reg}
\end{subfigure}
\hfill
\begin{subfigure}{0.48\textwidth}
\centering
    \includegraphics[width=6.5cm]{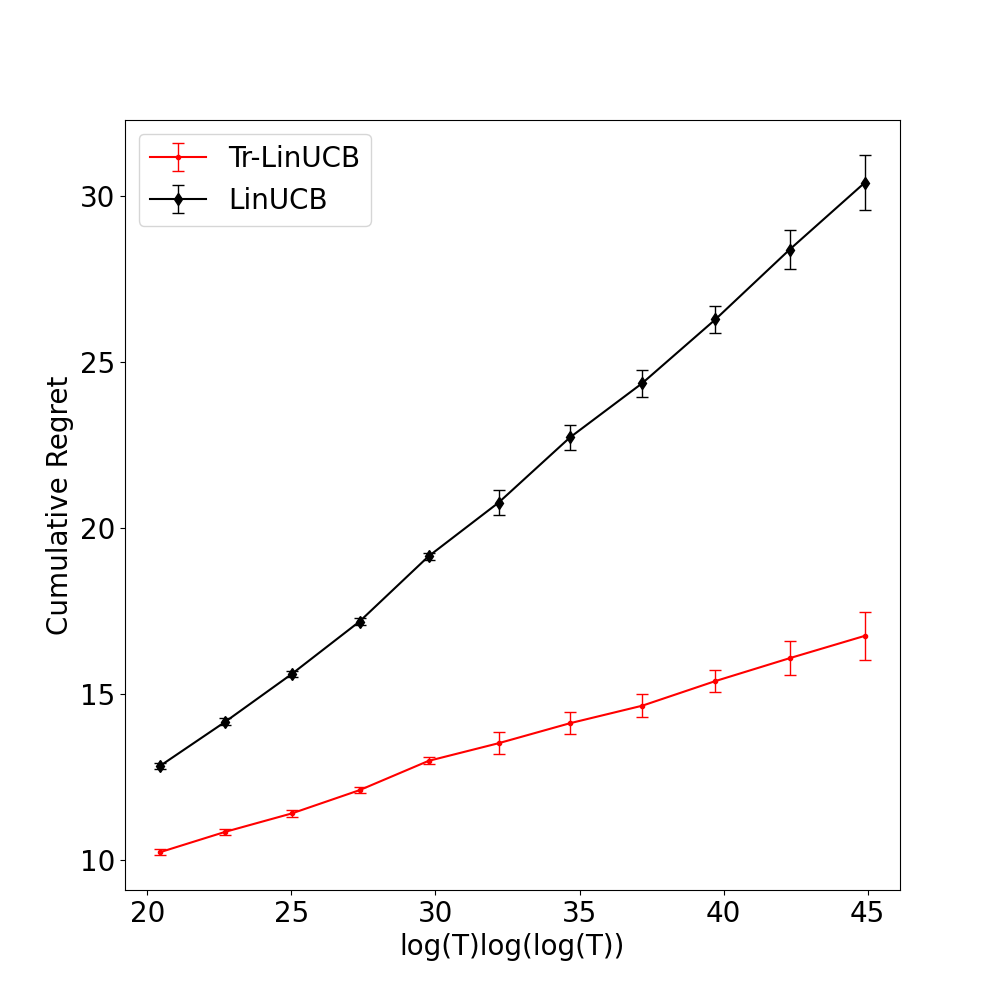}
    \caption{Cumulative regret $R_T$ with varying $T$ for problem instances in \ref{prob_instances_LinUCB}.}
    \label{fig:varyT}
\end{subfigure}
\caption{Cumulative regrets for different algorithms}
\label{mainRes}
\end{figure}



\begin{table}[!tb]
\centering
\begin{tabular}{ccccclcccc}
             & \multicolumn{4}{c}{$K=2$} &  & \multicolumn{4}{c}{$d=4$} \\ \cline{2-5} \cline{7-10} 
             & $d=4$        & $d=8$       & $d=15$       & $d=20$       &  & $K=5$        & $K=8$        & $K=10$       & $K=15$      \\ \cline{2-5} \cline{7-10} 
Tr-LinUCB    & 16.1       & 25.5      & 42.1       & 55.6       &  & 76.3       & 138.2      & 180.4      & 282.8     \\ \cline{1-5} \cline{7-10} 
LinUCB       & 25.7       & 31.7      & 46.3       & 57.9       &  & 113.6      & 198.0      & 250.8      & 366.9     \\ \cline{1-5} \cline{7-10} 
Greedy-First & 34.0       & 38.8      & 120.3      & 246.0      &  & 221.4      & 390.4      & 512.7      & 823.8     \\ \cline{1-5} \cline{7-10} 
OLS          & 43.0       & 62.6      & 111.9      & 219.0      &  & 197.8      & 351.0      & 481.4      & 749.1     \\ \hline
\end{tabular}
\smallskip
\caption{Cumulative regret $R_T$ for  algorithms with $T = 10^5$ and varying pairs of $K,d$.}
\label{tab:dk}
\end{table}

\noindent \textbf{Main Results.} In Table \ref{tab:dk}, we report the cumulative regret $R_T$ of the four algorithms with $T=10^5$ and varying pairs of $K$ and $d$. In Figure \ref{fig:cumu_reg}, we plot the cumulative regret over time (from $0$ to $T$)  of the four algorithms with $T=10^5$, $K=2$, and $d=4$. It is evident that, in terms of the cumulative regret, the proposed Tr-LinUCB algorithm performs favourably against others. Note that the gap between the performance of Tr-LinUCB and LinUCB gets smaller as the dimension $d$ increases. This does not contradict with our theoretical results, as we focus on the low dimensional regime, which requires $T$ to increase with $d$.

To compare the performance of Tr-LinUCB and LinUCB for large $T$, we consider problem instances in \ref{prob_instances_LinUCB} with $d=4$, $p=0.7$ and $\sigma^2 =0.25$. We plot the cumulative regret $R_T$ in Figure \ref{fig:varyT} for $T \in \{2^{i}\times10^4: i =0,\ldots,10\}$. Although we cannot conclude from the figure that the cumulative regret of LinUCB scales as $\log^2(T)$, the gap does become wider as $T$ increases.

\subsubsection{Sensitivity Analysis on Synthetic Data} \label{sen_ana_syn}
Next, we study the sensitivity of the algorithms to the tuning parameters, i.e., $\kappa$ in Tr-LinUCB, $q,h$ in OLS, and $c_0$ in Greedy-First, as discussed above. Note that we assume the noise variance $\sigma^2$ is known to Tr-LinUCB.

In Table \ref{tab:sen_all}, for $T = 10^5$, $K = 2$, and $d = 4$, we report the cumulative regret $R_T$ for the above three algorithms with different values of the tuning parameters. As expected, the proposed Tr-LinUCB algorithm is not too sensitive to overshooting, and in practice we recommend $S = Kd\log^2(T)$. On the other, the OLS algorithm is sensitive to both the exploration rate $q$ and sub-optimality gap $h$, and indeed $q=1$ and $h=5$ used in the above studies is a good configuration for OLS (for $T=10^5,K=2,d=4$). For the Greedy-First algorithm, it seems not too sensitive to the choice of $c_0$, but since it transits to OLS once it detects that the greedy algorithm fails, it inherits the same issue from OLS.

\begin{table}[!tb]
    \begin{subtable}[t]{1\textwidth}
        \centering
        \begin{tabular}{cccccccc}
         $\kappa=1.1$ & $\kappa=1.3$ & $\kappa=1.8$ & $\kappa=2.0$ & $\kappa=2.2$ & $\kappa=2.7$ & $\kappa=3.0$ & $\kappa=3.2$ \\ \hline
         16.9 & 15.9       & 16.0       & 16.5       & 16.8  & 18.4       & 19.6       & 20.9       \\         \hline
        \end{tabular}
       \caption{Tr-LinUCB with varying $\kappa$}
       \label{trlucb_sen}
    \end{subtable}
    
    \smallskip
    \begin{subtable}[t]{1\textwidth}
        \centering
        \begin{tabular}{cccccc}
        $c_0=0.5$ & $c_0=1.0$ & $c_0=5.0$ & $c_0=10.0$ & $c_0=20.0$ & $c_0=40.0$ \\ \hline
        42.3  & 39.2  & 30.5  & 31.6   & 35.6   & 37.7   \\ \hline
        \end{tabular}
        \caption{Greedy-First with varying $c_0$  ($q=1$ and $h=
        5$ for OLS)}
        \label{gf_sen}
     \end{subtable}

    \smallskip
    \begin{subtable}[t]{1\textwidth}
        \centering
        \begin{tabular}{ccccccccc}
        \multicolumn{4}{c}{$q = 1$}         &  & \multicolumn{4}{c}{$h = 5$}          \\ \cline{1-4} \cline{6-9} 
        $h = 1$     & $h = 3$    & $h = 5$    & $h = 9$    &  & $q = 2$    & $q = 3$    & $q = 5$     & $q = 9$     \\ \cline{1-4} \cline{6-9}
        239.5 & 44.9 & 39.2 & 38.4 &  & 32.3 & 78.8 & 117.3 & 191.7 \\ \hline
        \end{tabular}
        \caption{OLS with varying $q$ and $h$}
        \label{ols_sen}
     \end{subtable}
     \caption{Cumulative regret $R_T$ for different algorithms  with $K=2$, $d=4$, $T=10^5$.}
     \label{tab:sen_all}
\end{table}

\subsection{Real-World Data}

We now compare the performance of the proposed Tr-LinUCB algorithm with the other three competing algorithms on real-world datasets. As in \citet{bastani2021mostly}, we use the following healthcare-related datasets: (1)  Cardiotocography \footnote{\url{https://www.openml.org/search?type=data&sort=runs&id=1560&status=active}}, (2)  EEG \footnote{\url{https://archive.ics.uci.edu/ml/datasets/EEG+Eye+State}}, (3) EyeMovement \footnote{\url{https://www.openml.org/search?type=data&sort=runs&id=1044&status=active}}, and (4)  Warfarin dosing dataset \citep{international2009estimation, bastani2020online}.

\noindent \textbf{Problem Setup.} 
For the four datasets, we perform classification tasks using patient features, where the number of classes is treated as the number of arms $K$. For datasets (1)--(4), $K = 3, 2, 3, \text{ and } 3$, respectively. At each round $t \in [T]$, we observe a patient's features $X_t \in \bR^{d}$ and select an arm $A_t \in [K]$. We then receive a reward $Y_t \in \{0,1\}$, which equals 1 if $A_t$ matches the true label, and 0 otherwise.
The values of $(d,T)$ for datasets (1)--(4) are
$(35,2127), (14,14981), (27,10938), \text{ and } (93,5528)$. To ensure robustness, we conducted 100 trials with patients randomly permuted within each trial. We follow the same implementations and configurations of the Greedy-First and OLS algorithms as presented in \citet{bastani2021mostly}. Refer to our public codebase for details about the experiments for Tr-LinUCB and LinUCB presented in this section.


\noindent \textbf{Results.} 
We report the cumulative regret in Table~\ref{tab:Real} for four algorithms evaluated across four datasets. 
First, in both datasets (1) and (3), we observe that the proposed Tr-LinUCB algorithm outperforms the other methods by a substantial margin. 
For dataset (2), the Tr-LinUCB and Greedy-First algorithms exhibit similar performance. 
Compared to LinUCB, the cumulative regret is reduced by over 10\%.
Finally, for dataset (4), the OLS algorithm performs best, followed closely by Tr-LinUCB and Greedy-First. Notably, the class distribution is highly imbalanced, with 1835, 2992, and 701 patients in classes 0, 1, and 2, respectively. Due to limited data for class 2 during Tr-LinUCB's exploration phase, insufficient information may lead to higher regret during exploitation. 
Overall, the results of our experiments demonstrate the superiority of the Tr-LinUCB algorithm over existing methods in most cases and the crucial role of the truncation operation in mitigating the over-exploration problem. 

\begin{table}[!tb]
\centering
\begin{tabular}{c|c|c|c|c}
          & Cardiotocography (1)         & EEG (2)            & EyeMovement (3)     & Warfarin (4)        \\ \hline
Tr-LinUCB & \textbf{223.59} & \textbf{5398.16}         & \textbf{5715.49} & 2148.69          \\ \hline
Greedy-First        & 327.83          & 5412.10 & 6576.70           & 2143.40 \\ \hline
LinUCB      & 419.72                &  6056.62               &    6141.79              &   2190.20              \\ \hline
OLS       & 326.65          & 6012.60          & 6578.40           & \textbf{2122.1}          \\ \hline
\end{tabular}
\caption{Cumulative regret $R_T$ for different algorithms across four datasets, averaged over 100 trials.}
\label{tab:Real}
\end{table}

\subsubsection{Sensitivity Analysis on Real Data}
We now investigate the impact of the truncation time $S$, controlled by the tuning parameter $\kappa$, on the performance of Tr-LinUCB on real-world datasets. Cumulative regret is visualized as the fraction of misclassified samples at each time step $t \in [1, T]$. Figure~\ref{sen_realsim} provides a zoomed-in view over a shorter range for clarity.

As shown in Figure~\ref{sen_realsim}, the choice of $S$ has minimal effect on cumulative regret across all four datasets. This insensitivity to the tuning parameter is practically valuable and consistent with our theoretical findings.

\begin{figure}[!tb]
\centering
\begin{subfigure}{0.48\textwidth}
\centering
    \includegraphics[width=6.5cm]{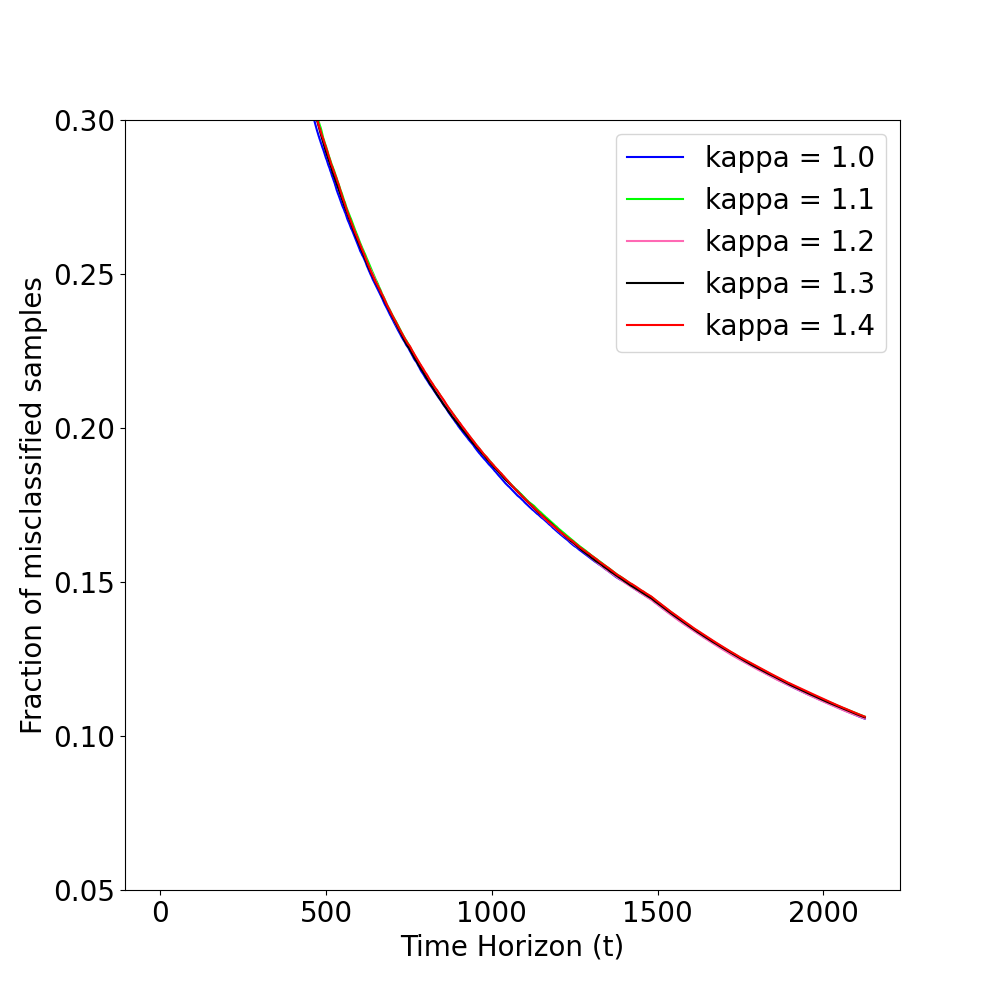}
    \caption{Dataset (1)}
    \label{fig:sen_ana_car}
\end{subfigure}
\hfill
\begin{subfigure}{0.48\textwidth}
\centering
    \includegraphics[width=6.5cm]{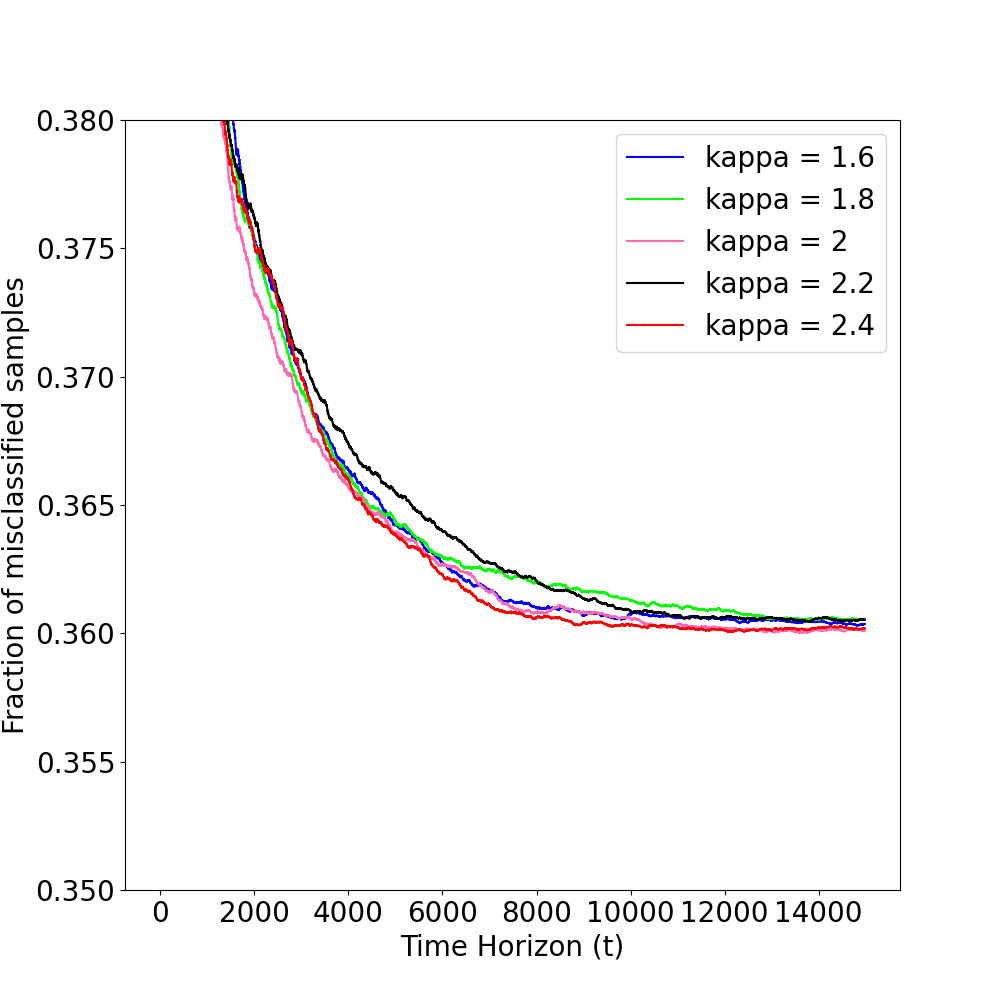}
    \caption{Dataset (2)}
    \label{fig:sen_ana_eeg}
\end{subfigure}
\begin{subfigure}{0.48\textwidth}
\centering
    \includegraphics[width=6.5cm]{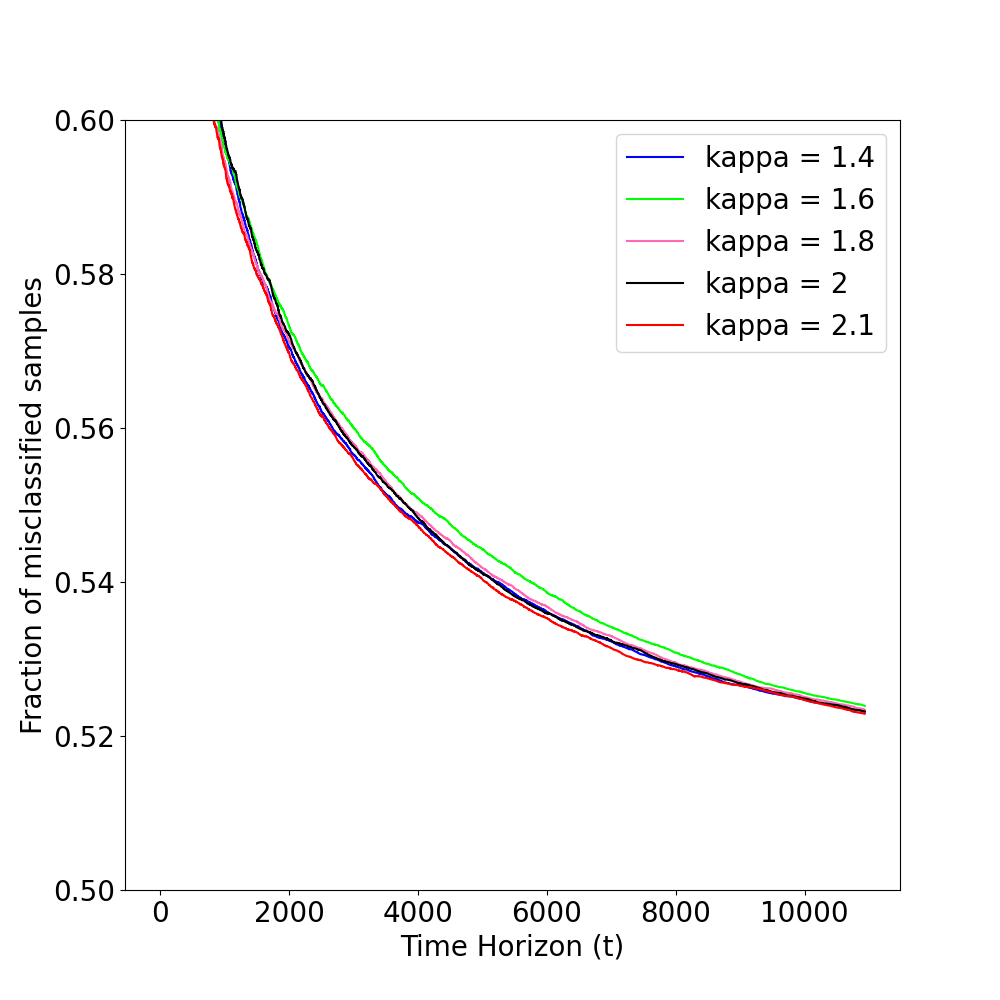}
    \caption{Dataset (3)}
    \label{fig:sen_ana_eye}
\end{subfigure}
\hfill
\begin{subfigure}{0.48\textwidth}
\centering
    \includegraphics[width=6.5cm]{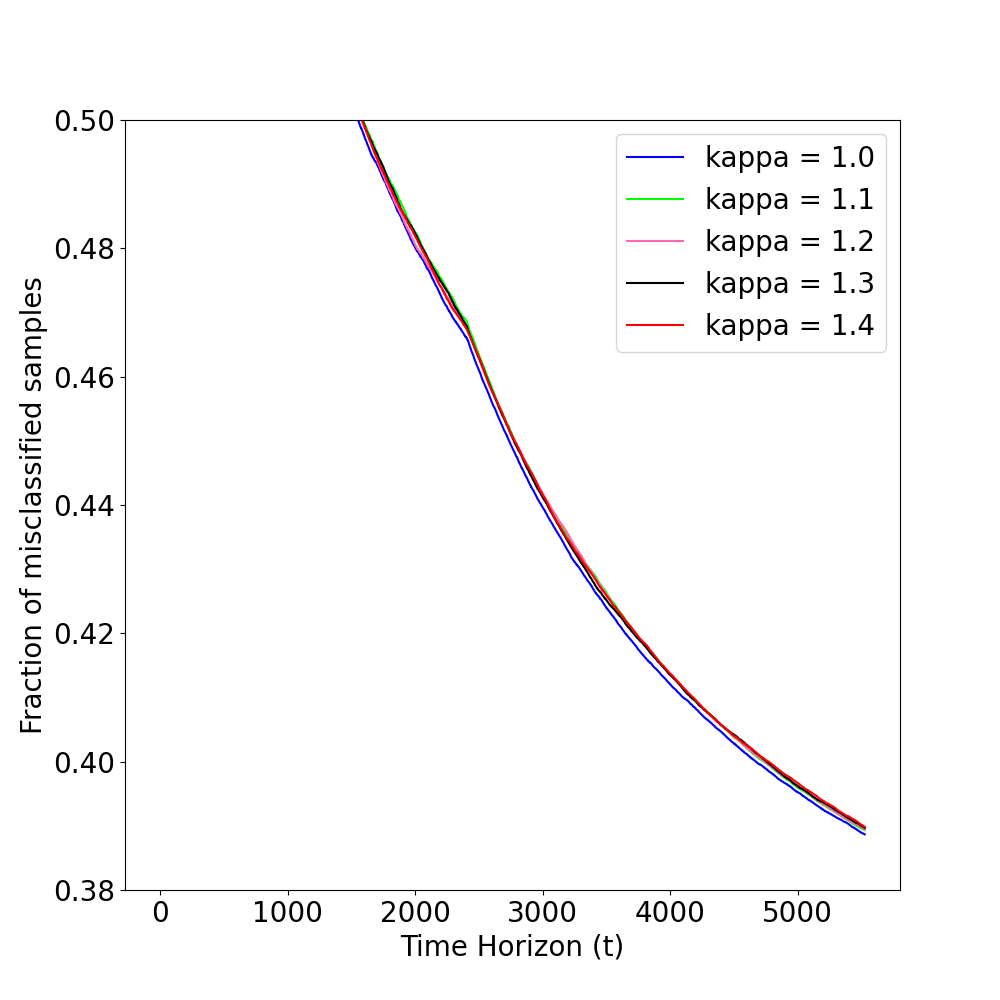}
    \caption{Dataset (4)}
    \label{fig:sen_ana_war}
\end{subfigure}
\caption{Sensitivity analysis of the tuning parameter $\kappa$ in Tr-LinUCB on  datasets (1)–(4).}
\label{sen_realsim}
\end{figure}

\section{Upper bound for Tr-LinUCB: proofs of  Theorem \ref{theorem:regret_main_T} and  \ref{theorem:regret_main_d}} \label{proof:regret_main_Td}
Recall  $\Upsilon_{d,T}$ in \eqref{chi_order}. First, we show that as long as the truncation time $S \geq C_0 \Upsilon_{d,T}$, for a large enough $C_0$, then with a high probability, at any time $t \geq C_0 \Upsilon_{d,T}$,  the smallest eigenvalues of the ``design" matrices are $\Omega(t)$. 
Thus, although the sequential decisions make the observations dependent across time, due to the i.i.d.~contexts, the Tr-LinUCB algorithm is able to accumulate enough information for each arm, that is of the same order as for independent observations.

Define, for each $t \in [T]$ and $k \in [2]$, the following events
\begin{align}\label{def:info_event}
    \cE_t^{(k)} = \{\lambda_{\min}\left( \bV_{t}^{(k)}\right) \geq 4^{-1}\ell_*^2 t\}, \;\;\text{ where }\;\; \ell_{*} := \min\{\ell_1,\ell_0\}/3.
\end{align}


\begin{lemma}\label{lemma:small_eignenvalues_anyt}
Assume that conditions \ref{assumption_parameter_noise}, \ref{cond:posdef} and \ref{cond:continuity} hold. There exists a constant $C_0 \geq 1$, depending only on $\mvTheta_0$, $\lambda$, such that if $S \geq C_0 \Upsilon_{d,T}$, then with probability at least $1-4d/T$, the event $\cap_{k=1}^{2}  \cE_t^{(k)}$ occurs for each  $t \geq C_0 \Upsilon_{d,T}$.
\end{lemma}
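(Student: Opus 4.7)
}
The strategy is to reduce the matrix-valued statement $\lambda_{\min}(\bV_t^{(k)}) \geq \ell_*^2 t/4$ to scalar concentration statements along a deterministic covering of $\cS^{d-1}$. A standard volumetric bound produces an $\varepsilon$-net $\cN \subset \cS^{d-1}$ with $|\cN| \leq (3/\varepsilon)^d$, and controlling $\mvu'\bV_t^{(k)}\mvu$ for every $\mvu\in\cN$ yields $\lambda_{\min}(\bV_t^{(k)})$ up to a loss $2\varepsilon\,\lambda_{\max}(\bV_t^{(k)}) \leq 2\varepsilon(\lambda + td m_X^2)$ by \ref{assumption_parameter_noise}. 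Taking $\varepsilon$ of order $1/d$ makes the loss negligible at the cost of inflating $|\cN|$ to order $d^d$; a slightly finer choice (depending mildly on $\log T$) matches exactly the $d^2\log(d\log T)$ term in $\Upsilon_{d,T}$.

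For fixed $\mvu\in\cN$, $k\in\{1,2\}$, and $t\in[T]$, write
\[
\mvu'\bV_t^{(k)}\mvu \;=\; \lambda \;+\; \sum_{s=1}^{t} Z_s, \qquad Z_s := (\mvu'\mvX_s)^2 I(A_s=k) \in [0,\, dm_X^2].
\]
Since $\Exp[Z_s^2\mid \cF_{s-1}] \leq dm_X^2\,\Exp[Z_s\mid \cF_{s-1}]$, Freedman's inequality applied to the martingale-difference sequence $\{Z_s - \Exp[Z_s\mid\cF_{s-1}]\}$ yields
\[
\bP\Bigl(\sum_{s=1}^{t} Z_s \;<\; \tfrac12 \sum_{s=1}^{t} \Exp[Z_s\mid\cF_{s-1}]\Bigr) \;\leq\; \exp(-c\,t/d).
\]
A union bound over $\mvu\in\cN$, $k\in\{1,2\}$, and $t\in[T]$ then gives failure probability at most $2T\cdot|\cN|\cdot \exp(-c t/d) \leq 4d/T$ once $t\geq C_0 \Upsilon_{d,T}$ with $C_0$ chosen large in $\mvTheta_0,\lambda$: the $d\log T$ piece comes from the $\log T$ factor in the time union coupled with the $1/d$ in the martingale exponent, and the $d^2\log(d\log T)$ piece absorbs the log of the net cardinality.

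The main obstacle is the uniform almost-sure lower bound $\Exp[Z_s\mid \cF_{s-1}] \geq c_* > 0$ on every $\mvu\in\cN$, because the selection event $\{A_s=k\}$ depends on the past in an algorithm-dependent way. I would treat the LinUCB phase ($s\leq S$) and the greedy phase ($s>S$) separately. In the LinUCB phase, restrict to the $\cF_{s-1}$-conditionally positive-probability event $\{\mvX_s\in \cU_{\ell_0}^{(k)}\}\cap\{|\mvu'\mvX_s|\geq \ell_1\}$ guaranteed by \ref{cond:posdef} and \ref{cond:continuity}. On this event Lemma \ref{lemma:LinUCB} forces a dichotomy: either the bonus $\sqrt{\beta_{s-1}^{(j)}}\|\mvX_s\|_{(\bV_{s-1}^{(j)})^{-1}}$ of the competing arm $j$ is at most $\ell_0/2$, in which case the $\ell_0$-margin and the one-sided UCB validity force $A_s=k$; or the bonus of arm $j$ exceeds $\ell_0/2$, in which case $\mvX_s$ points in a direction where $\bV_{s-1}^{(j)}$ is small, i.e., arm $j$ is under-sampled along $\mvX_s$, so a symmetric argument applied to $j$ yields the analogous lower bound for arm $j$; combining the two gives the claimed $c_*$ for each specific $k$. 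In the greedy phase, Lemma \ref{lemma:LinUCB} together with the inductive hypothesis $\lambda_{\min}(\bV_{s-1}^{(k)})\geq \ell_*^2 (s-1)/4$ gives the ridge-error bound $\|\hat\mvtheta_{s-1}^{(k)}-\mvtheta_k\| \leq \ell_0/(4 m_X\sqrt d)$ for $s\geq C_0\Upsilon_{d,T}$, so the greedy rule agrees with the oracle on $\cU_{\ell_0}^{(k)}$ and the lower bound follows directly from \ref{cond:posdef} and \ref{cond:continuity}.

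The circularity in the greedy-phase argument, which presupposes the very conclusion we are proving, I would close by running the proof inductively over dyadic time-windows $t\in[2^i,2^{i+1})$: the high-probability bound from window $i$ validates the estimation assumption at the start of window $i+1$, and the $O(\log_2 T)$ union over windows is absorbed into the $\log T$ factor already present. The hardest technical work is the LinUCB-phase dichotomy and its symmetrization over the two arms, where the adaptivity of the selection rule must be neutralized without sacrificing the constant $c_*$.
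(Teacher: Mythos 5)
Your high-level architecture (a net over $\cS^{d-1}$, martingale concentration for $\mvu'\bV_t^{(k)}\mvu$, induction over time windows) is reasonable, and you correctly isolate the crux: a per-arm, per-time lower bound $\Exp[(\mvu'\mvX_s)^2 I(A_s=k)\mid\cF_{s-1}]\geq c_*$. But your proposed resolution of that crux --- the ``dichotomy'' in the LinUCB phase --- does not deliver it. On the event $\{\mvX_s\in\cU_{\ell_0}^{(k)}\}$ your first branch (competing bonus $\leq\ell_0/2$, hence $A_s=k$) is fine, but your second branch concludes only that arm $j$ is under-sampled along $\mvX_s$; this says nothing about arm $k$ being selected and nothing about $\mvu'\bV_{s}^{(k)}\mvu$ growing, and ``applying the symmetric argument to $j$'' merely produces the mirror-image dichotomy, so the two branches never combine into a lower bound for a \emph{fixed} $k$ at a fixed time $s$. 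What the dichotomy genuinely yields is an ``at least one arm'' statement --- essentially $\max_{k}\lambda_{\min}(\bV_t^{(k)})=\Omega(t)$ --- and the passage from ``one arm'' to ``both arms'' is exactly what is missing from your argument. (A secondary point: with $\varepsilon\asymp 1/d$ the net loss $2\varepsilon\lambda_{\max}(\bV_t^{(k)})\leq 2\varepsilon(\lambda+tdm_X^2)\approx 2tm_X^2$ is of the same order as the target $\ell_*^2t/4$, so $\varepsilon$ must additionally be small in $\ell_*/m_X$; this is fixable but not ``negligible'' as stated.)

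The paper closes the real gap with a two-stage bootstrap for which your plan has no substitute. Stage one (Lemma \ref{lemma:cst_adaptive}) proves the ``one arm'' statement unconditionally via the pigeonhole identity $\mvu'\bV_t^{(1)}\mvu+\mvv'\bV_t^{(2)}\mvv\geq\ell_1^2\sum_{s\le t}I(|\mvu'\mvX_s|\geq\ell_1,\ |\mvv'\mvX_s|\geq\ell_1)$, which holds pathwise because $I(A_s=1)+I(A_s=2)=1$; this sidesteps any conditional lower bound and reduces to an empirical-process bound over i.i.d.\ contexts (Talagrand plus a VC entropy bound, playing the role of your net). Stage two then argues: once arm $k$ satisfies $\lambda_{\min}(\bV_{T_0}^{(k)})=\Omega(T_0)$, for $t\in(T_0,2T_0]$ its index is within $\ell_0/4$ of $\mvtheta_k'\mvX_t$ while optimism keeps $\text{UCB}_t(\bar k)\geq\mvtheta_{\bar k}'\mvX_t$, so arm $\bar k$ is forced whenever $\mvX_t\in\cU_{\ell_0}^{(\bar k)}$; a matrix Chernoff bound over the window $(T_0,2T_0]$ together with \ref{cond:posdef} then gives $\lambda_{\min}(\bV_{2T_0}^{(\bar k)})=\Omega(T_0)$, and the same mechanism powers the induction over subsequent windows, both before and after the truncation time $S$. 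If you wish to salvage your Freedman route you would have to establish the conditional lower bound only for times at which both design matrices are already large --- that is, you would be forced to reproduce this bootstrap anyway; your dyadic-window induction addresses the greedy-phase circularity but not the base case of getting \emph{both} arms well conditioned by time $2T_0$.
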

\begin{proof}
We present the proof, as well as discussions on the strategy, in  Section \ref{section:proof_key_lemma}.
\end{proof}

Second, we show that if the smallest eigenvalues of the ``design" matrices are large,  the estimation of arm parameters is accurate. 
In the following lemma, for each arm, the first result  establishes an \textit{exponential} bound on the tail probability of the estimation error, $\|\hat \mvtheta_t^{(k)} - \mvtheta_k\|$, while the second result provides an upper bound on its  \textit{second moment}.  


\begin{lemma}\label{lemma:beta_V}
Assume that the condition \ref{assumption_parameter_noise}  holds. Then there exists a constant $C_2 \geq 1$, depending only on $\mvTheta_0$, $\lambda$, such that  for any $t \in [T]$, $k \in [2]$,   $\tau \geq 0$, 
\begin{align*}    &\bP(\|\hat \mvtheta_t^{(k)} - \mvtheta_k \| \geq C_2 (d\log(2d)/t)^{1/2}\tau,\;\; \mathcal{E}_{t}^{(k)}) \leq 2\exp(-\tau^2),
\\    &\Exp\left[\|\hat \mvtheta_t^{(k)} - \mvtheta_k\|^2 I( \mathcal{E}_{t}^{(k)}) \right] \leq C_2 d/t.
\end{align*}
\end{lemma}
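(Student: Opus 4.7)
The plan is to reduce both displays to bounds on the martingale
$\mvZ_t^{(k)} := \sum_{s=1}^{t}\mvX_s\epsilon_s^{(k)}I(A_s=k)$.
The ridge normal equation $\bV_t^{(k)}\hat{\mvtheta}_t^{(k)}=\mvU_t^{(k)}$, together with $Y_s=\mvtheta_k'\mvX_s+\epsilon_s^{(k)}$ on $\{A_s=k\}$, yields
\[
\hat{\mvtheta}_t^{(k)}-\mvtheta_k \;=\; (\bV_t^{(k)})^{-1}\bigl[\mvZ_t^{(k)}-\lambda\mvtheta_k\bigr].
\]
On $\mathcal{E}_t^{(k)}$ the operator norm satisfies $\|(\bV_t^{(k)})^{-1}\|_{\op}\leq 4/(\ell_*^2 t)$, so $\|\hat{\mvtheta}_t^{(k)}-\mvtheta_k\|\leq (4/\ell_*^2 t)(\|\mvZ_t^{(k)}\|+\lambda m_\theta)$; the deterministic $\lambda m_\theta$ piece contributes only $O(1/t)$ and is absorbed by the stated bounds.

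For the second-moment bound, martingale orthogonality does the work. Because $\mvX_s I(A_s=k)$ is $\cF_{(s-1)+}$-measurable while $\Exp[\epsilon_s^{(k)}\mid\cF_{(s-1)+}]=0$, all cross terms in $\Exp[\|\mvZ_t^{(k)}\|^2]$ vanish, leaving
\[
\Exp[\|\mvZ_t^{(k)}\|^2] \;=\; \sum_{s=1}^{t}\Exp\bigl[\|\mvX_s\|^2(\epsilon_s^{(k)})^2 I(A_s=k)\bigr] \;\leq\; dm_X^2\sigma^2 t,
\]
using $\|\mvX_s\|\leq\sqrt{d}m_X$ and the conditional variance bound $\Exp[(\epsilon_s^{(k)})^2\mid\cF_{(s-1)+}]\leq\sigma^2$ from \ref{assumption_parameter_noise}. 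Squaring the operator-norm inequality from the previous paragraph and taking expectations on $\mathcal{E}_t^{(k)}$ then produces the required $Cd/t$ bound.

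For the tail bound, I combine per-direction sub-Gaussian martingale concentration with an $\epsilon$-net on $\cS^{d-1}$. For each fixed $\mvv\in\cS^{d-1}$, the scalar martingale $\{\mvv'\mvZ_s^{(k)}\}$ has increments that are conditionally $\sigma|\mvv'\mvX_s|$-sub-Gaussian, and its predictable quadratic variation $\sum_{s\leq t}(\mvv'\mvX_s)^2 I(A_s=k)$ is bounded almost surely by $dm_X^2 t$. The standard exponential-supermartingale argument therefore gives $\bP(|\mvv'\mvZ_t^{(k)}|\geq u)\leq 2\exp(-u^2/(2\sigma^2 dm_X^2 t))$. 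Taking a $\tfrac{1}{2}$-net $\mathcal{N}\subset\cS^{d-1}$ with $|\mathcal{N}|\leq 5^d$, so that $\|\mvZ_t^{(k)}\|\leq 2\max_{\mvv\in\mathcal{N}}|\mvv'\mvZ_t^{(k)}|$, and applying a union bound with $u^2\asymp\sigma^2 dm_X^2 t(\tau^2+d\log 5)$, one obtains $\bP(\|\mvZ_t^{(k)}\|\geq C\sqrt{dt(\tau^2+d\log 5)})\leq 2e^{-\tau^2}$. The target $\sqrt{d\log(2d)/t}\,\tau$ form then follows from $\sqrt{\tau^2+d\log 5}\leq C'\sqrt{\log(2d)}(\tau\vee 1)$ after dividing by $t$ and enlarging $C_2$; the residual small-$\tau$ regime, where the bound $2e^{-\tau^2}$ is close to $1$, is covered either trivially or by invoking the second-moment Markov bound already established.

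The main subtlety is the coupling between $\bV_t^{(k)}$ and $\mvZ_t^{(k)}$ induced by the adaptive selection $A_s$, which precludes a direct conditional Gaussian analysis. The argument sidesteps this via a clean separation of roles: $\mathcal{E}_t^{(k)}$ enters only through the deterministic operator-norm bound on $(\bV_t^{(k)})^{-1}$, while the concentration of $\|\mvZ_t^{(k)}\|$ is carried out purely by martingale methods using the almost-sure bound $\|\mvX_s\|\leq\sqrt{d}m_X$, with no conditioning on $\mathcal{E}_t^{(k)}$ required.
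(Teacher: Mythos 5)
Your decomposition $\hat{\mvtheta}_t^{(k)}-\mvtheta_k=(\bV_t^{(k)})^{-1}[\mvZ_t^{(k)}-\lambda\mvtheta_k]$, the use of $\mathcal{E}_t^{(k)}$ only through the operator-norm bound on $(\bV_t^{(k)})^{-1}$, and the martingale-orthogonality proof of the second-moment bound all coincide with the paper's argument, and that part is correct. The gap is in the tail bound. The paper treats $\mvDelta_s=\mvX_s I(A_s=k)\epsilon_s^{(k)}$ as a norm-sub-Gaussian vector martingale difference sequence, with $\bP(\|\mvDelta_s\|\geq u\mid\cF_{s-1})\leq 2\exp(-u^2/(2dm_X^2\sigma^2))$, and invokes a Hoeffding-type inequality for such sequences \citep[Corollary 7]{jin2019short}, which gives $\|\sum_{s\leq t}\mvDelta_s\|\leq C(\sqrt{dt\log d}+\tau\sqrt{dt})$ with probability $1-2e^{-\tau^2}$; the additive term $\sqrt{dt\log d}$ is what produces the $\sqrt{d\log(2d)/t}\,\tau$ rate after absorbing it into $\tau\sqrt{dt}$ for $\tau\geq\sqrt{\log 2}$.

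Your $\tfrac12$-net over $\cS^{d-1}$ cannot reach this rate. Each direction $\mvv$ only admits the predictable quadratic variation bound $\sum_{s\leq t}(\mvv'\mvX_s)^2I(A_s=k)\leq dm_X^2t$, so the per-direction deviation is of order $\sqrt{dt}$; paying for the union over $|\mathcal{N}|\leq 5^d$ points forces $u^2\gtrsim \sigma^2dm_X^2t(\tau^2+d\log 5)$, hence $\|\mvZ_t^{(k)}\|\lesssim \sqrt{dt}\,(\sqrt{d}+\tau)$, i.e.\ a free term of order $d\sqrt{t}$ rather than $\sqrt{dt\log d}$. Your bridging inequality $\sqrt{\tau^2+d\log 5}\leq C'\sqrt{\log(2d)}(\tau\vee 1)$ is false for any dimension-independent $C'$: at $\tau=1$ the left side is of order $\sqrt{d}$ while the right side is of order $\sqrt{\log d}$. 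What your argument actually proves is $\bP(\|\hat\mvtheta_t^{(k)}-\mvtheta_k\|\geq C(d^2/t)^{1/2}\tau,\ \mathcal{E}_t^{(k)})\leq 2e^{-\tau^2}$, which is weaker by a factor $\sqrt{d/\log(2d)}$ and would degrade the $d^2\log(2d)$ factor in Theorem~\ref{theorem:regret_main_T} to $d^3$. To close the gap you need a concentration inequality for the Euclidean norm of a sum of norm-sub-Gaussian martingale differences whose dimension cost is logarithmic (e.g.\ the cited result of Jin et al., or a matrix-Freedman-type argument), not a sphere net.
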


\begin{proof}
See Appendix \ref{proof:lemma:beta_V}.
\end{proof}

Next, we  prove Theorem \ref{theorem:regret_main_T}, by considering the three periods of the Tr-LinUCB algorithm.  Note that the peeling argument for the period  \textit{after the truncation time} $S$ is similar to that in \citet{bastani2021mostly}, but uses an improved exponential tail bound in Lemma \ref{lemma:beta_V}.

\begin{proof}[Proof of Theorem \ref{theorem:regret_main_T}]
In this proof, $C$ is a constant, depending only on $\mvTheta_0$ and $\lambda$, that may vary from line to line.  Let $C_0$ be the constant in 
 Lemma \ref{lemma:small_eignenvalues_anyt}, and
recall that $S_0 = \lceil C_0 \Upsilon_{d,T} \rceil$ with 
$\Upsilon_{d,T}$ defined in \eqref{chi_order}, and that the truncation time $S \geq S_0$. For each $t \in [T]$, define
\begin{align*}
    \tilde{\cE}_{t} = \cap_{k=1}^{2} \{\ \|\hat \mvtheta_{t}^{(k)} - \mvtheta_{k}\|_{\bV_{t}^{(k)}} \leq \sqrt{\beta_{t}^{(k)}},\;\;\; \cE_{t}^{(k)} \},
\end{align*}
where the event $\cE_{t}^{(k)}$ is defined in \eqref{def:info_event}. 
 By Lemma \ref{lemma:LinUCB} and \ref{lemma:small_eignenvalues_anyt},   with probability at least $1-(2+4d)/T$, the event $\tilde{\cE}_{t}$ occurs for each $t \geq S_0$. First, we consider the expected instant regret at time $t$, $\hat{r}_t$ in \eqref{def:regret}, for some fixed $t \in [T]$.\\

\noindent \underline{Case 1: $t \leq {S}_0$. } Due to the condition \ref{assumption_parameter_noise}, $\Exp[\hat{r}_t] \leq \Exp[|\mvtheta_1 - \mvtheta_2)'\mvX_t|] \leq 2  m_{R}$.\\

\noindent \underline{Case 2:  $S_0 < t \leq S$. }  For each $k \in [2]$, since $t \leq S$, i.e., prior to truncation,
\begin{align*}
\{A_t= k\} =  \{(\hat \mvtheta_{t-1}^{(k)})' \mvX_t + \sqrt{\beta_{t-1}^{(k)}} \|\mvX_t\|_{(\bV_{t-1}^{(k)})^{-1}} \geq  (\hat \mvtheta_{t-1}^{(\bar k)})' \mvX_t + \sqrt{\beta_{t-1}^{(\bar k)}} \|\mvX_t\|_{(\bV_{t-1}^{(\bar k)})^{-1}},
\}
\end{align*}
where  $\bar{k} = 3 -k$, i.e., $\bar{k} = 1$ (resp.~$2$) if $k=2$ (resp.~$1$). As a result, on the event $\{A_t = k\} \cap \tilde{\cE}_{t-1}$,  the ``potential regret" $\mvtheta_{\bar k}'\mvX_t - \mvtheta_{k}'\mvX_t$ can be upper bounded by
\begin{align*}
&(\mvtheta_{\bar{k}} - \hat \mvtheta_{t-1}^{(\bar{k})})' \mvX_t - (\mvtheta_{{k}}- \hat \mvtheta_{t-1}^{({k})})' \mvX_t + \sqrt{\beta_{t-1}^{(k)}} \|\mvX_t\|_{(\bV_{t-1}^{(k)})^{-1}} - \sqrt{\beta_{t-1}^{(\bar k)}} \|\mvX_t\|_{(\bV_{t-1}^{(\bar k)})^{-1}}\\
\leq & 2\sqrt{\tilde \beta_{t-1}} \left(\|\mvX_t\|_{(\bV_{t-1}^{(k)})^{-1}} +
\|\mvX_t\|_{(\bV_{t-1}^{(\bar k)})^{-1}}
\right) \leq C (\sqrt{\log(T) + d\log(t)} (d/t)^{1/2} := \tilde{\delta}_0,
\end{align*}
where recall the definition of $\tilde \beta_{t-1}$ in \eqref{eq:uuper_beta_t}, and the last inequality is because $\|\mvX_t\| \leq C \sqrt{d}$ by the condition   \ref{assumption_parameter_noise}, and $\lambda_{\min}\left( \bV_{t-1}^{(i)}\right) \geq C^{-1} t$ for $i \in [2]$ due to the definition of the event  $\tilde{\cE}_{t-1}$. As a result,  the regret, if incurred, is at most $\tilde{\delta}_0$, which implies that
\begin{align*}
   \Exp[\hat r_t] \;\;\leq \;\; & 2 \sqrt{d} m_{X} m_{\theta} \bP(\tilde{\cE}_{t-1}^c)  + \sum_{k=1}^{2} \tilde{\delta}_0 \bP(\{A_t = k\} \cap \{0 \leq \mvtheta_{\bar k}'\mvX_t - \mvtheta_{k}'\mvX_t \leq \tilde{\delta}_0\} \cap \tilde{\cE}_{t-1})\\
   \leq & \;\;Cd^{1.5}/T + \sum_{k=1}^{2} \tilde{\delta}_0 \bP(| \mvtheta_{\bar k}'\mvX_t - \mvtheta_{k}'\mvX_t| \leq \tilde{\delta}_0\}) \leq
   C d^{1.5}/T + C \tilde{\delta}_0^2,
\end{align*}
where the last inequality is due to the condition \ref{cond:margin}. Thus 
$$\Exp[\hat r_t] \leq  Cd\log(T)/t + Cd^{2}\log(t)/t.$$

\noindent \underline{Case 3:  $t > S$. } 
Let $C_2$ be the constant in Lemma \ref{lemma:beta_V} and  $\delta_0 := \sqrt{d}m_X C_2 (d\log(2d)/t)^{1/2}$, and
for $k \in [2]$ and $n \in \bN$, 
$
D_{n,k} :=\{2n\delta_0 < \mvtheta_{\bar{k}}' \mvX_t - \mvtheta_{{k}}' \mvX_t \leq 2(n+1)\delta_0\}
$, the event that arm $\bar{k}$ is better than the arm $k$ by an amount between $(2n\delta_0,2(n+1)\delta_0]$.

A regret is incurred if the arm $k$ is selected, but the arm $\bar{k}$ is in fact better. Thus we have the following:
$\hat{r}_t \leq 2 \sqrt{d} m_{X} m_{\theta}  I(\tilde{\cE}_{t-1}^c) + \sum_{k \in [2]} \sum_{n \in \bN} 2(n+1)\delta_0 I(A_{t} = k ,\, D_{n,k}, \, \tilde{\cE}_{t-1})$. Since $\|\mvX_t\| \leq \sqrt{d}m_X$
due to the condition \ref{assumption_parameter_noise} and by the definition of $\delta_0$, for each $k \in [2]$,
\begin{align*}
&\{A_t = k\}  \cap  D_{n,k} \;\;\subset\;\; \{(\hat \mvtheta_{t-1}^{(k)})' \mvX_t \geq (\hat \mvtheta_{t-1}^{(\bar{k})})' \mvX_t,\;  2n\delta_0 < \mvtheta_{\bar{k}}' \mvX_t - \mvtheta_{{k}}' \mvX_t\} \cap D_{n,k} \\ 
    \subset\;\; &\{ (\mvtheta_{\bar{k}} - \hat \mvtheta_{t-1}^{(\bar{k})})' \mvX_t - (\mvtheta_{{k}}- \hat \mvtheta_{t-1}^{({k})})' \mvX_t > 2n\delta_0\} \cap  D_{n,k} \\
 \subset\;\; &\left( \cup_{k \in [2]}\  \left\{\left\|\mvtheta_{k} - \hat \mvtheta_{t-1}^{(k)} \right\| \geq  C_2 (d\log(2d)/t)^{1/2} n \right\}\right) \; \cap \; D_{n,k}.
\end{align*}
Since $\mvX_t$, and thus $D_{n,k}$, is independent from $\cF_{t-1}$, and both $\hat \mvtheta_{t-1}^{(k)}$ and $\tilde{\cE}_{t-1}$ are $\cF_{t-1}$ measurable,  by  Lemma \ref{lemma:beta_V}, for each $n \in \bN$,
\begin{align*}
\bP(A_{t} = k ,\, D_{n,k}, \, \tilde{\cE}_{t-1}) \;\; \leq \;\;  4e^{-n^2}\bP(D_{n,k}) \;\; \leq \;\;4e^{-n^2} (L_0  2(n+1)\delta_0),
\end{align*}
where the last inequality is due to the condition \ref{cond:margin}.
Thus we have
\begin{align*}
    \Exp[\hat{r}_t] &\leq Cd^{1.5}/T + C   \delta_0^2 \sum_{n=0}^{\infty}  (n+1)^2 e^{-n^2} \leq C d^{2}\log(2d)/t.
\end{align*}

\noindent \underline{Sum over $t \in [T]$. } Now we combine the three cases. 
For integers $m > n \geq 3$, $\sum_{s=n+1}^{m} s^{-1} \leq \log(m/n)$, and  $\sum_{s=n+1}^{m} \log(s)/s \leq \log(m/n)\log(m)$. Thus
\begin{align*}
    R_T \leq & C  S_0 + C \sum_{t=S_0+1}^{S} (d \log(T)/t + d^{2}\log(t)/t) + C \sum_{t=S+1}^{T} d^{2}\log(2d)/t\\
    \leq & C  S_0 + C d \log(T)\log(S/S_0) + Cd^{2} \log(S)\log(S/S_0) + Cd^{2}\log(2d)\log(T/S),
\end{align*}
which completes the proof.
\end{proof}

Finally, we prove Theorem \ref{theorem:regret_main_d}, which relies on the condition \ref{cond:unit_sphere} and the second result in Lemma \ref{lemma:beta_V} for the period  \textit{after the truncation time} $S$. In Figure \ref{fig:regret_analysis}, we depict the order of expected  instant regret within each of the three periods.

\begin{proof}[Proof of Theorem \ref{theorem:regret_main_d}]
In this proof, $C$ is a constant, depending only on $\mvTheta_1$, $\lambda$, that may vary from line to line.  Let $C_0$ be the constant in 
 Lemma \ref{lemma:small_eignenvalues_anyt}, and
recall that $S_0 = \lceil C_0 \Upsilon_{d,T} \rceil$ with 
$\Upsilon_{d,T}$  in \eqref{chi_order}, and that the truncation time $S \geq S_0$. Recall the definition of ${\cE}_t^{(k)}$ in \eqref{def:info_event}, and by  Lemma \ref{lemma:small_eignenvalues_anyt}, with probability at least $1-4d/T$, the event ${\cE}_{t}^{(1)} \cap {\cE}_{t}^{(2)}$ occurs for each $t \geq S_0$. As in the proof of Theorem \ref{theorem:regret_main_T}, first, we consider the expected instant regret at time $t$, $\hat{r}_t$ in \eqref{def:regret}, for some fixed $t \in [T]$.

If $t \leq S_0$, by the condition \ref{assumption_parameter_noise}, $\Exp[\hat{r}_t] \leq C$. For  $S_0 < t \leq S$, in the proof of Theorem \ref{theorem:regret_main_T} above, we have shown that $\Exp[\hat{r}_t] \leq C d\log(T)/t + Cd^{2}\log(t)/t$.\\

Now we focus on $t > S$. Let
$\mvDelta = \mvtheta_1 - \mvtheta_2$ and $\hat \mvDelta_{t-1} = \hat \mvtheta_{t-1}^{(1)}-\hat \mvtheta_{t-1}^{(2)}$. 
By the condition \ref{cond:unit_sphere}, $\|\mvDelta\|\geq L_1^{-1}$. Note that $\mvX_t$ is independent from $\cF_{t-1}$, and that $\hat \mvDelta_{t-1}$ and $\cE_{t-1}^{(k)}, k \in [2]$ are both $\cF_{t-1}$-measurable. Then, due to \ref{assumption_parameter_noise}, for each $k \in [2]$,
\begin{align*}
    \Exp[\hat{r}_t \ I((\cE_{t-1}^{(k)})^c)] \; \leq \; \Exp[|(\mvtheta_1-\mvtheta_2)'\mvX_t|] \ \bP((\cE_{t-1}^{(k)})^c) \;\leq \; C d/T.
\end{align*}
Further, on the event  $\{\hat \mvDelta_{t-1} \neq \boldsymbol{0}_d\}$, 
by the condition \ref{cond:unit_sphere} with   $\mvv =\hat \mvDelta_{t-1}/\|\hat \mvDelta_{t-1}\|$,
\begin{align*}
&\Exp[\hat{r}_t \vert \cF_{t-1}]   = \|\mvDelta\|\times \Exp[|\mvu_{*}'\mvX_{t}|I(\text{sgn}(\mvu_{*}' \mvX_t) \neq \text{sgn}(\mvv' \mvX_t))\, \vert \, \cF_{t-1}] \\
\leq &  C    \|\mvDelta\| \left\| \frac{\mvDelta}{\|\mvDelta\|} - \frac{\hat \mvDelta_{t-1}}{\|\hat \mvDelta_{t-1}\|} \right\|^2
\leq C  \|\mvDelta\|^{-1} \|\mvDelta-\hat \mvDelta_{t-1}\|^2,
\end{align*}
where the last inequality is due to Lemma \ref{lemma:triangle} in Appendix \ref{app:elementary_lemmas}. 
On the event  $\{\hat \mvDelta_{t-1} = \boldsymbol{0}_d\}$,  $\Exp[\hat{r}_t\vert \cF_{t-1}] \leq C  \|\mvDelta-\hat \mvDelta_{t-1}\|^2$ due to conditions \ref{assumption_parameter_noise} and \ref{cond:unit_sphere} (i.e.,~$\|\mvDelta\|\geq L_1^{-1}$).  
Thus, 
\begin{align*}
    \Exp[\hat r_t] \leq& Cd/T + C   \Exp[ \|\mvDelta-\hat \mvDelta_{t-1}\|^2 I({\cE}_{t-1}^{(1)} \cap {\cE}_{t-1}^{(2)})] \\
    \leq& Cd/T + C  \sum_{k=1}^{2}\Exp[\|\hat \mvtheta_{t-1}^{(k)}-\mvtheta_k\|^2  I(\cE_{t-1}^{(k)} )]\leq  C d/t.
\end{align*}
Combining three cases, by a similar calculation as before, we have
\begin{align*}
      R_T \leq & C  S_0 + C \sum_{t=S_0+1}^{S} (d\log(T)/t + d^{2}\log(t)/t) + C \sum_{t=S+1}^{T} d /t\\
    \leq & C d \log(T)\log(2S/S_0) +  C d^2 \log(S)\log(2S/S_0),
\end{align*}
where the last line is due to the definition of $\Upsilon_{d,T}$ in \eqref{chi_order}.  Then the proof is complete.
\end{proof}

\subsection{Proof of Lemma \ref{lemma:small_eignenvalues_anyt}}\label{section:proof_key_lemma}

We preface the proof with a discussion on the strategy. First, we show that for a large enough $C$, at time  $T_0 = \left\lceil C \Upsilon_{d,T} \right\rceil$, \textit{at least one arm} has accumulate enough information, in the sense that the smallest eigenvalue of its ``design matrix" is $\Omega(T_0)$. This fact is due to condition  \ref{cond:continuity}, and stated formally in Lemma \ref{lemma:cst_adaptive}.

Second, if the truncation time $S \geq 2T_0$, we show that at time $2T_0$, both arms have accumulated enough information, that is, the smallest eigenvalues of both ``design matrices" are $\Omega(T_0)$. To gain intuition,  assume that at time $T_0$, it is the first arm that can be accurately estimated, i.e., $\lambda_{\min}\left( \bV_{T_0}^{(1)}\right)  \geq cT_0$ for some $c>0$. Then for $t \in (T_0,2T_0]$, the upper confidence bound $\text{UCB}_t(1)$ is closed to $\mvtheta_1'\mvX_t$. Due to Lemma \ref{lemma:LinUCB}, $\text{UCB}_t(2) \geq \mvtheta_2'\mvX_t$ with a large probability, and thus if $\mvX_t \in \cU_{c'}^{(2)}$ for some small $c' > 0$, which happens with a positive probability for each $t$ due to the condition \ref{cond:posdef}, then the second arm would be chosen by definition.

Finally, we use induction to show that at any time $t \geq 2T_0$, the smallest eigenvalues of the ``design matrices" are at least  $\Omega(t)$, ``bootstrapping" the result at time $2T_0$, which would conclude the proof.

Recall $\tilde{\beta}_t$ in \eqref{eq:uuper_beta_t},  $\Upsilon_{d,T}$ in \eqref{chi_order}, and  $\ell_{*} = \min\{\ell_1,\ell_0\}/3$.

\begin{proof}[Proof of Lemma \ref{lemma:small_eignenvalues_anyt}]
\underline{Step 1.} By Lemma 
\ref{lemma:LinUCB},  and Lemma \ref{lemma:cst_adaptive}, \ref{lemma:posdef} and \ref{lemma:beta_t} (ahead), there exists a constant $C$, depending only on $\mvTheta_0$, $\lambda$, such that  
the event $\cA := \cA_1 \cap \cA_2 \cap \cA_3 \cap \cA_4$ happens with probability at least $1 - 4d/T$, where
\begin{align}\label{events:def}
\begin{split}
    \cA_1 &= \{
    \|\hat \mvtheta_{t}^{(k)} - \mvtheta_{k}\|_{\bV_{t}^{(k)}} \leq \sqrt{\beta_t^{(k)}}: \text{ for all } t \in [T], k \in [K]\},\\
    \cA_2 &=  \{ \max_{k=1,2} \lambda_{\min}\left( \bV_t^{(k)}\right)  \geq 6^{-1}\ell_1^2 t, \text{ for all } t \in [C(d +\log(T)),T] \}, \\
    \cA_3 &= \{ \lambda_{\min}\left( \sum_{s=t_1+1}^{t_2}  \mvX_s \mvX_s' I(\mvX_s \in \cU_{\ell_0}^{(k)})\right) \geq {\ell_0^2 (t_2-t_1)}/{2}, \\
    &\quad \qquad \text{ for any } t_1,t_2 \in [T], \text{ with } t_2 - t_1 \geq C d \log(T), \text{ and  } k = 1,2\},\\
    \cA_4 &= \{\sqrt{\tilde{\beta}_{t}} \left( \ell_{*}^2t/2  \right)^{-1/2} (\sqrt{d} m_{X}) \leq  {\ell_0}/8, \;\; \text{ for all } \;t \geq C \Upsilon_{d,T}\}.
\end{split}
\end{align}
We recall $\tilde{\beta}_{t} \geq \beta_t^{(k)}$ for each $k \in [K]$ and $t \in [T]$, and note that $\cA_4$ in fact involves no randomness.  Define $T_0 = \left\lceil C \Upsilon_{d,T} \right\rceil$. We show below that if the truncation time $S \geq 2T_0$, on the event $\cA$, $\min_{k = 1,2} \lambda_{\min}\left( \bV_{t}^{(k)}\right) \geq 4^{-1}\ell_*^2 t/d$ for each $t \in [T]$ and $t  \geq 2T_0$; that is, the Lemma \ref{lemma:small_eignenvalues_anyt} holds with $C_0 = 2C+1$. Thus, assume $S \geq 2T_0$ and focus on the event $\cA$. \\

\noindent \underline{Step 2.} show that on the event $\cA$, $\min_{k = 1,2} \lambda_{\min}\left( \bV_{2T_0}^{(k)}\right) \geq \ell_*^2T_0$.

On the event $\cA_2$,  one of the following   holds: (I) $\lambda_{\min}\left( \bV_{T_0}^{(1)}\right)  \geq \ell_{*}^2 {T_0}$ or (II) $\lambda_{\min}\left( \bV_{T_0}^{(2)}\right)  \geq \ell_{*}^2 T_0$.  We first consider case (I), and in particular the conclusion holds for arm 1. For each $t \in [T_0 +1 , 2T_0]$, since $\cA_1$,  $\cA_2$, and $\cA_4$ (using  $t = 2T_0$) occur, we have  $\mvtheta_2'\mvX_t \leq  \text{UCB}_t(2)$ and
\begin{align*}
 \text{UCB}_t(1) = &(\hat \mvtheta_{t-1}^{(1)})' \mvX_t + \sqrt{\beta_{t-1}^{(1)}} \|\mvX_t\|_{(\bV_{t-1}^{(1)})^{-1}} \leq  \mvtheta_1'\mvX_t + 2\sqrt{\beta_{t-1}^{(1)}} \|\mvX_t\|_{(\bV_{t-1}^{(1)})^{-1}} \\
 \leq &\mvtheta_1'\mvX_t + 2 \sqrt{\tilde{\beta}_{2T_0}} \left(\lambda_{\min}\left( \bV_{T_0}^{(1)}\right)\right)^{-1/2} (\sqrt{d} m_X) 
 \leq \mvtheta_1'\mvX_t + \ell_0/4.
\end{align*}
Since the truncation time $S \geq 2T_0$, if $\mvX_t \in \cU^{(2)}_{\ell_0}$, i.e., $\mvtheta_1'\mvX_t + \ell_0 < \mvtheta_2'\mvX_t$, then we must have $A_t = 2$, since arm 2 has a larger upper confidence bound than arm 1. Further, since $\cA_3$ occurs, we have
 \begin{align*}
     \lambda_{\min}\left( \bV_{2T_0}^{(2)}\right)  \geq 
      \lambda_{\min}\left(\sum_{t=T_0+1}^{2T_0} \mvX_t \mvX_t'  I(\mvX_t \in \cU_{\ell_0}^{(2)})\right)  \geq \ell_*^2 T_0.
 \end{align*}
 The same argument applies to the case (II), and the proof for Step 2 is complete.\\

\noindent \underline{Step 3.} show that on the event $\cA$, for each  $t \geq 2T_0$, $\min_{k = 1,2} \lambda_{\min}\left( \bV_{t}^{(k)}\right) \geq 4^{-1}\ell_*^2 t$.

 It suffices to show that 
 \begin{align}\label{aux:enough_info_anyt}
     \min_{k = 1,2} \lambda_{\min}\left( \bV_{n(2T_0)}^{(k)}\right) \geq \ell_*^2 nT_0, \;\text{ for all } n \in \bN_{+} \;\text{ and } \;2n T_0 \leq T,
 \end{align}
 as it would imply that if $t \in [2nT_0, 2(n+1)T_0)$ for some $n \in \bN_{+}$, since $n/(2(n+1)) \geq 4^{-1}$, we would have  
 $\min_{k = 1,2} \lambda_{\min}\left( \bV_{t}^{(k)}\right)
 \geq  \ell_*^2 nT_0 \geq 4^{-1} \ell_*^2 t$. Next we use induction to prove \eqref{aux:enough_info_anyt}, and note that the case $n = 1$ is shown in Step 2. Thus assume  \eqref{aux:enough_info_anyt} holds for some $n\in \bN_{+}$.
 
 
 Let $\cI_t(k) = (\hat \mvtheta_{t-1}^{(k)})' \mvX_t + \sqrt{\beta_{t-1}^{(k)}} \|\mvX_t\|_{(\bV_{t-1}^{(k)})^{-1}} I(t \leq S)$ be the index for arm $k$ at time $t$, which is equal to $\text{UCB}_t(k)$ if $t \leq S$, and $(\hat \mvtheta_{t-1}^{(k)})' \mvX_t$ otherwise.  On the event $\cA_1$ and $\cA_2$, and by induction in \eqref{aux:enough_info_anyt}, for each $t \in (2nT_0, 2(n+1)T_0]$ and $k  = 1,2$,
 \begin{align*}
      |\cI_t(k)  - \mvtheta_k'\mvX_t| \leq 2\sqrt{\tilde{\beta}_{t-1}} \|\mvX_t\|_{(\bV_{t-1}^{(k)})^{-1}} \leq   2\sqrt{\tilde \beta_{2(n+1)T_0}}  \left(\ell_*^2 n T_0\right)^{-1/2} (\sqrt{d} m_X).
 \end{align*}
Due to 
the event $\cA_4$ with $t = 2(n+1)T_0$, and since $\sqrt{(n+1)/n} \leq \sqrt{2}$,  we have
$|\cI_t(k) - \mvtheta_k'\mvX_t|\leq\ell_0/(2\sqrt{2})$ for each $t \in (2nT_0, 2(n+1)T_0]$ and $k  \in [2]$.
 Since $A_t = \arg\max_{k \in [2]} \cI_t(k)$, for each $t \in (2nT_0, 2(n+1)T_0]$ and $k  = 1,2$, if  $\mvX_t \in \cU^{(k)}_{\ell_0}$,  then we must have $A_t = k$, which implies
 \begin{align*}
     \lambda_{\min}\left( \bV_{2(n+1)T_0}^{(k)}\right)  \geq  \lambda_{\min}\left( \bV_{2nT_0}^{(k)}\right)+
      \lambda_{\min}\left(\sum_{t=2nT_0+1}^{2(n+1)T_0} \mvX_t \mvX_t'  I(\mvX_t \in \cU_{\ell_0}^{(k)})\right).  
 \end{align*}
Then the induction is complete due to the event $\cA_3$. The proof is complete.
\end{proof}

Next we show that the events $\cA_2, \cA_3, \cA_4$ in \eqref{events:def} happens with a high probability.

\begin{lemma}
\label{lemma:cst_adaptive}
Assume the condition \ref{cond:continuity} holds. There exists an absolute constant $C > 0$ such that the event $\cA_2 = \left\{ \max_{k=1,2} \lambda_{\min}\left( \bV_t^{(k)}\right)  \geq 6^{-1}\ell_1^2 t, \text{ for all } t \in [C(d +\log(T)),T] \right\}$ happens with probability at least $1-1/T$.
\end{lemma}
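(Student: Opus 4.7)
My plan is to reduce the eigenvalue lower bound to a uniform concentration statement about ``slab'' indicators, and then combine it with a pigeonhole step over the two arms.

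Step 1 (reduction). For any $k\in[2]$ and $\mvu\in\cS^{d-1}$, dropping terms with small projection gives
\begin{align*}
\mvu'\bV_t^{(k)}\mvu \;\geq\; \ell_1^2\sum_{s=1}^{t} I(A_s=k)\,I(|\mvu'\mvX_s|>\ell_1) \;\geq\; \ell_1^2\bigl(N_t^{(k)} - Z_t(\mvu)\bigr),
\end{align*}
where $N_t^{(k)} := \sum_{s=1}^t I(A_s=k)$ and $Z_t(\mvu):=\sum_{s=1}^t I(|\mvu'\mvX_s|\leq\ell_1)$. The key observation is that the upper bound on the subtracted term drops the indicator $I(A_s=k)$, decoupling it from the data-adaptive action sequence. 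Taking the infimum over $\mvu$ and using $N_t^{(1)}+N_t^{(2)}=t$ (hence $\max_k N_t^{(k)}\geq t/2$) yields
\begin{align*}
\max_{k=1,2}\lambda_{\min}(\bV_t^{(k)}) \;\geq\; \ell_1^2\bigl(t/2 - \sup_{\mvu\in\cS^{d-1}} Z_t(\mvu)\bigr).
\end{align*}
Hence it suffices to show that $\sup_\mvu Z_t(\mvu)\leq t/3$ simultaneously for every $t\in[C(d+\log T),T]$, with probability at least $1-1/T$.

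Step 2 (uniform concentration and union bound). The class $\{\{\mvx:|\mvu'\mvx|\leq\ell_1\}:\mvu\in\cS^{d-1}\}$ is a family of symmetric slabs with VC dimension $O(d)$, and by condition \ref{cond:continuity} each indicator has mean at most $1/4$. A standard VC/Talagrand-type concentration inequality for empirical processes of $\{0,1\}$-valued functions then gives, for any fixed $t$ and $\delta\in(0,1)$,
\begin{align*}
\Pro\!\left(\sup_{\mvu\in\cS^{d-1}} \frac{Z_t(\mvu)}{t} \;>\; \frac{1}{4} + C_*\sqrt{\frac{d+\log(1/\delta)}{t}}\right) \;\leq\; \delta,
\end{align*}
for an absolute constant $C_*>0$. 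Setting $\delta=1/T^2$ and choosing $C$ large enough so that the deviation term is at most $1/12$ whenever $t\geq C(d+\log T)$, we obtain $\sup_\mvu Z_t(\mvu)\leq t/3$ with probability at least $1-1/T^2$. A union bound over the (at most $T$) integer values of $t$ in the stated range leaves total failure probability at most $1/T$, and on this good event the display of Step 1 yields $\max_k\lambda_{\min}(\bV_t^{(k)})\geq \ell_1^2 t/6$ for all $t\in[C(d+\log T),T]$.

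The main obstacle will be producing the uniform bound in Step 2 with the sharp rate $\sqrt{d/t}$ rather than the looser $\sqrt{(d\log t)/t}$ that a naive Vapnik--Chervonenkis union bound would yield; this sharp rate is what permits the threshold to scale as $d+\log T$ instead of $d\log T$. It can be obtained either from a Dudley-chaining argument for VC classes combined with Talagrand's inequality, or directly from a Bousquet-type concentration inequality for empirical processes indexed by VC classes of $\{0,1\}$-valued functions.
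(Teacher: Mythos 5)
Your proposal is correct, and it reaches the paper's bound $6^{-1}\ell_1^2 t$ by a slightly different decomposition. The paper sums the two quadratic forms and lower-bounds $\mvu'\bV_t^{(1)}\mvu + \mvv'\bV_t^{(2)}\mvv$ termwise by $\ell_1^2\, I(|\mvu'\mvX_s|\geq \ell_1,\ |\mvv'\mvX_s|\geq \ell_1)$ — an indicator that is insensitive to which arm was pulled — so the empirical process it must control is indexed by \emph{pairs} $(\mvu,\mvv)\in\cS^{d-1}\times\cS^{d-1}$, with per-term mean at least $1/2$ by \ref{cond:continuity}; the count $t/2 - t/6 = t/3$ then gives the same $\ell_1^2 t/6$ after dividing by two. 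You instead decouple the adaptive actions from the geometry via $I(A_s=k)I(|\mvu'\mvX_s|\leq\ell_1)\leq I(|\mvu'\mvX_s|\leq\ell_1)$ and pigeonhole on $\max_k N_t^{(k)}\geq t/2$, so your empirical process is indexed by a \emph{single} unit vector and concerns the complementary slab events with mean at most $1/4$. Both routes need the same concentration engine — a $\sqrt{dt}$ expectation bound for a VC class of slabs (Dudley's entropy integral) combined with Talagrand/Bousquet and a union bound over $t\in[T]$ with $\delta = T^{-2}$ — which is exactly what the paper packages as Lemma \ref{lemma:Talagrand} for the pair-indexed class $\{\phi_{\mvu,\mvv}\}$. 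Your version buys a marginally simpler index set (VC dimension $O(d)$ for single slabs versus the product class, which the paper handles by writing $\phi_{\mvu,\mvv}$ as a sum of products of half-space indicators and invoking preservation results for covering numbers); the paper's version avoids the pigeonhole step by making the arm-robustness explicit in the integrand. Your closing remark about needing the sharp $\sqrt{d/t}$ rate rather than $\sqrt{(d\log t)/t}$ is exactly the point: the paper gets it from the uniform entropy integral bound, which is why the threshold scales as $d+\log T$ rather than $d\log T$.
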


\begin{proof}
In this proof we denote by $C,\tilde{C}$ \textit{absolute} constants that may differ from line to line.  Observe that by definition, for each $t \in [T]$,
\begin{align*}
    &\sum_{k=1}^{2} \lambda_{\min}\left( \bV_t^{(k)}\right) = \inf_{\mvu,\mvv \in \cS^{d-1}} \left(\mvu' \bV_t^{(1)} \mvu +
    \mvv'\bV_t^{(2)}\mvv \right) \\
    \geq &\inf_{\mvu,\mvv \in \cS^{d-1}} \sum_{s=1}^{t}\left(
    u' X_s X_s' I(A_s = 1) u +  v' X_s X_s' I(A_s = 2) v
    \right)\\
    \geq &\inf_{\mvu,\mvv \in \cS^{d-1}} \sum_{s=1}^{t} {\ell}_1^2 I\left(|\mvu'X_s| \geq {\ell}_1,\;
    |\mvv'X_s| \geq {\ell}_1 \right).
\end{align*}
For  $\mvu,\mvv \in \cS^{d-1}$, define $\phi_{\mvu, \mvv}(\mvx) = I\left(|\mvu' \mvx| \geq {\ell}_1,\;
    |\mvv'\mvx| \geq {\ell}_1 \right)$,
 $N_t(\mvu, \mvv)  = \sum_{s=1}^{t} \phi_{\mvu, \mvv}(\mvX_s)$, 
 and $\Delta_t = \sup_{\mvu,\mvv \in \cS^{d-1}}|N_t(\mvu, \mvv) - \Exp[N_t(\mvu, \mvv)]|$.  Then 
 \[
2 \max_{k=1,2} \lambda_{\min}\left( \bV_t^{(k)}\right) \;\geq \; {\ell}_1^2\inf_{\mvu,\mvv \in \cS^{d-1}} N_t(\mvu, \mvv) \; \geq \; {\ell}_1^2( \inf_{\mvu,\mvv \in \cS^{d-1}} \Exp[N_t(\mvu, \mvv)] - \Delta_t).
 \]
Due to \ref{cond:continuity}, for each $\mvu,\mvv \in \cS^{d-1}$,
$$
\Exp[\phi_{\mvu, \mvv}(\mvX)] \geq  1 - \bP(|\mvu'\mvX| \leq {\ell}_1) - 
\bP(|\mvv'\mvX| \leq {\ell}_1) \geq 1/2,
$$
which implies that $\inf_{\mvu,\mvv \in \cS^{d-1}} \Exp[N_t(\mvu, \mvv)] \geq t/2$ for each $t \in [T]$. Further, by Lemma \ref{lemma:Talagrand} with $\tau = 2\log(T)$, and the union bound,  with probability at least $1-1/T$, for all $t \in [T]$, 
$\Delta_t \leq \tilde{C}(\sqrt{d t} + \sqrt{t\log(T)} + \log(T))$. 
Note that there exists an absolute constant ${C}$ such that if $t \geq {C}(d+\log(T))$, then $6^{-1} t \geq \tilde{C}(\sqrt{dt} + \sqrt{t\log(T)} + \log(T))$. As a result, with probability at least $1-1/T$,   $\inf_{\mvu,\mvv \in \cS^{d-1}} N_t(\mvu, \mvv) \geq 3^{-1} t$ for any $t \in  [{C}(d +\log(T)),T]$, which completes the proof.
\end{proof}

\begin{lemma}\label{lemma:posdef}
Assume the conditions \ref{assumption_parameter_noise} and \ref{cond:posdef} hold. There exists a positive constant $C$, depending only on $m_{X},\ell_0$, such that
with probability at least $1-d/T$, the following event $\cA_3$ happens: 
$\lambda_{\min}\left( \sum_{s=t_1+1}^{t_2}  \mvX_s \mvX_s' I(\mvX_s \in \cU_{\ell_0}^{(k)})\right) \geq {\ell_0^2 (t_2-t_1)}/2$, for any $t_1,t_2 \in [T]$ with $t_2 - t_1 \geq C d \log(T)$, and $k = 1,2$.
\end{lemma}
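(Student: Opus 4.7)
The plan is to reduce this to a dimension-dependent matrix Chernoff inequality applied to each fixed interval, followed by a union bound over all intervals $(t_1,t_2]$ and arms $k$. Fix $k \in [2]$ and integers $0 \leq t_1 < t_2 \leq T$, set $n = t_2 - t_1$, and define the i.i.d.\ PSD matrices $M_s := \mvX_s \mvX_s' I(\mvX_s \in \cU^{(k)}_{\ell_0})$ for $s \in (t_1,t_2]$. Condition \ref{assumption_parameter_noise} gives the operator norm bound $\|M_s\|_{\op} \leq \|\mvX_s\|^2 \leq d m_X^2$ almost surely, while \ref{cond:posdef} yields $\Exp[M_s] \succeq \ell_0^2 \bI_d$.

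Next, I would invoke the matrix Chernoff inequality (e.g., Tropp's user-friendly tail bounds for sums of independent PSD matrices) to get, for some absolute constant $c_0 > 0$,
\begin{align*}
\bP\!\left( \lambda_{\min}\!\left(\sum_{s=t_1+1}^{t_2} M_s\right) \leq \frac{\ell_0^2 n}{2} \right) \;\leq\; d \exp\!\left( -\,\frac{c_0\, n \ell_0^2}{d m_X^2} \right).
\end{align*}
Choosing $C = C(m_X,\ell_0)$ large enough that $n \geq C d \log(T)$ forces the right-hand side to be at most $1/(2T^3)$ makes this a single-interval statement. A straightforward union bound over the at most $2T^2$ triples $(t_1,t_2,k)$ with $t_2 - t_1 \geq Cd\log(T)$ and $k \in [2]$ then yields overall failure probability at most $2T^2 \cdot 1/(2T^3) = 1/T \leq d/T$, as claimed.

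The only real input is the matrix Chernoff step; the main subtle point is that the factor $d$ in the sample-size requirement $n \geq C d\log(T)$ is dictated precisely by the operator-norm upper bound $\|M_s\|_{\op} \leq d m_X^2$, which in turn reflects the $\|\mvX\|\leq \sqrt{d} m_X$ scaling in \ref{assumption_parameter_noise}. This is not an obstacle but rather the feature that produces the correct dimensional dependence needed downstream in the proof of Lemma \ref{lemma:small_eignenvalues_anyt} (where $C_0$ is chosen in terms of $\Upsilon_{d,T}$). No martingale machinery is needed here because the contexts $\{\mvX_s\}$ are i.i.d., so each fixed interval is a sum of independent matrices, and the uniformity over intervals is handled cheaply by the polynomial union bound.
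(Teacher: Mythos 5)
Your proposal is correct and follows essentially the same route as the paper: a matrix Chernoff bound (Tropp's Theorem 1.1, applied with $R = dm_X^2$ and $\mu_{\min} = \ell_0^2(t_2-t_1)$ to the i.i.d.\ PSD summands $\mvX_s\mvX_s' I(\mvX_s \in \cU_{\ell_0}^{(k)})$), followed by a union bound over the at most $2T^2$ choices of $(t_1,t_2,k)$. Your bookkeeping of the per-interval failure probability is if anything slightly cleaner than the paper's, and your remark that the factor $d$ in the sample-size requirement comes from the operator-norm bound $\|\mvX\|^2 \leq d m_X^2$ matches the paper's choice of constants exactly.
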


\begin{proof}
Denote $\Delta = t_2 - t_1$. By \citep[Theorem 1.1]{tropp2012user}, with $R = d m_X^2$, $\mu_{\min} = \Delta \ell_0^2$, and $\delta = 1/2$ therein, 
\begin{align*}
    \bP\left(\lambda_{\min}\left( \sum_{s=t_1+1}^{t_2}  \mvX_s \mvX_s' I(\mvX_s \in \cU_{\ell_0}^{(k)})\right) \leq \frac{\ell_0^2 \Delta}{2} \right) \leq d \exp\left( - \frac{\Delta \ell_0^2 \log(\sqrt{e/2})}{d m_X^2}\right).
\end{align*}
Thus if $\Delta \geq C d\log(T)$, with $C = 3m_X^2/(\ell_0^2\log(\sqrt{e/2}))$, the above probability is upper bounded by  $d/T^3$, which completes the proof by the union bound over $t_1,t_2 \in [T]$ and $k = 1,2$.
\end{proof}

Recall $\tilde{\beta}_t$ in \eqref{eq:uuper_beta_t},  $\Upsilon_{d,T}$ in \eqref{chi_order}, and  $\ell_{*} = \min\{\ell_1,\ell_0\}/3$.

\begin{lemma}
\label{lemma:beta_t}
Assume the condition \ref{assumption_parameter_noise} hold.  Then
there exists a constant $C$, depending only on $\mvTheta_0$ and $\lambda$, such that
$\sqrt{\tilde{\beta}_{t}} \left( \ell_{*}^2t/2 \right)^{-1/2} (\sqrt{d}m_{X}) \leq  {\ell_0}/{8}$ for  all $t \geq C \Upsilon_{d,T}$.
\end{lemma}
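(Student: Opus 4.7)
My plan is to reduce the bound to an equivalent form, establish monotonicity so that it suffices to verify it at the threshold, and then bound each summand there.

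First, I would square the target inequality. Since $(\ell_*^2 t/2)^{-1/2} \sqrt{d} m_X$ appears inside, the statement $\sqrt{\tilde\beta_t}\,(\ell_*^2 t/2)^{-1/2}\sqrt{d}\,m_X \leq \ell_0/8$ is equivalent to
\[
 \frac{d\,\tilde\beta_t}{t} \;\leq\; \frac{\ell_*^2 \ell_0^2}{128\, m_X^2} \;=:\; c_0,
\]
a constant depending only on $\mvTheta_0$. Using $(a+b)^2 \leq 2a^2 + 2b^2$ on the definition of $\sqrt{\tilde\beta_t}$ in \eqref{eq:uuper_beta_t}, I obtain
\[
\tilde\beta_t \;\leq\; 2\lambda m_\theta^2 \;+\; 4\sigma^2 \log(T) \;+\; 2\sigma^2 d \log\!\Big(1+\tfrac{t m_X^2}{\lambda}\Big),
\]
so it is enough to show that each of $d/t$, $d\log(T)/t$, and $d^2 \log(1+tm_X^2/\lambda)/t$ is at most a suitably small constant.

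Next I would argue monotonicity: each of these three terms is a decreasing function of $t > 0$ (for $\log(1+at)/t$ one checks $t/(1+at) - \log(1+at)/a < 0$ via $\log(1+x) > x/(1+x)$). Hence the upper bound on $d\tilde\beta_t/t$ is decreasing in $t$, and it suffices to verify the required inequality at the single value $t_0 := \lceil C\,\Upsilon_{d,T}\rceil$.

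At $t_0$ the first two terms are controlled by $t_0 \geq C d\log(T) \geq C\log(T) \geq C\log(16)$, giving contributions $O(1/C)$. For the third term, since $t_0 \geq C\lambda/m_X^2$ for $C$ large, I bound $\log(1 + t_0 m_X^2/\lambda) \leq \log(2 m_X^2/\lambda) + \log(t_0)$. Because $\Upsilon_{d,T} \leq 2d^2 \log(T)$ (using $d \leq T$), $\log(t_0) \leq \log(4C) + 2\log(d) + \log\log(T)$. Combined with the lower bound $t_0 \geq Cd^2(\log(d) + \log\log(T))$ coming from the $d^2\log(d\log(T))$ summand of $\Upsilon_{d,T}$, this yields
\[
\frac{d^2 \log(t_0)}{t_0} \;\leq\; \frac{\log(4C) + 2\log(d) + \log\log(T)}{C(\log(d) + \log\log(T))} \;\leq\; \frac{\log(4C)}{C\,\log\log(16)} + \frac{2}{C},
\]
using $T \geq 16$ and $d \geq 1$. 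All three contributions therefore vanish as $C \to \infty$, so choosing $C$ large enough (depending only on $\mvTheta_0$ and $\lambda$) makes their sum bounded by $c_0$, finishing the proof.

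The only mildly delicate step is the lower-bound estimate $t_0 \geq Cd^2(\log(d)+\log\log(T))$ matched against the upper-bound estimate $\log(t_0) \lesssim \log(d) + \log\log(T) + \log(C)$: this is exactly why the particular form $d^2 \log(d\log(T))$ appears in the definition of $\Upsilon_{d,T}$, and making sure the constants line up cleanly (so that $C$ need not depend on $d$ or $T$) is the one place where care is required.
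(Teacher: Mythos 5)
Your proof is correct and follows essentially the same route as the paper's: both reduce the claim (after squaring) to showing that $d\log(T)/t + d^2\log(t)/t$ is at most a small constant for all $t \geq C\,\Upsilon_{d,T}$, which is precisely what the $d^2\log(d\log(T))$ summand in $\Upsilon_{d,T}$ is calibrated to deliver. The only difference is the finishing step: the paper invokes its inversion Lemma \ref{lemma:aux_exp_inequal} (if $t \geq a + 2b\log(a+b)$ then $a + b\log(t) \leq t$, with $a \asymp d\log(T)$ and $b \asymp d^2$), whereas you use monotonicity of $\log(t)/t$ together with a direct evaluation at the endpoint $t_0 = \lceil C\,\Upsilon_{d,T}\rceil$ --- an equally valid elementary substitute.
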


\begin{proof}
By definition, there exists $\tilde{C}$, depending only on $\mvTheta_0$ and $\lambda$,  such that
\begin{align*}
    \tilde \beta_{t} \leq \tilde{C}(\log(T) + d \log(t))\, \text{ for } t \geq 2,\qquad
\tilde{C}_1 := {128\tilde{C}m_X^2}/(\ell_{0}^2\ell_{*}^2) \geq 9.
\end{align*}
Let $a := \tilde{C}_1 d\log(T),\ b := \tilde{C}_1 d^2$. By Lemma \ref{lemma:aux_exp_inequal}, if $t \geq a + 2b\log(a+b)$, then 
\begin{align*}
a + b \log(t) \leq t  \;\;\Longleftrightarrow\;\; \tilde{C}(\log(T) + d \log(t))\left( \ell_{*}^2t/2 \right)^{-1} d m_{X}^2 \leq \left( {\ell_0}/{8}\right)^2,
\end{align*}
which completes the proof.
\end{proof}

\subsection{Proof of Lemma  \ref{lemma:beta_V}} \label{proof:lemma:beta_V}
Next, we prove Lemma \ref{lemma:beta_V}, which shows that if the smallest eigenvalues of the ``design" matrices are large,  the estimation of arm parameters is accurate.

\begin{proof}
Fix $t \in [T]$, $k \in [2]$. In this proof, $C$ is a constant, depending only on $\mvTheta_0$, $\lambda$, that may vary from line to line.
By definition,  $\hat \mvtheta_t^{(k)} - \mvtheta_k =
(\bV_{t}^{(k)})^{-1}(\sum_{s=1}^{t} \mvX_s I(A_s = k) \epsilon_s - \lambda \mvtheta_k)$. Thus due to the condition \ref{assumption_parameter_noise}, and on the event $\mathcal{E}_{t}^{(k)}$ (defined in \eqref{def:info_event}), we have $\|\hat \mvtheta_t^{(k)} - \mvtheta_k \| \leq C t^{-1} (\|\sum_{s=1}^{t} \mvDelta_s\|+ 1)$, 
where $\mvDelta_s = \mvX_s I(A_s = k) \epsilon_s$.
Note that $\{\mvDelta_s: s\in [t]\}$ is a sequence of vector martingale differences with respect to $\{\cF_{s}: s \in \{0\} \cup [t]\}$. 

Due to the condition \ref{assumption_parameter_noise}, for any $\tau \geq 0$, almost surely,
$\bP(\|\mvDelta_s\| \geq \tau \ \vert \cF_{s-1}) \leq 
\bP(|\epsilon_s^{(k)}| \geq \tau/(\sqrt{d}m_X) \ \vert \cF_{s-1}) \leq 2\exp(-\tau^2/(2dm_X^2 \sigma^2))$.
Then by \citep[Corollary 7]{jin2019short},
for any $\tau > 0$, 
$$
\bP(\|\sum_{s=1}^{t} \mvDelta_s\| \leq C (\sqrt{dt\log(d)} + \tau \sqrt{dt}) \geq 1 - 2e^{-\tau^2},
$$
which completes the proof of the first claim, by considering $\tau \leq \sqrt{\log(2)}$ and $\tau > \sqrt{\log(2)}$.

Further, for $1 \leq s_1 < s_2 \leq t$, $\Exp[\mvDelta_{s_1}' \mvDelta_{s_2}]
= \Exp[\mvDelta_{s_1}' \Exp[\mvDelta_{s_2} \vert \cF_{s_2-1}]] = 0$. Thus due to \ref{assumption_parameter_noise},
\begin{align*}
    \Exp\left[\|\sum_{s=1}^{t} \mvDelta_s\|^2\right] = 
    \Exp\left[\sum_{s=1}^{t} \mvX_{s}'\mvX_{s} I(A_t = s) (\epsilon_s)^2  \right] \leq \sigma^2 \Exp[\sum_{s=1}^{t} \mvX_{s}'\mvX_{s} ] \leq C d t,
\end{align*}
which completes the proof for the second claim.
\end{proof}

\section{Lower bound for all admissible rules: proof of Theorem \ref{theorem:lower_bound}}\label{sec:proof_lower_bound}

Here, we provide the proof for the lower bound part in Theorem \ref{theorem:lower_bound}, and 
the upper bound proof is in Appendix \ref{proof:theorem:lower_bound_upper}.

\begin{proof}[Proof for the lower bound part of Theorem \ref{theorem:lower_bound}]
In this proof, $C$ is an absolute, positive constant, that may vary from line to line. First, we consider the case that the context vector $\mvX$ has the $\textup{Unif}(\sqrt{d}\cS^{d-1})$ distribution.

For a given $d \geq 3$, a problem  instance  in \ref{prob_instances_all_lower} is identified with $\mvtheta_2 \in \bR^{d}$. Let $\mvTheta_2$ be a random vector  with a  Lebesgue density  ${\rho}_d(\cdot)$ on $\bR^{d}$, supported on $ \cB_d(1/2,1) = \{\mvx \in \bR^{d}: 
2^{-1} \leq \|\mvx\| \leq 1\}$:
\begin{align}
\label{LB:theta_density}
{\rho}_d(\mvtheta) = \frac{\tilde \rho(\|\mvtheta\|)}{A_d\|\mvtheta\|^{d-1}} \text{ for } \mvtheta \in \bR^{d},\;\; \text{ with } \; \tilde{\rho}(\tau) = 4\sin^2(2\pi\tau)I(2^{-1}\leq \tau\leq 1),
\end{align}
where $A_d$ is the Lebesgue area of $\cS^{d-1}$. Then for any admissible rule $\{\pi_t, t \in[T]\}$,
\begin{align*}
\sup_{\mvtheta_2 \in \cB_d(1/2,1)}R_T(\{\pi_t, t \in[T]\};\ d,\mvtheta_2) 
     \geq \Exp[R_T(\{\pi_t, t \in[T]\};\ d,\mvTheta_2)].
\end{align*}
Below, we fix some admissible rule $\{\pi_t, t \in[T]\}$, and study its ``Bayes" risk $\Exp[R_T(d,\mvTheta_2)]$, where the randomness comes from $\mvTheta_2$, in addition to the contexts $\{\mvX_{t}: t\in [T]\}$, observation noises $\{\epsilon_{t}^{(k)} : t\in [T], k \in [2]\}$, and possible random mechanism enabled by i.i.d.~$\textup{Unif}(0,1)$ random variables $\{\xi_{t}: t \in [T]\}$. Recall that $\mvtheta_1 = \boldsymbol{0}_{d}$ is deterministic, and let $\mvTheta_1= \boldsymbol{0}_{d}$. Recall from Section \ref{sec:prob_formulation} that  $\cF_0 = \sigma(0)$, and for each $t \in [T]$,  $\cF_t =  \sigma(\mvX_s,A_s,Y_s: s \in [t])$ denotes the available information up to time $t$, and  $\cF_{t+} := \sigma(\cF_{t}, \mvX_{t+1}, \xi_{t+1})$  the information set during the decision making at time $t+1$; in particular, $A_t \in \cF_{(t-1)+}$ for each $t \in [T]$.\\


By definition, $\Exp[R_T(d,\mvTheta_2)] = \sum_{t=1}^{T} \Exp[\hat{r}_t]$,
where $\hat{r}_t := \max_{k \in [K]} (\mvTheta_k'\mvX_t) - \mvTheta_{A_t}'\mvX_t$.
Since $\mvTheta_1 = \boldsymbol{0}_{d}$, $\hat{r}_t  = (\mvTheta_2'\mvX_t) I(\mvTheta_2'\mvX_t \geq 0) I(A_t = 1) - (\mvTheta_2'\mvX_t) I(\mvTheta_2'\mvX_t < 0)I(A_t = 2)$. Then the Bayes rule is: $\hat{A}_t = 1$ if and only if the conditional cost, given $\cF_{(t-1)+}$, for arm 1 is no larger than  for arm 2, i.e.,  
\begin{align*}
     & \Exp[(\mvTheta_2'\mvX_t) I(\mvTheta_2'\mvX_t \geq 0) \vert \cF_{(t-1)+}] \leq 
\Exp[ -(\mvTheta_2'\mvX_t) I(\mvTheta_2'\mvX_t < 0)\vert \cF_{(t-1)+}], \\
 \;\; &\Longleftrightarrow\;\; \Exp[ \mvTheta_2'\mvX_t \vert \cF_{(t-1)+}] \leq 0
\;\;\Longleftrightarrow\;\; (\hat{\mvTheta}_{t-1}^{(2)})' \mvX_t \leq 0,
\end{align*}
where $\hat{\mvTheta}_{t-1}^{(2)} := \Exp[ \mvTheta_2 \vert \cF_{(t-1)+}]$ and the last equivalence is because $\mvX_{t} \in \cF_{(t-1)+}$.
Thus, for $t \in [T]$,
\begin{align*}
    \Exp[\hat{r}_t] \geq \Exp\left[ |\mvTheta_2'\mvX_t| I(\text{sgn}(\mvTheta_2'\mvX_t) \neq \text{sgn}((\hat{\mvTheta}_{t-1}^{(2)})'\mvX_t))\right].  
\end{align*}
Since $\mvTheta_2$ are independent from $\mvX_{t}$ and $\xi_{t}$, $\hat{\mvTheta}_{t-1}^{(2)} = \Exp[ \mvTheta_2 \vert \cF_{t-1}]$ almost surely.  Note that $\mvTheta_2 \in \cB_d(1/2,1)$ and so is $\hat{\mvTheta}_{t-1}^{(2)}$.
Since $\mvX_{t}$ is independent from $\cF_{t-1}$ and $\mvTheta_2$, due to Lemma \ref{app:unit_sphere_lu} with $u=\mvTheta_2/\|\mvTheta_2\|$ and $v = \hat{\mvTheta}_{t-1}^{(2)}/\|\hat{\mvTheta}_{t-1}^{(2)}\|$, 
\begin{align*}
    \Exp[\hat{r}_t\ \vert \cF_{t-1}] \geq C^{-1}   \|\mvTheta_2\| \left\|\frac{\mvTheta_2}{\|\mvTheta_2\|} - \frac{\hat{\mvTheta}_{t-1}^{(2)}}{\|\hat{\mvTheta}_{t-1}^{(2)}\|} \right\|^2.
\end{align*}
For $t \geq 0$, denote by ${\cH}_t = \sigma(\mvX_s, Y_s^{(1)}, Y_{s}^{(2)}, \xi_s: s \in [t])$ all potential random observations up to time $t$, and by definition, $\cF_{t} \subset \cH_t$. Thus for $t \in [T]$, since $\mvTheta_2 \in \cB_d(1/2,1)$,
\begin{align*}
    \Exp[\hat{r}_t] \geq C^{-1}  \inf\{\Exp\left[ \|\hat \mvpsi_{t-1} - {\mvTheta_2}/{\|\mvTheta_2\|}\|^2 \right]: \hat \psi_{t-1} \in \cH_{t-1} \text{ is an } \bR^d \text{ random vector}\}.
\end{align*}
Since $Y_s^{(1)} = \epsilon_s^{(1)}$ for $s \in [t]$, 
$\{\mvX_s,Y_{s}^{(2)}: s \in [t]\}$ are independent from 
$\{Y_s^{(1)}, \xi_s: s \in [t]\}$. Since $\Exp[\|\mvX_1\|^2] = d$, by Lemma \ref{app:lower_bound_van_Tree},  $\Exp[\hat{r}_t] \geq C^{-1}   (d-1)^2/((t-1)d + Cd^2)$ for some $C > 0$, and thus 
\begin{align*}
\Exp[R_T(d,\mvTheta_2)] \geq C^{-1}     \sum_{t \in [T]} (d-1)^2/((t-1)d + Cd^2)  \geq C^{-1} d \log(T/d),
\end{align*}
which completes the proof for the case that $\mvX$   has the $\textup{Unif}(\sqrt{d}\cS^{d-1})$ distribution.

Finally, note that in the above arguments, the distributional properties we require for the context $\mvX$ are Lemma \ref{app:unit_sphere_lu} and $\Exp[\|\mvX_1\|^2] \leq d$, which continue to hold if $\mvX$ has an isotropic log-concave density, in view of Lemma \ref{lemma:logconcave_example} and since $\Exp[\|\mvX_1\|^2]= \text{trace}(\Cov(\mvX_1)) = d$. The proof for the lower bound is complete.
\end{proof}

\section{Conclusion}\label{sec:conclusion}
In this work, we consider the stochastic linear bandit problem in a low-dimensional regime, where the covariate dimension $d$ is much smaller than the time horizon $T$. We show that the LinUCB algorithm is suboptimal in this setting due to over-exploration, and propose a truncated variant, Tr-LinUCB, which switches to pure exploitation after a specified time $S$. Through theoretical analysis and simulations, we demonstrate that Tr-LinUCB is robust to the choice of $S$. Furthermore, we characterize the minimax rate for concrete families of problem instances and show that Tr-LinUCB achieves minimax optimality. Although the setup is classical, the optimal dependence on $d$ established here is, to our knowledge, novel.

As for future directions, it is of interest to consider the stochastic high-dimensional sparse linear bandit problem, where the minimax rate remains unknown. In addition, we plan to extend the framework to generalized linear models and to settings with unknown observation noise.





\appendix

\section{Lower bound for LinUCB - proof of Theorem \ref{them:lb_linucb}}\label{app:proof_lower_LinUCB}
In this subsection, we consider problem instances in \ref{prob_instances_LinUCB}. We preface the proof with a few lemmas.

\begin{lemma}\label{app:lunucb_info}
Consider problem instances in \ref{prob_instances_LinUCB} with $p=0.6,\sigma^2=1$ and the LinUCB algorithm, i.e., the truncation time $S = T$, with $\lambda=   m_{\theta} = 1$. There exists an absolute  positive constant $C$ such that with probability at least $1-Cd/T$,   $\tilde{\Gamma}_t$ occurs for all $t \geq Cd\log(T)$, where $\tilde{\Gamma}_t$ denotes the event that $0.35 t \leq \lambda_{\min}( \bV_{t}^{(2)} ) \leq \lambda_{\max}( \bV_{t}^{(2)} )\leq  0.45 t
 \leq 0.55 t\leq \lambda_{\min}( \bV_{t}^{(1)} ) \leq \lambda_{\max}( \bV_{t}^{(1)} )\leq  0.65 t$.
\end{lemma}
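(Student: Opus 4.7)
The strategy is to couple the LinUCB action $A_s$ with the oracle rule $A_s^\ast := 1$ if $\iota_s=+1$ and $A_s^\ast := 2$ otherwise, and then apply matrix concentration to the resulting iid sum of outer products. As noted in Subsection \ref{subsec:subopt_linUCB}, the instance \ref{prob_instances_LinUCB} satisfies conditions \ref{assumption_parameter_noise}--\ref{cond:unit_sphere}, so Lemma \ref{lemma:small_eignenvalues_anyt} supplies an event $\Omega_0$ of probability at least $1-4d/T$ on which $\lambda_{\min}(\bV_t^{(k)}) \geq c_0 t$ for an absolute $c_0>0$, every $t \geq C_0 d\log T$, and $k \in \{1,2\}$. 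Combining Lemma \ref{lemma:LinUCB} with the Cauchy--Schwarz inequality and \ref{assumption_parameter_noise} then yields, uniformly on $\Omega_0$ for $s \geq C_0 d\log T$ and both arms,
\[
|\textup{UCB}_s(k) - \mvtheta_k'\mvX_s|
\;\leq\; 2\sqrt{\tilde\beta_{s-1}}\,\|\mvX_s\|_{(\bV_{s-1}^{(k)})^{-1}}
\;\leq\; C_1 d\sqrt{\log(T)/s}.
\]
Since $(\mvtheta_1-\mvtheta_2)'\mvX_s = 2\iota_s|\mvPsi_{1,s}|$ under \ref{prob_instances_LinUCB}, this forces $\{A_s \neq A_s^\ast\}\cap \Omega_0 \subset \{|\mvPsi_{1,s}| \leq C_2 d\sqrt{\log(T)/s}\}$, so the two rules agree except when $\mvX_s$ lies close to the decision boundary.

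Next, decompose $\bV_t^{(k)} = \lambda\bI_d + \bV_t^{(k),\ast} + \bW_t^{(k)}$ with the iid sum $\bV_t^{(k),\ast} := \sum_{s=1}^t \mvX_s\mvX_s' I(A_s^\ast = k)$ and the coupling error $\bW_t^{(k)} := \sum_s \mvX_s\mvX_s'\bigl(I(A_s=k)-I(A_s^\ast=k)\bigr)$. Because $\mvPsi$ and $\iota$ are independent, $\Exp[\mvPsi_j^2]=1$, and the spherical cancellations $\Exp[\mvPsi_{j_1}\mvPsi_{j_2}]=0$ and $\Exp[|\mvPsi_1|\mvPsi_j]=0$ hold (for $j_1\neq j_2$ and $j\geq 2$), one has $\Exp[\mvX\mvX' \mid \iota] = \bI_d$, and therefore $\Exp[\mvX\mvX' I(A^\ast=k)] = p_k\bI_d$ with $p_1=0.6$, $p_2=0.4$. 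Since each summand of $\bV_t^{(k),\ast}$ is bounded in operator norm by $d m_X^2$ and the matrix-variance proxy has per-summand norm of order $d$, matrix Bernstein (with $\delta = 1/T$) gives $\|\bV_t^{(k),\ast}/t - p_k \bI_d\|_{\op} \leq 0.02$ with probability at least $1-1/T$ as soon as $t \geq C_3 d\log T$ for a sufficiently large $C_3$.

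For the coupling error, the conditional probability of a mismatch at step $s$ on $\Omega_0$ is $O(d\sqrt{\log(T)/s})$, because for $d\geq 3$ the marginal density of $|\mvPsi_1|$ on $[0,\sqrt d]$ is bounded uniformly in $d$ when $\mvPsi \sim \textup{Unif}(\sqrt d\,\cS^{d-1})$. Writing $N_{\neq}(t):=\#\{s\leq t: A_s\neq A_s^\ast\}$, this yields $\Exp[N_{\neq}(t)\,\mathbf{1}_{\Omega_0}] = O(d\sqrt{t\log T})$, and a Bernstein-type tail bound applied to the martingale $N_{\neq}(t) - \sum_s \Exp[I(A_s\neq A_s^\ast)\mid \cF_{s-1}]$ upgrades this to the same rate with probability $1-O(d/T)$. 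Combining with the per-summand bound $\|\mvX_s\mvX_s'\|_{\op}\leq d m_X^2$ controls $\|\bW_t^{(k)}\|_{\op}/t$, which together with the iid concentration produces $\|\bV_t^{(k)}/t - p_k\bI_d\|_{\op}\leq 0.05$ and hence the windows $[0.35t, 0.45t]$ and $[0.55t, 0.65t]$. The main difficulty is that the crude estimate $\|\bW_t^{(k)}\|_{\op}\leq d m_X^2 N_{\neq}(t) = O(d^2\sqrt{t\log T})$ is too loose to match the stated $Cd\log T$ threshold; closing this gap most plausibly proceeds by exploiting that on mismatch events $\mvX_s$ is nearly orthogonal to $\ve_1$, so $\mvX_s\mvX_s'$ essentially lies in the subspace $\ve_1^\perp$ and $\bW_t^{(k)}$ admits a coordinate-wise decomposition (the $(1,1)$-entry contributing only $O(d^2\log T)$), or by bootstrapping Lemma \ref{lemma:small_eignenvalues_anyt} through successive applications of matrix Bernstein with progressively sharpened UCB-difference estimates.
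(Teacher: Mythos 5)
Your coupling strategy is a genuinely different route from the paper's, and it contains a real gap that you partly acknowledge but do not close: the operator-norm control of the coupling error $\bW_t^{(k)}$ at the threshold $t\asymp d\log T$. Two losses compound. First, your mismatch probability $O(d\sqrt{\log(T)/s})$ is lossy by a factor $\sqrt{d}$ relative to the sharp bonus bound $O(\sqrt{(d\log T+d^2\log s)/s})$, so your own estimate $\Exp[N_{\neq}(t)]=O(d\sqrt{t\log T})$ evaluates to $\sqrt{C}\,d^{3/2}\log T$ at $t=Cd\log T$, which exceeds $t$ for large $d$ — the mismatch count is not even guaranteed to be a small fraction of $t$ at the stated threshold. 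Second, even with the sharp count $N_{\neq}(t)\le\varepsilon t$, the bound $\|\bW_t^{(k)}\|_{\op}\le d\,m_X^2\,N_{\neq}(t)$ costs an extra factor of $d$, and your proposed repair does not work as stated: on a mismatch event only the first coordinate of $\mvX_s$ is small, so $\mvX_s\mvX_s'$ restricted to $\ve_1^{\perp}$ still has trace $\approx d-1$ and operator norm $\approx d$; the matrix does \emph{not} "essentially lie" in a way that shrinks its norm. Salvaging the argument would require a uniform-over-directions bound $\sup_{\mvv\in\cS^{d-1}}\sum_s(\mvv'\mvX_s)^2 I(A_s\neq A_s^{*})\lesssim N_{\neq}(t)+\sqrt{dN_{\neq}(t)}+d\log T$, i.e., a matrix concentration inequality for a history-dependent subsample (the mismatch indicator at time $s$ depends on $\mvX_s$ and $\cF_{s-1}$), which is substantially more delicate than anything you have set up.

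The paper avoids mismatch counting entirely with a one-sided argument. On the good event and for $t\ge C\Upsilon_{d,T}\le 3Cd\log T$ (using the low-dimensional regime \eqref{def:regime}), both indices satisfy $|\text{UCB}_t(k)-\mvtheta_k'\mvX_t|\le\ell_0/2$ with $\ell_0=0.01$, so arm $k$ is \emph{guaranteed} to be pulled whenever $\mvX_s\in\cU_{\ell_0}^{(k)}$. Hence $\bV_t^{(k)}\succeq\sum_s\mvX_s\mvX_s'I(\mvX_s\in\cU_{\ell_0}^{(k)})$, an i.i.d.\ sum whose per-term expectation has smallest eigenvalue at least $0.58$ (arm $1$) and $0.38$ (arm $2$); matrix Chernoff then yields $\lambda_{\min}(\bV_t^{(1)})\ge 0.55t$ and $\lambda_{\min}(\bV_t^{(2)})\ge 0.35t$ uniformly over windows of length $\gtrsim d\log T$. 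The upper bounds come for free from the identity $\bV_t^{(1)}+\bV_t^{(2)}=2\bI_d+\sum_{s\le t}\mvX_s\mvX_s'\preceq(1.01t+2)\bI_d$ together with $\lambda_{\max}(\bV_t^{(k)})\le\lambda_{\max}(\bV_t^{(1)}+\bV_t^{(2)})-\lambda_{\min}(\bV_t^{(\bar k)})$. I recommend you replace the coupling decomposition with this two-step "one-sided inclusion plus complementarity" argument; your computation $\Exp[\mvX\mvX'\mid\iota]=\bI_d$ and the identification of the centers $0.6$ and $0.4$ remain useful for seeing why the windows are where they are.
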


\begin{proof}
In this proof, $C$ is an absolute, positive constant, that may vary from line to line.  Recall that  $\mvX$ is distributed as $(\iota |\mvPsi_1|,\mvPsi_2,\ldots, \mvPsi_{d})$, where $\mvPsi = (\mvPsi_1,\mvPsi_2,\ldots, \mvPsi_{d})$ has the uniform distribution on the  sphere with radius $\sqrt{d}$, i.e., $\textup{Unif}(\sqrt{d}\cS^{d-1})$, $\iota$ takes value $+1$ and $-1$ with probability $p=0.6$ and $1-p$ respectively, and $\mvPsi$ and $\iota$ are independent.

By definition, $\cU_{h}^{(1)} = \{\mvx \in \bR^{d}: \mvx_1 > h/2\}$, and $\cU_{h}^{(2)} = \{\mvx \in \bR^{d}: \mvx_1 < -h/2\}$. The condition \ref{assumption_parameter_noise} clearly holds with $m_X = m_{\theta} = \sigma^2=1$. By Lemma \ref{aux:unit_sphere}, for any $\mvu \in \cS^{d-1}$ and $\tau > 0$,
\begin{align*}
    \bP(|\mvu'\mvX| \leq \tau) \leq 2\sup_{\mvv \in \cS^{d-1}}\bP(|\mvv'\mvPsi| \leq \tau) \leq 4\tau.
\end{align*}
Thus the condition \ref{cond:margin} holds  with $L_0 = 4$ and the condition \ref{cond:continuity} holds with $\ell_1 = 1/16$.
Further, for any $\mvu \in \cS^{d-1}$ and $\ell_0>0$, $\Exp[(\mvu'\mvX)^2 I(\mvX \in \cU_{\ell_0}^{(1)})]) = 0.6(1-\Exp[(\mvu'\mvPsi)^2  I(|\mvPsi_1| \leq \ell_0/2)])$ and 
$\Exp[(\mvu'\mvX)^2 I(\mvX \in \cU_{\ell_0}^{(2)})]) = 0.4(1-\Exp[(\mvu'\mvPsi)^2  I(|\mvPsi_1| \leq \ell_0/2)])$. Thus by Lemma \ref{app:sphere_smallest_eigen}, 
\begin{align*}
   \lambda_{\min}(\Exp[\mvX\mvX' I(\mvX \in \cU_{0.01}^{(1)})]) \geq 0.58, \quad
      \lambda_{\min}(\Exp[\mvX\mvX' I(\mvX \in \cU_{0.01}^{(2)})]) \geq 0.38.
\end{align*}
In particular, the condition \ref{cond:posdef} holds with $\ell_0 = 0.01$.  Recall $\Upsilon_{d,T}$ in \eqref{chi_order}, and due to \eqref{def:regime}, $\Upsilon_{d,T} \leq  3d\log(T)$. Thus by  Lemma \ref{lemma:small_eignenvalues_anyt},  with probability at least $1-4d/T$, for each $t \geq Cd\log(T)$,
$\min_{k = 1,2} \lambda_{\min}\left( \bV_{t}^{(k)}\right) \geq C^{-1} t$. In view of \eqref{eq:uuper_beta_t} and by Lemma \ref{lemma:LinUCB},   with probability at least $1-Cd/T$, for each $t \geq Cd\log(T)$,
\begin{align*}
    |\text{UCB}_t(k) - \mvtheta_k'\mvX_t| \leq 2\sqrt{\beta_{t-1}^{(k)}}\|\mvX_t\|_{(\bV^{(k)}_{t-1})^{-1}} \leq \ell_0/2,
\end{align*}
which implies that for each $k \in [2]$, if $\mvX_{t} \in \cU_{\ell_0}^{(k)}$, the optimal arm would be selected, i.e., $A_t = k$. By \citep[Theorem 1.1]{tropp2012user} (see Lemma \ref{lemma:posdef}), 
with probability at least $1-Cd/T$, the following occurs: for any $t_1, t_2 \in [T]$, if $t_2 - t_1 \geq Cd\log(T)$, 
$\lambda_{\min}(\sum_{s=t_1+1}^{t_2}  \mvX_s \mvX_s' I(\mvX_s \in \cU_{\ell_0}^{(k)})) \geq p_k (t_2-t_1)$ with $p_1 = 0.57$ and $p_2 = 0.37$, which concludes the proof of the part regarding $\lambda_{\min}$.

Finally, again by \citep[Theorem 1.1]{tropp2012user}, since $\lambda_{\max}(\Exp[\mvX\mvX']) = 1$, with probability at least $1-Cd/T$, for each $t \geq Cd\log(T)$,
$\lambda_{\max}(\sum_{s=1}^{t}  \mvX_s \mvX_s' ) \leq 1.01 t$.
Note that
$$
\lambda_{\max}(2 \bI_d + \sum_{s=1}^{t}  \mvX_s \mvX_s' ) \;\geq\; 
\max\{ \lambda_{\max}(\bV_{t}^{(2)}) 
+ \lambda_{\min}(\bV_{t}^{(1)} ),\;\;
\lambda_{\max}(\bV_{t}^{(1)}) 
+ \lambda_{\min}(\bV_{t}^{(2)} )
\},
$$
which leads to the part regarding $\lambda_{\max}$, and completes the proof.
\end{proof}

\begin{lemma}\label{aux:lb_linucb_cal}
Consider problem instances in \ref{prob_instances_LinUCB} with $p = 0.6, \sigma^2=1$ and the LinUCB algorithm, i.e., the truncation time $S = T$, with $\lambda=  m_{\theta}  = 1$. Assume \eqref{def:regime} holds.  There exists an absolute  constant $\tilde{C} \geq 1$  such that if $T \geq \tilde{C}$,  for each $1 \leq t < T$, on the event $\tilde{\Gamma}_{t}$ (defined in Lemma \ref{app:lunucb_info}), the following occurs:
\begin{align*}
    \tilde{\Delta}_t := \sqrt{\beta_{t}^{(2)}}\|\mvX_{t+1}\|_{(\bV_{t}^{(2)})^{-1}} -  \sqrt{\beta_{t}^{(1)}}\|\mvX_{t+1}\|_{(\bV_{t}^{(1)})^{-1}} \geq \tilde{C}^{-1} \sqrt{(d\log(T)+d^2\log(t))/t}.
\end{align*}
\end{lemma}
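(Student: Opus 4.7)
The plan is to work deterministically on the event $\tilde\Gamma_t$ and split $\tilde\Delta_t$ into an ``information asymmetry'' piece from the $\|\mvX_{t+1}\|_{(\bV_t^{(k)})^{-1}}$ terms---the dominant positive contribution, driven by $p>\tfrac12$---and a ``bonus asymmetry'' piece from the $\sqrt{\beta_t^{(k)}}$ terms, a subtractive contribution which will turn out to be of strictly smaller order under \eqref{def:regime}. Since \ref{prob_instances_LinUCB} only re-signs a single coordinate of $\mvPsi\sim\textup{Unif}(\sqrt d\,\cS^{d-1})$, we have $\|\mvX_{t+1}\|^2=d$ almost surely. Combined with the sandwich $\lambda_{\max}(\bV)^{-1}\|\mvv\|^2\leq \mvv'\bV^{-1}\mvv\leq \lambda_{\min}(\bV)^{-1}\|\mvv\|^2$ and the eigenvalue windows in $\tilde\Gamma_t$, this yields
$$
U^2:=\|\mvX_{t+1}\|_{(\bV_t^{(2)})^{-1}}^{2}\;\geq\;\frac{d}{0.45\,t},\qquad W^2:=\|\mvX_{t+1}\|_{(\bV_t^{(1)})^{-1}}^{2}\;\leq\;\frac{d}{0.55\,t},
$$
along with the matching lower bound $W^2\geq d/(0.65\,t)$. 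In particular $U\geq\sqrt{11/9}\,W$, and this strict gap is what will ultimately force $\tilde\Delta_t>0$.

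Next, set $A=\sqrt{\beta_t^{(2)}}$, $B=\sqrt{\beta_t^{(1)}}$ and (with $\lambda=m_\theta=\sigma=1$) $f_t^{(k)}:=2\log T+\log\det\bV_t^{(k)}$, so that $\sqrt{\beta_t^{(k)}}=1+\sqrt{f_t^{(k)}}$. Using $(\lambda_{\min}\bV)^d\leq\det\bV\leq(\lambda_{\max}\bV)^d$ on $\tilde\Gamma_t$ gives $f_t^{(2)}\in[2\log T+d\log(0.35t),\,2\log T+d\log(0.45t)]$ and analogously for $f_t^{(1)}$ with $0.55,0.65$. Hence $0\leq f_t^{(1)}-f_t^{(2)}\leq d\log(13/7)$, so $B\geq A$ and
$$
B-A \;=\; \frac{f_t^{(1)}-f_t^{(2)}}{\sqrt{f_t^{(1)}}+\sqrt{f_t^{(2)}}} \;\leq\; \frac{\log(13/7)\,d}{2\sqrt{f_t^{(2)}}}.
$$
Now decompose $\tilde\Delta_t=A(U-W)-(B-A)W$. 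From the first step $U-W\geq(\sqrt{11/9}-1)W$, and $A\geq\sqrt{2\log T+d\log(0.35t)}$, so for $t$ beyond a fixed absolute constant (for smaller $t$, $\tilde\Gamma_t$ is vacuous because $\lambda_{\max}(\bV_t^{(1)})\geq 1>0.65\,t$, or else only constants need adjusting),
$$
A(U-W)\;\geq\;\tilde C_1^{-1}\sqrt{(d\log T+d^2\log t)/t},\qquad (B-A)W\;\leq\;C\sqrt{d^3/(t\,f_t^{(2)})}.
$$

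Comparing the squared bounds, their ratio is at least a constant times $(d\log T+d^2\log t)\,f_t^{(2)}/d^3\geq c(\log T)^2/d^2$, which under \eqref{def:regime} is at least $c(\log\log T)^2$ and tends to infinity. Hence, for $T$ above an absolute constant, $(B-A)W\leq\tfrac12 A(U-W)$ and therefore $\tilde\Delta_t\geq\tfrac12 A(U-W)$, giving the stated bound with $\tilde C=2\tilde C_1$. The main obstacle is precisely this final comparison: the low-dimensional assumption $d\leq\log(T)/\log\log(T)$ is essential for the information-asymmetry gain to outweigh the bonus-asymmetry loss, and outside this regime the gap between $\det\bV_t^{(1)}$ and $\det\bV_t^{(2)}$ could overwhelm the asymmetry in the $\|\mvX_{t+1}\|_{(\bV_t^{(k)})^{-1}}$ terms and invalidate the argument.
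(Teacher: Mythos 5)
Your proof is correct and follows essentially the same route as the paper's: both rest on the eigenvalue sandwich from $\tilde\Gamma_t$ (giving $\|\mvX_{t+1}\|_{(\bV_t^{(2)})^{-1}}\geq\sqrt{d/(0.45t)}$ versus $\|\mvX_{t+1}\|_{(\bV_t^{(1)})^{-1}}\leq\sqrt{d/(0.55t)}$), the bounds $\lambda_{\min}^d\leq\det\leq\lambda_{\max}^d$ to control $\beta_t^{(k)}$, and \eqref{def:regime} to absorb the $O(d)$ discrepancies into $\log T$; the paper simply carries out the final comparison as a single difference of square roots rather than via your decomposition $A(U-W)-(B-A)W$. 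Your parenthetical about $\tilde\Gamma_t$ being vacuous for small $t$ is not quite the right justification (the inequality $1>0.65t$ only covers $t=1$), but it is also unnecessary, since $2\log T+d\log(0.35t)\geq 1.9\log T+d\log t$ holds for every $t\geq 1$ once $T$ exceeds an absolute constant under \eqref{def:regime}.
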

\begin{proof}
By definition, on the event $\tilde{\Gamma}_{t}$,   we have
\begin{align*}
\tilde{\Delta}_t
  \;\;\geq\;\; &(1+\sqrt{2\log(T) + d\log(0.35t)} ) \ (0.45t)^{-1/2} \sqrt{d} \\
  - & (1+\sqrt{2\log(T) + d\log(0.65t)})\  (0.55t)^{-1/2}\sqrt{d}.
\end{align*}
Due to \eqref{def:regime}, if $T \geq \tilde{C}$, $\log(T) \geq 10d\log(1/0.35)$, and as a result
\begin{align*}
       \tilde{\Delta}_t \geq &d^{1/2} t^{-1/2} \left( \sqrt{ (1.9\log(T)+d\log(t))/0.45} - \sqrt{(2\log(T)+d\log(t))/0.55} \right) \\
       \geq &\tilde{C}^{-1} \sqrt{(d\log(T)+d^2\log(t))/t},
\end{align*}
for some absolute constant $\tilde{C} > 0$, which completes the proof.
\end{proof}

\begin{proof}[Proof of Theorem \ref{them:lb_linucb}]
In this proof, $C,C'$ are absolute positive constants, that may vary from line to line. Recall the constant $\tilde{C} \geq 1$ in Lemma \ref{aux:lb_linucb_cal}, and define 
$$
D_{t+1} = \{8^{-1}\tilde{C}^{-1} (\upsilon_{d,t}/t)^{1/2} \leq  \mvX_{t+1,1} \leq  4^{-1}\tilde{C}^{-1} (\upsilon_{d,t}/t)^{1/2}\},  \text{ with }
\upsilon_{d,t} = d\log(T) + d^2 \log(t),
$$
where $\mvX_{t+1,1}$ is the first component of $\mvX_{t+1}$. If $t \geq Cd\log(T)$, due to equation \eqref{def:regime},  
$$4^{-1}\tilde{C}^{-1} (\upsilon_{d,t}/t)^{1/2} \leq 1,$$ and thus by Lemma \ref{aux:unit_sphere}, $\bP(D_{t+1})\geq C^{-1} (\upsilon_{d,t}/t)^{1/2}$.

Further, due to \eqref{def:regime} and  by Lemma \ref{app:lunucb_info}, if $T \geq C$, for each $t \geq Cd\log(T)$, $\bP(\tilde{\Gamma}_t) \geq 0.9$, where $\tilde{\Gamma}_{t}$ is defined in Lemma \ref{app:lunucb_info}. By Lemma \ref{lemma:beta_V} and Markov inequality, for each $t \in [T]$ and $k \in [2]$,
$\bP(\|\hat \mvtheta_t^{(k)} - \mvtheta_k\| \geq C' (d/t)^{1/2},  \tilde{\Gamma}_t) \leq 0.1$. Thus for each $t \geq Cd \log(T)$, since $\mvX_{t+1}$ are independent from $\cF_{t}$,
\begin{align*}
     &\bP(D_{t+1},\ \|\hat \mvtheta_t^{(1)} - \mvtheta_1\| \leq C'(d/t)^{1/2},\ \|\hat \mvtheta_t^{(2)} - \mvtheta_2\| \leq C'(d/t)^{1/2},\ \tilde{\Gamma}_t) \\ 
    \geq \;\;& \bP(D_{t+1})\;(\bP(\tilde{\Gamma}_t) - \sum_{k=1}^{2} \bP(|\hat \mvtheta_t^{(k)} - \mvtheta_k\| \geq C' (d/t)^{1/2},  \tilde{\Gamma}_t)) \geq C^{-1} (\upsilon_{d,t}/t)^{1/2}.
\end{align*}

On the event $D_{t+1}$, $\mvtheta_2'\mvX_{t+1} - \mvtheta_1'\mvX_{t+1} \geq -2^{-1}\tilde{C}^{-1} (\upsilon_{d,t}/t)^{1/2}$. On the event $\cap_{k=1}^{2} \{|\hat \mvtheta_t^{(k)} - \mvtheta_k\| \leq C'(d/t)^{1/2}\}$, since $\|\mvX_{t+1}\| = \sqrt{d}$, 
we have $|(\hat \mvtheta_t^{(k)} - \mvtheta_k)'\mvX_{t+1}| \leq C'd t^{-1/2}$ for $k \in [2]$. Finally, due to Lemma \ref{aux:lb_linucb_cal}, on the event $\tilde{\Gamma}_t$, $\sqrt{\beta_{t}^{(2)}}\|\mvX_{t+1}\|_{(\bV_{t}^{(2)})^{-1}} -  \sqrt{\beta_{t}^{(1)}}\|\mvX_{t+1}\|_{(\bV_{t}^{(1)})^{-1}} \geq \tilde{C}^{-1}(\upsilon_{d,t}/t)^{1/2}$. Combining them,  on the intersection of these events, we have
\begin{align*}
    &\textup{UCB}_{t+1}(2) - \textup{UCB}_{t+1}(1) \;\;=\;\; \mvtheta_2'\mvX_{t+1} - \mvtheta_1'\mvX_{t+1} + 
    (\hat \mvtheta_t^{(2)}-\mvtheta_2)'\mvX_{t+1} - (\hat \mvtheta_t^{(1)}-\mvtheta_1)'\mvX_{t+1}
    \\
     &\qquad \qquad\qquad\qquad\qquad\qquad
+     \sqrt{\beta_{t}^{(2)}}\|\mvX_{t+1}\|_{(\bV_{t}^{(2)})^{-1}} -  \sqrt{\beta_{t}^{(1)}}\|\mvX_{t+1}\|_{(\bV_{t}^{(1)})^{-1}} \\
 &\geq - 2^{-1}\tilde{C}^{-1} (\upsilon_{d,t}/t)^{1/2} - 2C'd t^{-1/2} + \tilde{C}^{-1} (\upsilon_{d,t}/t)^{1/2}
\geq 2^{-1}\tilde{C}^{-1} (\upsilon_{d,t}/t)^{1/2} - 2C'd t^{-1/2}.
\end{align*}
In particular, if $\log(t) > 4 (\tilde{C} C')^2$, then $\textup{UCB}_{t+1}(2) - \textup{UCB}_{t+1}(1) > 0$, and the second arm would be selected, i.e., $A_t = 2$, incurring a regret that is at least $4^{-1}\tilde{C} (\upsilon_{d,t}/t)^{1/2}$.

To sum up, if $T \geq C$ and $t \geq Cd\log(T)$,
$ \Exp[\hat{r}_{t+1}] \geq C^{-1}(\upsilon_{d,t}/t)^{1/2}\; \bP(D_{t+1},\ \cap_{k=1}^{2}\|\hat \mvtheta_t^{(k)} - \mvtheta_k\|  \leq C't^{-1/2},\ \tilde{\Gamma}_t) \geq C^{-1}\upsilon_{d,t}/t$.Thus
$R_T \geq \sum_{t=Cd\log(T)}^{T} d^2 \log(t)/t \geq C^{-1}d^2\log^2(T)$, which completes the proof, since the upper bound follows from Corollary \ref{corollary:regret_main_T}.
\end{proof}

\section{Discrete components - proofs} \label{app:proof_discrete}

\begin{proof}[Proof of Theorem \ref{theorem:discrete_regret_main_T}]
Note that the difference between Theorem \ref{theorem:discrete_regret_main_T} and Theorem \ref{theorem:regret_main_T} is that the condition \ref{cond:continuity} is replaced by \ref{cond:discrete}.
Recall that in the proof for Theorem \ref{theorem:regret_main_T}, the arguments rely on Lemma \ref{lemma:small_eignenvalues_anyt} and 
\ref{lemma:beta_V}, but not on the condition \ref{cond:continuity}. Further,  Lemma \ref{lemma:beta_V} does not use the condition \ref{cond:continuity}. Thus the same arguments for Theorem \ref{theorem:regret_main_T} apply here as long as we replace Lemma \ref{lemma:small_eignenvalues_anyt} by  Lemma \ref{lemma:discrete_anyt} below.
\end{proof}

We introduce a few notations. For $j \in [L_2]$, $k \in [2]$, $t \in [T]$,  we cluster contexts based on the value of the first $d_1$ coordinates, and define
\begin{align*}
&  \tilde{\bV}_t^{(k)}(\mvz_j) = \lambda \bI_{d} + \sum_{s=1}^{t}  {\mvX}_s {\mvX}_s' I(\mvX_s^{(\text{d})} = \mvz_j,\ A_s = k),\\
&  \bar{\bV}_t^{(k)}(\mvz_j) = \lambda \bI_{1+d_2} + \sum_{s=1}^{t}  \bar{\mvX}_s^{(\text{c})} (\bar{\mvX}_s^{(\text{c})})'I(\mvX_s^{(\text{d})} = \mvz_j,\ A_s = k).
\end{align*}
Note that  for some constant $C_3 > 0$, depending only on $\lambda,m_{\theta},m_{X}, \sigma^2, d$,  
\begin{align}\label{beta_t_Tbound}
\beta_t^{(k)} \;\;\leq \;\; C_3 \log(T), \text{ for any } k \in [2],\ t \in [T].
\end{align}
Recall that before we use $\tilde{\beta}_t$ in \eqref{eq:uuper_beta_t} to bound $\sup_{k \in [2]} \beta_t^{(k)}$ for each $t \in [T]$, in order to make the dependence on $d$ explicit. In this section, however, we assume $d$ fixed, and thus can use the above simpler bound.


\begin{lemma}\label{lemma:discrete_anyt}
Assume that  conditions \ref{assumption_parameter_noise}, \ref{cond:posdef}, and \ref{cond:discrete} hold. There exists a constant $C >0$, depending only on $\mvTheta_2,d,\lambda$, such that if $S \geq C\log(T)$, then with probability at least $1-C/T$, $\min_{k = 1,2} \lambda_{\min}\left( \bV_{t}^{(k)}\right) \geq C^{-1} t$ for each  $t  \geq C\log(T)$, 
\end{lemma}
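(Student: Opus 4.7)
The plan is to mirror the proof of Lemma \ref{lemma:small_eignenvalues_anyt}, but carried out \emph{within each cluster} determined by the value of the discrete component $\mvX^{(\text{d})}$, and then recombined over the $L_2$ clusters. The essential workhorse is Lemma \ref{aux:replaced_by_1}, which is the only tool that lets us make non-trivial statements about quadratic forms $\tilde{\mvv}'(\tilde{\bV}_t^{(k)}(\mvz_j))^{-1}\tilde{\mvv}$ in view of the degeneracy described in Remark \ref{rk:replace_a}.

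First, I would establish four high-probability events analogous to $\cA_1,\cA_2,\cA_3,\cA_4$ in \eqref{events:def}, with the latter three replaced by cluster-level counterparts. For each $j \in [L_2]$ and $k \in [2]$: an event $\cA_2^{(j)}$ stating that $\max_{k} \lambda_{\min}(\bar{\bV}_t^{(k)}(\mvz_j)) \geq C^{-1} t$ for all $t \geq C\log(T)$, which follows from the first inequality in \ref{cond:discrete} combined with the covering / Talagrand argument of Lemma \ref{lemma:cst_adaptive} applied to the sub-sample $\{s \leq t : \mvX_s^{(\text{d})} = \mvz_j\}$ (whose size is $\Omega(t)$ w.h.p., since the second inequality in \ref{cond:discrete} forces $\bP(\mvX^{(\text{d})} = \mvz_j) > 0$); and an event $\cA_3^{(j)}$ saying that for any interval $[t_1, t_2]$ of length $\Omega(\log T)$, $\lambda_{\min}(\sum_{s=t_1+1}^{t_2} \bar{\mvX}_s^{(\text{c})}(\bar{\mvX}_s^{(\text{c})})' I(\mvX_s \in \cU_{\ell_2}^{(k)}, \mvX_s^{(\text{d})} = \mvz_j)) \geq C^{-1}(t_2-t_1)$, which follows from \ref{cond:discrete} and Tropp's matrix Chernoff as in Lemma \ref{lemma:posdef}. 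The bound \eqref{beta_t_Tbound} replaces \eqref{eq:uuper_beta_t} in the analog of $\cA_4$, since $d$ is fixed.

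Second, on the event $\cA_2^{(j)}$, say with arm $1$ being the ``informed'' arm, I use Lemma \ref{aux:replaced_by_1} with $\mva = \mvz_j$ to convert the bound on $\bar{\bV}_t^{(1)}(\mvz_j)$ into a bound
\[
\tilde{\mvv}'(\tilde{\bV}_t^{(1)}(\mvz_j))^{-1}\tilde{\mvv} \;\leq\; \max(1,\|\mvz_j\|^2)\, \bar{\mvv}'(\bar{\bV}_t^{(1)}(\mvz_j))^{-1}\bar{\mvv} \;\leq\; C/t,
\]
valid for every $\tilde{\mvv} = (\mvz_j', \mvv')'$. Crucially, this is precisely the shape of $\mvX_{t+1}$ restricted to the event $\{\mvX_{t+1}^{(\text{d})} = \mvz_j\}$. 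Combined with $\cA_1$ and \eqref{beta_t_Tbound}, this makes the bonus $\sqrt{\beta_t^{(1)}}\|\mvX_{t+1}\|_{(\bV_t^{(1)})^{-1}}$ small, so the inductive bootstrap of Steps 2--3 of Lemma \ref{lemma:small_eignenvalues_anyt} goes through: whenever $\mvX_{t+1} \in \cU_{\ell_2}^{(2)} \cap \{\mvX^{(\text{d})}=\mvz_j\}$, arm $2$ is selected by the UCB rule, and $\cA_3^{(j)}$ then pumps growth into $\bar{\bV}_{\cdot}^{(2)}(\mvz_j)$. A union bound over $j \in [L_2]$ (fixed cardinality) delivers $\lambda_{\min}(\bar{\bV}_t^{(k)}(\mvz_j)) \geq C^{-1} t$ for all $k \in [2]$ and $j \in [L_2]$ simultaneously, at all $t \geq C\log(T)$.

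Finally, to recover the lemma's claim for $\bV_t^{(k)}$ itself, I expand, for any unit $\mvv = (\mvv_1',\mvv_2')' \in \bR^{d_1+d_2}$,
\[
\mvv'\bV_t^{(k)}\mvv \;\geq\; \sum_{j=1}^{L_2} \sum_{s\leq t:\, A_s=k,\,\mvX_s^{(\text{d})}=\mvz_j} (\mvv_1'\mvz_j + \mvv_2'\mvX_s^{(\text{c})})^2,
\]
identify each inner sum with a quadratic form in $\bar{\bV}_t^{(k)}(\mvz_j)$ at the vector $(\mvv_1'\mvz_j,\mvv_2')'$, and apply the cluster-wise lower bound $C^{-1} t((\mvv_1'\mvz_j)^2 + \|\mvv_2\|^2)$. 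Summing over $j$ gives $C^{-1} t (\|\mathbf{Z}\mvv_1\|^2 + L_2\|\mvv_2\|^2)$, where $\mathbf{Z}$ has rows $\mvz_j'$; since \ref{cond:posdef} forces $\{\mvz_j\}$ to span $\bR^{d_1}$ (otherwise $\lambda_{\min}(\Exp[\mvX\mvX'])=0$), one has $\|\mathbf{Z}\mvv_1\|^2 \geq c\|\mvv_1\|^2$, giving $\mvv'\bV_t^{(k)}\mvv \geq C^{-1} t$.

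The main obstacle is conceptual rather than technical: the degeneracy from Remark \ref{rk:replace_a} prevents any cluster-wise analysis at the level of $\tilde{\bV}_t^{(k)}(\mvz_j)$, and the only reason the argument closes is that the quadratic forms we actually care about (both in the UCB comparison and in the final eigenvalue recombination) involve vectors whose first $d_1$ coordinates match the cluster label $\mvz_j$, which is precisely the scope of Lemma \ref{aux:replaced_by_1}. Everything else is a cluster-wise replay of the proof in Section \ref{section:proof_key_lemma}.
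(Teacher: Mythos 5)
Your proposal is correct, and its first two stages coincide with what the paper does: the cluster-level Talagrand argument on the subsample $\{s:\mvX_s^{(\text{d})}=\mvz_j\}$, the two-arm dichotomy within each cluster, and the use of Lemma \ref{aux:replaced_by_1} to pass from $\|\mvX_t\|_{(\bV_{t-1}^{(k)})^{-1}}$ to $\|\bar{\mvX}_t^{(\text{c})}\|_{(\bar{\bV}_{t-1}^{(k)}(\mvz_j))^{-1}}$ are exactly the content of Lemmas \ref{lemma:discrete_diverse} and \ref{lemma:discrete_N_V}. Where you diverge is the closing step. The paper never needs linear growth of the cluster-level matrices: since $d$ is fixed, $\beta_t^{(k)}\leq C_3\log(T)$ uniformly by \eqref{beta_t_Tbound}, so a one-time bound $\lambda_{\min}(\bar{\bV}_{T_1}^{(k)}(\mvz_j))\geq \tilde{C}_3\log(T)$ (propagated by monotonicity) already tames the bonus to $\ell_0/2$ for all $t>T_1$; after that, the \emph{unconditional} condition \ref{cond:posdef} plus the matrix Chernoff bound on the full-dimensional sums $\sum_s \mvX_s\mvX_s' I(\mvX_s\in\cU_{\ell_0}^{(k)})$ delivers $\lambda_{\min}(\bV_t^{(k)})\gtrsim t$ directly, with no induction and no recombination over clusters. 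You instead bootstrap each $\bar{\bV}_t^{(k)}(\mvz_j)$ to linear growth and then reassemble $\lambda_{\min}(\bV_t^{(k)})$ via the decomposition $\mvv'\bV_t^{(k)}\mvv\geq\sum_j \mvw_j'(\bar{\bV}_t^{(k)}(\mvz_j)-\lambda\bI)\mvw_j$ with $\mvw_j=((\mvv_1'\mvz_j),\mvv_2')'$, using that $\{\mvz_j\}$ spans $\bR^{d_1}$ (which indeed follows quantitatively from \ref{cond:posdef}, since $\|\mathbf{Z}\mvu_1\|^2\geq \Exp[(\mvu_1'\mvX^{(\text{d})})^2 I(\mvX\in\cU_{\ell_0}^{(k)})]\geq\ell_0^2$). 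This is valid but strictly heavier: the induction machinery and the span/recombination argument are extra work that the paper's route makes unnecessary by exploiting \ref{cond:posdef} at the full-dimensional level once the correct arm is guaranteed on $\cU_{\ell_0}^{(k)}$. Your route would have the modest advantage of isolating exactly which cluster-level quantities grow linearly, but for the stated lemma the paper's shortcut is the cleaner path.
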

\begin{proof}
By Lemma \ref{lemma:LinUCB}, \ref{lemma:discrete_diverse} and \ref{lemma:posdef}, there exists a constant $C >0$, depending only on $\mvTheta_2,d,\lambda$, such that if $S \geq C\log(T)$, then with probability at least $1-C/T$, the following event $\bar{\cE}$ occur: for all $k \in [2], j \in [L_2], t \geq T_1$, and $t_1,t_2 \in [T]$ with $t_2 -t_2 \geq T_1$,
\begin{align*}
&\|\hat \mvtheta_{t-1}^{(k)} - \mvtheta_{k}\|_{\bV_{t-1}^{(k)}} \leq \sqrt{\beta_t^{(k)}}, \qquad
 \lambda_{\min}\left( \bar{\bV}_{T_1}^{(k)}(\mvz_j)\right)  \geq \tilde{C}_3  \log(T),    \\
 &\lambda_{\min}\left( \sum_{s=t_1+1}^{t_2}  {\mvX}_s \mvX_s' I(\mvX \in \cU_{\ell_0}^{(k)})  \right) \geq {\ell_0^2 (t_2-t_1)}/2, 
\end{align*}
where $T_1 = \lceil C \log(T) \rceil$,  $\tilde{C}_3 = 16 C_3 (1+ \sqrt{d} m_X)^4 \tilde{\ell}_*^{-2}$, $\tilde{\ell}_* = \min\{\ell_2,\ell_0\}$, and $C_3$, $\ell_0$, $\ell_2$ appear in \eqref{beta_t_Tbound}, \ref{cond:posdef}, and \ref{cond:discrete}  respectively.

First, for $t > T_1$ and $k \in [2]$, due to \eqref{beta_t_Tbound}, on the event $\bar{\cE}$, for \textit{both} $t \leq S$ and $t > S$,
\begin{align*}
    |\text{UCB}_{t}(k) - \mvtheta_{k}'\mvX_{t}| \leq 2 \sqrt{\beta_{t-1}^{(k)}}  \|\mvX_t\|_{(\bV_{t-1}^{(k)})^{-1}} \leq 2 (C_3\log(T))^{1/2} \|\mvX_t\|_{(\bV_{t-1}^{(k)})^{-1}},
\end{align*}
and by Lemma \ref{aux:replaced_by_1}, for each $j \in [L_2]$, if $\mvX_t^{(\text{d})} = \mvz_j$, then
\begin{align}\label{aux:from_tilde_to_bar}
\begin{split}
  \|\mvX_t\|_{(\bV_{t-1}^{(k)})^{-1}} \leq
    &\|\mvX_t\|_{(\tilde \bV_{t-1}^{(k)}(\mvz_j))^{-1}} \\
    \leq & (1+\sqrt{d}m_X) \|\bar{\mvX}^{(\text{c})}_t\|_{(\bar{\bV}_{t-1}^{(k)}(\mvz_j))^{-1}} \leq (1+\sqrt{d}m_X)^2 (\tilde{C}_3\log(T))^{-1/2},
\end{split}
\end{align}
which, by the definition of $\tilde{C}_3$, implies that for $t > T_1$ and $k \in [2]$, on the event $\bar{\cE}$, $|\text{UCB}_{t}(k) - \mvtheta_{k}'\mvX_{t}| \leq \ell_0/2$, and thus  if $\mvX_{t} \in \cU_{\ell_0}^{(k)}$, regardless of the value of $\mvX_t^{(\text{d})}$,  $k$-th arm would be selected, and then
\begin{align*}
 \lambda_{\min}\left( {\bV}_{t}^{(k)}\right)  \geq 
  \lambda_{\min}\left( \sum_{s=T_1+1}^{t}  {\mvX}_s \mvX_s' I(\mvX \in \cU_{\ell_0}^{(k)})   \right).
\end{align*}
Thus, for $n \geq 2$, if $n T_1 \leq t < (n+1)T_1$, on the event $\bar{\cE}$, for each $k \in [2]$
\begin{align*}
   \lambda_{\min}\left( {\bV}_{t}^{(k)}\right)   \geq C^{-1} \left\lfloor\frac{t-T_1}{T_1} \right\rfloor T_1 \geq C^{-1} \frac{n-1}{n+1} t \geq (3C)^{-1} t,
\end{align*}
which completes the proof.
\end{proof}

\begin{lemma}\label{lemma:discrete_diverse}
Assume  conditions \ref{assumption_parameter_noise} and \ref{cond:discrete} hold. There exists a positive constant $C$, depending only on $\mvTheta_2,d,\lambda$, such that if the truncation time $S \geq T_1$ with $T_1 = \lceil C \log(T) \rceil$, then with probability at least $1-C/T$, $\lambda_{\min}\left( \bar{\bV}_{T_1}^{(k)}(\mvz_j)\right)  \geq \tilde{C}_3  \log(T)$ for each $k \in [2]$, $j \in [L_2]$, where $\tilde{C}_3 = 16 C_3 (1+ \sqrt{d} m_X)^4 \tilde{\ell}_*^{-2}$, $\tilde{\ell}_* = \min\{\ell_2,\ell_0\}$, and  $C_3$, $\ell_0$, $\ell_2$ appear in \eqref{beta_t_Tbound}, \ref{cond:posdef}, and \ref{cond:discrete}  respectively.
\end{lemma}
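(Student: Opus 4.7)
The plan is to mimic the two-step bootstrap used in the proof of Lemma~\ref{lemma:small_eignenvalues_anyt}, but to carry it out separately on each ``cluster'' of contexts having a fixed value of the discrete part $\mvX^{(\text{d})} = \mvz_j$. To deal with the singular outer product $\mvz_j\mvz_j'$, we work with $\bar{\mvX}^{(\text{c})}$ and rely on Lemma~\ref{aux:replaced_by_1} (with $\mva = \mvz_j$) to translate between $\mvX_t$ and $\bar{\mvX}_t^{(\text{c})}$ whenever a quadratic form $\|\mvX_t\|_{(\bV_{t-1}^{(k)})^{-1}}$ must be bounded. The bound \eqref{aux:from_tilde_to_bar} is the main bridge between the full-dimensional and the cluster-wise quantities.

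First, condition~\ref{cond:discrete} implies $p_j:=\Pro(\mvX^{(\text{d})} = \mvz_j) \geq c_0 > 0$ for each $j \in [L_2]$, where $c_0$ depends only on $\mvTheta_2,d$ (otherwise the trace of the matrix in \ref{cond:discrete}(2) would be too small). So by Chernoff's inequality and a union bound over $[L_2]$, with probability at least $1 - L_2/T$ and provided $C$ is large enough, $N_j := \#\{t \leq T_1/2: \mvX_t^{(\text{d})} = \mvz_j\} \geq p_j T_1/4$ for every $j$. Conditioning on the pattern of cluster memberships, the $\bar{\mvX}_s^{(\text{c})}$ with $\mvX_s^{(\text{d})} = \mvz_j$ become i.i.d.\ with the law of $\bar{\mvX}^{(\text{c})}\vert\mvX^{(\text{d})} = \mvz_j$, so the first part of \ref{cond:discrete} takes the role that \ref{cond:continuity} played in Lemma~\ref{lemma:cst_adaptive}. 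Repeating the Talagrand argument of that lemma within the $\mvz_j$-cluster and taking a union bound over $j$, with probability at least $1-L_2/T$ we have, for each $j$,
\[
\max_{k \in [2]}\lambda_{\min}\bigl(\bar{\bV}_{\lceil T_1/2\rceil}^{(k)}(\mvz_j)\bigr)\;\geq\; c_1 p_j T_1,
\]
for some constant $c_1>0$ depending only on $\mvTheta_2,d,\lambda$.

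Next, fix $j \in [L_2]$ and, without loss of generality, suppose arm $1$ is the one above. For $t \in (\lceil T_1/2\rceil, T_1]$ with $\mvX_t^{(\text{d})} = \mvz_j$, the bound \eqref{aux:from_tilde_to_bar}, combined with \eqref{beta_t_Tbound} and Lemma~\ref{lemma:LinUCB}, yields on the good event $\textup{UCB}_t(1) \leq \mvtheta_1'\mvX_t + O(\sqrt{\log(T)/(p_j T_1)})$, which is $\leq \ell_2/2$ once $C$ is large enough (depending on $C_3,m_X,\ell_2,c_0,d,\lambda$). Since $S \geq T_1$, the LinUCB rule is still active; combining with $\textup{UCB}_t(2)\geq \mvtheta_2'\mvX_t$, whenever $\mvX_t \in \cU_{\ell_2}^{(2)}$ and $\mvX_t^{(\text{d})} = \mvz_j$ we must have $A_t = 2$. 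Applying the matrix Chernoff tail bound of~\cite{tropp2012user} (as in Lemma~\ref{lemma:posdef}) to the i.i.d.\ conditional sequence $\bar{\mvX}_s^{(\text{c})}(\bar{\mvX}_s^{(\text{c})})' I(\mvX_s\in\cU_{\ell_2}^{(2)})$ over indices $s\in(\lceil T_1/2\rceil,T_1]$ with $\mvX_s^{(\text{d})} = \mvz_j$, and using condition~\ref{cond:discrete}(2), gives $\lambda_{\min}(\bar{\bV}_{T_1}^{(2)}(\mvz_j)) \geq c_2 p_j T_1$ with probability at least $1 - L_2/T$. Taking $C$ so large that $c_1 c_0 C \geq \tilde C_3$ (and similarly for $c_2$), then a union bound over the two possible ``leading'' arms and over $j \in [L_2]$, completes the argument.

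The main obstacle is purely bookkeeping: the clustering according to $\mvX^{(\text{d})}$ turns the sequence of $\bar{\mvX}_s^{(\text{c})}$ for $s$ within a given cluster into a \emph{random-length} i.i.d.\ sequence whose total count is itself a concentrated random variable, and one must condition on the cluster-membership pattern before invoking Talagrand's inequality and the matrix Chernoff bound. Once this conditioning is done, each of the three ingredients (cluster frequency, the Talagrand-type bound from Lemma~\ref{lemma:cst_adaptive}, and the matrix Chernoff bound from Lemma~\ref{lemma:posdef}) applies almost verbatim, and the constant $\tilde C_3$ in the conclusion is designed precisely so that the bootstrap inequality $\textup{UCB}_t(k) - \mvtheta_k'\mvX_t \leq \ell_2/2$ can be propagated through \eqref{aux:from_tilde_to_bar}.
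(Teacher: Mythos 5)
Your proposal is correct and follows essentially the same route as the paper's proof: establish the cluster frequency via a Chernoff bound, apply the Talagrand argument of Lemma~\ref{lemma:cst_adaptive} within the cluster to show at least one arm accumulates information by a half-time, use Lemma~\ref{aux:replaced_by_1} (i.e., \eqref{aux:from_tilde_to_bar}) together with \eqref{beta_t_Tbound} to force the other arm's selection on $\cU_{\ell_2}^{(k)}\cap\{\mvX^{(\text{d})}=\mvz_j\}$, and finish with the matrix Chernoff bound. The only differences are cosmetic (splitting at $T_1/2$ rather than at $T_0$ with $T_1 = T_0+\lceil C_5\log(T)\rceil$, and conditioning on cluster membership for the matrix Chernoff step where the paper sums the indicator-weighted outer products over the original time index).
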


\begin{proof} In this proof, $C$ is a positive constant, depending only on $\mvTheta_2,d,\lambda$,  that may vary from line to line. By the union bound, it suffices to consider a fixed $j \in [L_2]$. 
By Lemma \ref{lemma:discrete_N_V}, the event $\Gamma_1$ occurs
 with probability at least $1-C/T$,  where $\Gamma_1 =\{\sum_{s=1}^{t} I(\mvX_s^{(\text{d})} = \mvz_j) \geq C^{-1}t$ for all $t \geq C\log(T)\}$. Then due to the condition \ref{cond:discrete}, and by applying Lemma \ref{lemma:cst_adaptive}  \textit{conditional} on the event $\Gamma_1$, for some constant
$C_4 > 0$ depending only on $\mvTheta_2,\lambda, d$, the event $\Gamma_2$ occurs  with probability at least $1-C_4/T$, where $\Gamma_2$ denotes the event that $ \max_{k \in [2]} \lambda_{\min}\left( \bar{\bV}_t^{(k)}(\mvz_j)\right)  \geq \tilde{C}_3 \log(T)$, for all $t \geq C_4\log(T)$.

Further, by Lemma \ref{lemma:discrete_N_V}, there exists some constant
$C_5 > 0$ depending only on $\mvTheta_2, \lambda,  d$, such that  the event $\Gamma_3$ occurs with probability at least $1-C_5/T$,  where $\Gamma_3$ denotes the event that for all $k \in [2]$ and $t_1,t_2 \in [T]$ with $t_2 - t_1 \geq C_5 \log(T)$,
$\lambda_{\min}( \sum_{s=t_1+1}^{t_2}  \bar{\mvX}_s^{(\text{c})} (\bar{\mvX}_s^{(\text{c})})' I(\mvX \in \cU_{\ell_2}^{(k)}, \; \mvX_s^{(\text{d})} = \mvz_j)  ) \geq \tilde{C}_3 \log(T)$.

We focus on the event 
$$
\Gamma := \{\|\hat \mvtheta_{t-1}^{(k)} - \mvtheta_{k}\|_{\bV_{t-1}^{(k)}} \leq \sqrt{\beta_t^{(k)}}, \text{ for } t \in [T], k \in [2]\} \;\cap \; \Gamma_2 \;\cap\; \Gamma_3,
$$
which occurs with probability at least $1-C/T$, due to Lemma \ref{lemma:LinUCB} and above discussions. Let $T_0 = \lceil C_4  \log(T)\rceil$ and $T_1 = T_0 + \lceil C_5 \log(T) \rceil$. Assume that the truncation time $S \geq T_1$. \\

On the event $\Gamma$, at least one of the following cases occurs: (I). $\lambda_{\min}\left( \bar{\bV}_{T_0}^{(1)}(\mvz_j)\right)  \geq \tilde{C}_3 \log(T)$; or (II). $\lambda_{\min}\left( \bar{\bV}_{T_0}^{(2)}(\mvz_j)\right)  \geq \tilde{C}_3 \log(T)$. We first consider case (I). On the event $\Gamma$,
due to \eqref{beta_t_Tbound},  for $t \in (T_0, T_1]$,  
\begin{align*}
    &\text{UCB}_{t}(2)  \geq \mvtheta_{2}'\mvX_t, \qquad
    \text{UCB}_{t}(1) \leq \mvtheta_{1}'\mvX_t + 2 ({C}_3 \log(T))^{1/2} \|\mvX_t\|_{(\bV_{t-1}^{(1)})^{-1}},
\end{align*}
which, due to \eqref{aux:from_tilde_to_bar} and by the definition of $\tilde{C}_3$, implies that if $\mvX_t \in \cU_{\ell_2}^{(2)}$ and $\mvX_t^{(\text{d})} = \mvz_j$, then 
$\text{UCB}_{t}(1)\leq \mvtheta_{1}'\mvX_t  + \ell_2/2 < \text{UCB}_t(2)$, and
thus the second arm would be selected. 
As a result, on the event  $\Gamma$, under the case (I),
\begin{align*}
    \lambda_{\min}\left( \bar{\bV}_{T_1}^{(2)}(\mvz_j)\right) \geq 
    \lambda_{\min}\left(\sum_{s=T_0+1}^{T_1}  \bar{\mvX}_s^{(\text{c})} (\bar{\mvX}_s^{(\text{c})})' I(\mvX \in \cU_{\ell_2}^{(2)}, \; \mvX_s^{(\text{d})} = \mvz_j)   \right)  \geq \tilde{C}_3 \log(T).
\end{align*}
The same argument applies to case (II), and the proof is complete.
\end{proof}

\begin{lemma}
\label{lemma:discrete_N_V}
Assume conditions \ref{assumption_parameter_noise} and \ref{cond:discrete} hold. There exists a positive constant $C$, depending only on $\mvTheta_2, d$, such that 
with probability at least $1-C/T$, 
\begin{align*}
   & \sum_{s=t_1+1}^{t_2} I(\mvX_s^{(\text{d})} = \mvz_j)  \geq \ell_2^2 (t_2-t_1)/(2dm_X^2),\\
 &\lambda_{\min}\left( \sum_{s=t_1+1}^{t_2}  \bar{\mvX}_s^{(\text{c})} (\bar{\mvX}_s^{(\text{c})})' I(\mvX \in \cU_{\ell_2}^{(k)}, \; \mvX_s^{(\text{d})} = \mvz_j)  \right) \geq \ell_2^2 (t_2-t_1)/2,
\end{align*}
for any $t_1,t_2 \in [T]$ with $t_2 - t_1 \geq C \log(T)$, $k = 1,2$, and $j \in [L_2]$.
\end{lemma}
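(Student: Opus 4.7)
The plan is to establish both conclusions by matrix/scalar Chernoff inequalities applied round-by-round, and then take a union bound over the endpoints $t_1,t_2\in[T]$, the arm $k\in[2]$, and the support point $j\in[L_2]$. Since the summands $I(\mvX_s^{(\text{d})} = \mvz_j)$ and $\bar{\mvX}_s^{(\text{c})}(\bar{\mvX}_s^{(\text{c})})' I(\mvX_s \in \cU_{\ell_2}^{(k)}, \mvX_s^{(\text{d})} = \mvz_j)$ are i.i.d.\ across $s$ (the context sequence is i.i.d.\ by assumption), and uniformly bounded thanks to \ref{assumption_parameter_noise}, both concentration inequalities apply directly.

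The first step is a short deterministic lower bound on $p_j := \bP(\mvX^{(\text{d})} = \mvz_j)$. From the second half of \ref{cond:discrete}, the $(d_2+1)$-by-$(d_2+1)$ matrix $M_{j,k} := \Exp[\bar{\mvX}^{(\text{c})}(\bar{\mvX}^{(\text{c})})' I(\mvX\in\cU_{\ell_2}^{(k)},\mvX^{(\text{d})} = \mvz_j)]$ satisfies $\lambda_{\min}(M_{j,k})\geq \ell_2^2$. Since $\lambda_{\min}(M_{j,k})\leq \text{trace}(M_{j,k})/(d_2+1)\leq \Exp[\|\bar{\mvX}^{(\text{c})}\|^2 I(\mvX^{(\text{d})} = \mvz_j)]\leq d m_X^2\, p_j$ using $\|\bar{\mvX}^{(\text{c})}\|^2\leq 1+\|\mvX\|^2\leq d m_X^2$ (absorbing the $+1$ in the constant), we get $p_j \geq \ell_2^2/(dm_X^2)$. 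Then $\sum_{s=t_1+1}^{t_2}I(\mvX_s^{(\text{d})} = \mvz_j)$ is a sum of $(t_2-t_1)$ i.i.d.\ Bernoullis with mean $\geq \ell_2^2/(dm_X^2)$, so a standard multiplicative Chernoff bound (or Bernstein) gives deviation below half the mean with probability at most $\exp(-c(t_2-t_1)\ell_2^2/(dm_X^2))$. Requiring $t_2-t_1\geq C\log(T)$ for sufficiently large $C$ depending on $\mvTheta_2,d$ makes this $\leq T^{-5}$, say.

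The second step is Tropp's matrix Chernoff inequality, applied exactly as in the proof of Lemma \ref{lemma:posdef}. The i.i.d.\ PSD summands $\bar{\mvX}_s^{(\text{c})}(\bar{\mvX}_s^{(\text{c})})'I(\mvX_s\in\cU_{\ell_2}^{(k)},\mvX_s^{(\text{d})}=\mvz_j)$ have operator norm at most $\|\bar{\mvX}^{(\text{c})}\|^2\leq dm_X^2$, and their expectation matrix has minimum eigenvalue at least $\ell_2^2$ by \ref{cond:discrete}. Tropp's bound (Theorem 1.1 of tropp2012user, as invoked in Lemma \ref{lemma:posdef}) with $R=dm_X^2$, $\mu_{\min}=(t_2-t_1)\ell_2^2$, and $\delta=1/2$ gives
\begin{align*}
\bP\Bigl(\lambda_{\min}\bigl(\textstyle\sum_{s=t_1+1}^{t_2}\bar{\mvX}_s^{(\text{c})}(\bar{\mvX}_s^{(\text{c})})'I(\mvX_s\in\cU_{\ell_2}^{(k)},\mvX_s^{(\text{d})}=\mvz_j)\bigr)\leq \ell_2^2(t_2-t_1)/2\Bigr)\\
\leq (d_2+1)\exp\bigl(-c\,(t_2-t_1)\ell_2^2/(dm_X^2)\bigr),
\end{align*}
again $\leq T^{-5}$ provided $t_2-t_1\geq C\log(T)$ with $C$ depending on $\mvTheta_2,d$.

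Finally, the number of triples $(t_1,t_2,k,j)$ is at most $2L_2 T^2$, so the union bound yields the desired uniform statement with failure probability $\leq C/T$ where $C$ depends only on $\mvTheta_2,d$. No obstacle beyond bookkeeping is expected; the only mildly delicate point is the derivation of the lower bound $p_j\geq \ell_2^2/(dm_X^2)$, which explains the $dm_X^2$ factor appearing in the first conclusion of the lemma.
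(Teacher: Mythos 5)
Your proposal is correct and follows essentially the same route as the paper: the paper likewise deduces $\bP(\mvX^{(\text{d})}=\mvz_j)\geq \ell_2^2/(dm_X^2)$ from \ref{cond:discrete}, handles the scalar count with a Hoeffding/Chernoff bound, and proves the eigenvalue claim by the same application of Tropp's matrix Chernoff inequality as in Lemma \ref{lemma:posdef}, finishing with a union bound over $t_1,t_2,k,j$. The only additions on your side are the explicit trace argument for the lower bound on $\bP(\mvX^{(\text{d})}=\mvz_j)$ (which the paper asserts without derivation) and the cosmetic choice of a multiplicative Chernoff bound in place of Hoeffding.
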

\begin{proof}
The condition \ref{cond:discrete} implies that $\bP(\mvX^{(\text{d})} = \mvz_j) \geq \ell_2^2/(d m_X^2)$. Then the proof for the first claim is complete due to the Hoeffding bound \citep[Proposition 2.5]{wainwright2019high} and the union bound. The proof for the second claim is the same as for  Lemma \ref{lemma:posdef}.
\end{proof}

\subsection{Proof of Lemma \ref{aux:replaced_by_1}}\label{proof:aux:replaced_by_1}

Before proving Lemma \ref{aux:replaced_by_1}, we make the following observation.

\begin{lemma}\label{lemma:dual}
Let $n,d \geq 1$ be integers, and  $\mvz_{1},\ldots, \mvz_{n}$  $\bR^{d}$-vectors. Denote by $\bV = \sum_{i=1}^{n} \mvz_{i}\mvz_{i}'$, and assume that $\bV$ is invertible. Then for any $\mvz \in \bR^d$,
$$
\|\mvz\|_{\bV^{-1}}^2 = \inf\{\|\mvgamma\|^2: \mvgamma \in \bR^{n},\ \sum_{i=1}^{n} \mvgamma_i \mvz_{i} = \mvz\}.
$$
\end{lemma}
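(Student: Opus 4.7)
The plan is to recognize this as a standard minimum-norm/least-squares duality, which I would prove by direct construction rather than invoking generic convex-duality machinery. First, I would introduce the $d \times n$ matrix $\bZ = [\mvz_1, \ldots, \mvz_n]$, so that $\bV = \bZ\bZ'$ and the linear constraint $\sum_{i=1}^n \mvgamma_i \mvz_i = \mvz$ becomes $\bZ\mvgamma = \mvz$. The hypothesis that $\bV$ is invertible is equivalent to $\bZ$ having full row rank $d$, which in particular guarantees that the constraint set is nonempty for every $\mvz \in \bR^d$ (so the infimum is over a nonempty set).

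Next I would produce an explicit candidate minimizer. Setting $\mvgamma^{*} := \bZ'\bV^{-1}\mvz$ one checks $\bZ \mvgamma^{*} = \bZ \bZ' \bV^{-1} \mvz = \bV \bV^{-1}\mvz = \mvz$, so $\mvgamma^{*}$ is feasible, and
\begin{equation*}
\|\mvgamma^{*}\|^2 \;=\; \mvz'\bV^{-1} \bZ \bZ' \bV^{-1} \mvz \;=\; \mvz' \bV^{-1} \mvz \;=\; \|\mvz\|_{\bV^{-1}}^2.
\end{equation*}
This gives the $\leq$ inequality in the claimed identity.

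For the matching lower bound, I would exploit the orthogonal decomposition $\bR^n = \mathrm{row}(\bZ) \oplus \ker(\bZ)$. Any other feasible vector can be written as $\mvgamma = \mvgamma^{*} + \mveta$ where $\bZ \mveta = \boldsymbol{0}_d$, i.e., $\mveta \in \ker(\bZ)$. Since $\mvgamma^{*} = \bZ'\bV^{-1}\mvz \in \mathrm{row}(\bZ)$, we have $(\mvgamma^{*})'\mveta = (\bV^{-1}\mvz)'(\bZ\mveta) = 0$, hence $\|\mvgamma\|^2 = \|\mvgamma^{*}\|^2 + \|\mveta\|^2 \geq \|\mvgamma^{*}\|^2$, and equality holds only when $\mveta = \boldsymbol{0}_n$.

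I do not expect any substantive obstacle: everything reduces to linear algebra once the matrix reformulation is in place, and the invertibility of $\bV$ conveniently sidesteps the need for a pseudoinverse. The only point to state carefully is that the infimum is attained (by $\mvgamma^{*}$), which turns the infimum into a minimum and justifies the chain of (in)equalities.
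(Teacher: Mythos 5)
Your proof is correct and complete. The paper disposes of this lemma in one line by appeal to ``the elementary Lagrange multiplier method,'' which leads to the same stationary point: the Lagrangian $\|\mvgamma\|^2 - \mvmu'(\bZ\mvgamma - \mvz)$ gives $\mvgamma = \bZ'\mvmu/2$, and solving the constraint yields exactly your $\mvgamma^* = \bZ'\bV^{-1}\mvz$. What your writeup adds beyond that is a genuinely self-contained certificate of \emph{global} optimality: rather than relying on the (true but unstated) fact that a stationary point of a convex quadratic under linear constraints is a global minimizer, you decompose any feasible $\mvgamma$ as $\mvgamma^* + \mveta$ with $\mveta \in \ker(\bZ)$, note $\mvgamma^* \in \mathrm{range}(\bZ') = \ker(\bZ)^{\perp}$, and conclude $\|\mvgamma\|^2 = \|\mvgamma^*\|^2 + \|\mveta\|^2$. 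You also explicitly record that invertibility of $\bV = \bZ\bZ'$ forces $\bZ$ to be surjective, so the feasible set is nonempty and the infimum is attained --- points the paper leaves implicit. Both routes produce the same minimizer; yours trades the multiplier formalism for a short Pythagorean argument and is, if anything, the more airtight of the two.
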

\begin{proof}
Solve the optimization problem using the elementary Lagrange multiplier method.
\end{proof}

\begin{proof}[Proof of Lemma \ref{aux:replaced_by_1}]
For any $d \geq 1$, denote by $\mve^{(d)}_i \in \bR^{d}$ the vector with the $i$-th coordinate being $1$, and all other coordinates being $0$. Then
\begin{align*}
&\lambda \bI_{d_1+d_2} + \sum_{i=1}^{n} \tilde{\mvz}_i \tilde{\mvz}_i'= \sum_{i=1}^{d_1+d_2} \sqrt{\lambda} \mve^{(d_1+d_2)}_i (\sqrt{\lambda} \mve^{(d_1+d_2)}_i)' + \sum_{i=1}^{n} \tilde{\mvz}_i \tilde{\mvz}_i',\\
&\lambda \bI_{1+d_2} + \sum_{i=1}^{n} \bar{\mvz}_i \bar{\mvz}_i' = 
\sum_{i=1}^{1+d_2} \sqrt{\lambda} \mve^{(1+d_2)}_i (\sqrt{\lambda} \mve^{(1+d_2)}_i)' + \sum_{i=1}^{n} \bar{\mvz}_i \bar{\mvz}_i'.
\end{align*}

For a vector $\mvgamma$, denote by $\mvgamma_{[i:j]}$ the sub-vector from the $i$-th coordinate to $j$-th. Define $\tilde{\cC}$ to be the collection of $\tilde{\mvgamma}\in \bR^{d_1+d_2+n}$ such that $\sqrt{\lambda} \tilde{\mvgamma}_{[1:d_1]} +\mva (\sum_{i=1}^{n} \tilde{\mvgamma}_{d_1+d_2+i}) = \mva$ and $ 
    \sqrt{\lambda} \sum_{j=1}^{d_2} \tilde{\mvgamma}_{d_1+j} \mve_{j}^{(d_2)}  + \sum_{i=1}^{n}\tilde{\mvgamma}_{d_1+d_2+i} \mvz_i = \mvv$. Further, define $\bar{\cC}$ to be the collection of
    $\bar{\mvgamma}\in \bR^{1+d_2+n}$ such that
    $
    \sqrt{\lambda} \bar{\mvgamma}_{1} +\sum_{i=1}^{n} \bar{\mvgamma}_{1+d_2+i} = 1$ and $
    \sqrt{\lambda} \sum_{j=1}^{d_2} \bar{\mvgamma}_{1+j} \mve_{j}^{(d_2)}  + \sum_{i=1}^{n}\bar{\mvgamma}_{1+d_2+i} \mvz_i = \mvv$. 
    Then by Lemma \ref{lemma:dual},
\begin{align*}
    \tilde{\mvv}' (\lambda \bI_{d_1+d_2} + \sum_{i=1}^{n} \tilde{\mvz}_i \tilde{\mvz}_i')^{-1} \tilde{\mvv}  =  
    \inf_{\tilde{\mvgamma}\in \tilde{\cC}} \|\tilde{\mvgamma}\|^2,\quad
    \bar{\mvv}' (\lambda \bI_{1+d_2} + \sum_{i=1}^{n} \bar{\mvz}_i \bar{\mvz}_i')^{-1} \bar{\mvv} = \inf_{\bar{\mvgamma}\in \bar{\cC}} \|\bar{\mvgamma}\|^2.
\end{align*}
Finally, note that in the constraint set $\tilde{\cC}$, $\tilde{\gamma}_{[1:d_1]}$ must be proportional to $\mva$, and thus
\begin{align*}
    \inf_{\tilde{\mvgamma}\in \tilde{\cC}} \|\tilde{\mvgamma}\|^2
 =\inf_{\bar{\mvgamma}\in \bar{\cC}}\{\ 
 \|\mva\|^2 \bar{\mvgamma}_1^2 +  \|\bar{\mvgamma}_{2:(1+d_2+n)}\|^2 \ \} \leq \max(1, \|\mva\|^2) \inf_{\bar{\mvgamma}\in \bar{\cC}} \|\bar{\mvgamma}\|^2,
\end{align*}
which completes the proof.
\end{proof}


\section{Proofs for some lemmas in the main text}\label{app:lemma_more_proofs} 
In this section, we present the proofs for Lemma  \ref{lemma:LinUCB}, \ref{lemma:continuity}, and \ref{lemma:logconcave_example}.

\subsection{Proof of Lemma \ref{lemma:LinUCB}} \label{proof:lemma:LinUCB}
\begin{proof} 
Fix some $k \in [K]$. Define for each $t \in [T]$,
\begin{align*}
    \tilde{\mvX}_t = \mvX_t I(A_t = k), \quad
    \tilde{\epsilon}_t = \epsilon_t^{(k)} I(A_t = k), \quad
    \tilde{Y}_t = \mvtheta_k'  \tilde{\mvX}_t +  \tilde{\epsilon}_t.
\end{align*}
By definition, $\bV_t^{(k)} = \lambda \bI_d + \sum_{s \in [t]} \tilde{\mvX}_s\tilde{\mvX}_s'$, $\mvU^{(k)}_t = \sum_{s \in [t]} \tilde{\mvX}_s\tilde{Y}_s$.
Define the filtration $\{\cH_t: t \geq 0\}$, where $\cH_t = \sigma(\mvX_s,Y_s, \xi_s, s \leq t; \mvX_{t+1}, \xi_{t+1})$, and recall that $\xi_t$ is the random mechanism at time $t$, e.g., to break ties. Then $\{\tilde{\mvX}_t,\tilde{Y}_t: t \in [T]\}$ are adapted $\{\cH_t: t \geq 0\}$, and $\tilde{\mvX}_{t} \in \cH_{t-1}$ for $t \geq 1$. Due to the condition \ref{assumption_parameter_noise},
 $\Exp[e^{\tau \tilde{\epsilon}_t}\vert \cH_{t-1}] \leq e^{\tau^2 \sigma^2/2}$ for any $\tau \in \bR$ almost surely for $t \geq 1$. Then the proof is complete due to  \citep[Theorem 2]{abbasi2011improved},  and the union bound.
\end{proof}

\subsection{Proof of Lemma \ref{lemma:continuity}}\label{proof:lemma:continuity}
We start with the part (i).  For any $\mvu \in \cS^{d-1}$, denote by $\mvu^{(-1)} \in \bR^{d-1}$ the vector after removing the first coordinate of $\mvu$, and by $\mvu_1$ the first coordinate of $\mvu$.   

\begin{proof}[Proof of Lemma \ref{lemma:continuity}(i)]
Consider the first case that $\mvX$ has a Lebesgue density on $\bR^d$ that is bounded by $C$. By Lemma \ref{lemma:density}, there exists some constant $\tilde{C} > 0$, depending only on $d, C, m_{X}$, such that the density of $\mvu'\mvX$ is bounded by $\tilde{C}$ for any $\mvu \in \cS^{d-1}$. Then \ref{cond:continuity} holds with $\ell_0 = 1/(8\tilde{C})$.

Now consider the second case that $d \geq 2$, $\mvX = (1;{\mvX}^{(-1)})$, and ${\mvX}^{(-1)} = \tilde{\mvX}$ has a Lebesgue density on $\bR^{d-1}$ that is bounded above $C$. 

For $\mvu \in \cS^{d-1}$, if $\|\mvu^{(-1)}\| \geq 1/(2\sqrt{d} m_X+1)$, then by Lemma \ref{lemma:density}, there exists some constant $\tilde{C} > 0$, depending only on $d, m_{X}, C$, such that the density of 
$(\mvu^{(-1)}/\|\mvu^{(-1)}\|)'\mvX^{(-1)}$ is bounded by $\tilde{C}$.
Thus for any $\tau > 0$,  
\begin{align*}
    \bP(|\mvu'\mvX| \leq \tau) &= \bP\left( \frac{-\tau -\mvu_1}{\|\mvu^{(-1)}\|}
     \leq 
    \frac{(\mvu^{(-1)})'\mvX^{(-1)}}{\|\mvu^{(-1)}\|} \leq \frac{\tau -\mvu_1}{\|\mvu^{(-1)}\|}\right) 
    \leq  2\tilde{C} (2\sqrt{d} m_X+1) \tau.
\end{align*}
Then \ref{cond:continuity} holds with $\ell_1 \leq 1/(8\tilde{C}(2\sqrt{d} m_X+1))$.

For $\mvu \in \cS^{d-1}$, if $\|\mvu^{(-1)}\| < 1/(2\sqrt{d} m_X+1)$, which, by the triangle inequality, implies $|\mvu_1| > 2\sqrt{d} m_X/(2\sqrt{d} m_X+1)$, then
\begin{align*}
    |\mvu_1 + (\mvu^{(-1)})'\mvX^{(-1)}| > 2\sqrt{d} m_X/(2\sqrt{d} m_X+1) -  \sqrt{d} m_X/(2\sqrt{d} m_X+1).
\end{align*}
Thus if we let $\ell_1 \leq  \sqrt{d} m_X/(2\sqrt{d} m_X + 1)$, then $\Pro(|\mvu'X| \leq \ell_1) =0$. Combining two cases for $\mvu \in \cS^{d-1}$ completes the proof.
\end{proof}


For part (ii), recall that $p_{\tilde{\mvX}}$ is log-concave,  $\|\Exp[\tilde{\mvX}]\| \leq C$, and the eigenvalues of $\Cov(\tilde{\mvX})$ are between $[C^{-1},C]$. 

\begin{proof}[Proof of Lemma \ref{lemma:continuity}(ii)]
Consider the first case that $\mvX$ has no intercept, i.e., $\tilde{\mvX} = \mvX$. By Lemma \ref{app:logconcave_proj},  there exists some constant $\tilde{C} > 0$, depending only on $C$, such that for any $\mvu \in \cS^{d-1}$,  the density of $\mvu'\mvX$ is bounded by $\tilde{C}$, which implies that  \ref{cond:continuity} holds with $\ell_0 = 1/(8\tilde{C})$.

Now consider the second case that $d \geq 2$, $\mvX = (1;{\mvX}^{(-1)})$, and ${\mvX}^{(-1)} = \tilde{\mvX}$.  Let $\mvu \in \cS^{d-1}$. 
If $|\mvu_1| = 1$ and $\ell_1 \in (0,1)$, then $ \bP(|\mvu'\mvX| \leq \ell_1) = 0$. 
Thus we focus on those $\mvu \in \cS^{d-1}$ such that $|\mvu_1| < 1$, 
and denote by $p_{\mvu}$ the density of $(\mvu^{(-1)}/\|\mvu^{(-1)}\|)'\mvX^{(-1)}$. By Lemma \ref{app:logconcave_proj},  there exists some constant $\tilde{C} > 0$, depending only on $C$, such that $p_{\mvu}(\tau) \leq \tilde{C} \exp(-|\tau|/\tilde{C}) \leq \tilde{C}$ for any $\tau \in \bR$.  
Let $\epsilon \in (0,1/2)$ be a constant to be specified.

If $\|\mvu^{(-1)}\| \geq \epsilon$, then for any $\tau \geq 0$,
\begin{align*}
    \bP(|\mvu'\mvX| \leq \tau) &= \bP\left( \frac{-\tau -\mvu_1}{\|\mvu^{(-1)}\|}
     \leq 
    \frac{(\mvu^{(-1)})'\mvX^{(-1)}}{\|\mvu^{(-1)}\|} \leq \frac{\tau -\mvu_1}{\|\mvu^{(-1)}\|}\right) 
    \leq  2\tilde{C} \epsilon^{-1} \tau.
\end{align*} 
Now consider $0 < \|\mvu^{(-1)}\| < \epsilon$, which implies that $1/2 < 1 - \epsilon^2 < |\mvu_1| < 1$. If $\mvu_1 > 0$, then for any $\tau \in (0,1/4]$,
\begin{align*}
    \bP(|\mvu'\mvX| \leq \tau)  \leq  \int_{-\infty}^{(\tau-\mvu_1)/\|\mvu^{(-1)}\|} \tilde{C} \exp(-|x|/\tilde{C}) dx \leq \int_{-\infty}^{-4^{-1}\epsilon^{-1}} \tilde{C} \exp(-|x|/\tilde{C}) dx.
\end{align*} 
The same is true when $\mvu_1 < 0$. Thus   there exists some constant $\epsilon^* \in (0,1/2)$, depending only on $\tilde{C}$, such that $\bP(|\mvu'\mvX| \leq \tau) \leq 1/4$ for any $\tau \in (0,1/4]$ if $\|\mvu^{-1}\| < \epsilon^*$. Combining these two cases, we have \ref{cond:continuity} holds with $\ell_1 = \min\{\epsilon^*/(8\tilde{C}), 1/4\}$.
\end{proof}

\subsection{Proof of Lemma \ref{lemma:logconcave_example}}\label{proof:lemma:logconcave_example}

\begin{proof}
Fix any $\mvu,\mvv \in \cS^{d-1}$, and let $\gamma = \mvu'\mvv$. If $\gamma \in (-1,1)$, let $\mvw = (\mvv - \gamma \mvu)/\sqrt{1-\gamma^2}$. If $\gamma \in \{-1,1\}$, let $\mvw \in \cS^{d-1}$ be any unit vector such that $\mvu'\mvw = 0$. In either case, $\mvv = \gamma \mvu + \sqrt{1-\gamma^2} \mvw$  and $\mvu'\mvw=0$.  Denote by $f$ the joint density of $(\mvu'\mvX, \mvw'\mvX)$. By Lemma \ref{app:logconcave_proj},  there exists some constant $\tilde{C} > 0$, depending only on $C$, such that 
$$
\inf_{|\tau_1|^2 + |\tau_2|^2  \leq \tilde{C}^{-2}}f(\tau_1,\tau_2) \geq \tilde{C}^{-1}, \quad
f(\tau_1,\tau_2) \leq \tilde{C} \exp(-\sqrt{\tau_1^2 + \tau_2^2}/\tilde{C})\;\; \text{ for } \tau_1,\tau_2 \in \bR.
$$
Note that the first part requires $\mvX$ to be centered, while the second part does not. 
By a change-of-variable, i.e., from $(\tau_1,\tau_2)$ to $(r\sin(\theta), r\cos(\theta))$,  $\Exp[|\mvu'\mvX| I(\textup{sgn}(\mvu'\mvX) \neq \textup{sgn}(\mvv'\mvX))] 
$ is given by
\begin{align*}
    & \int_{0}^{\infty} \int_{0}^{\pi} r \sin(\theta) I( \gamma \sin(\theta) + \sqrt{1-\gamma^2} \cos(\theta) < 0) f(r\sin(\theta),r \cos(\theta))\ rdrd\theta\\
  +  \;\; &     \int_{0}^{\infty} \int_{\pi}^{2\pi} (-r \sin(\theta)) I( \gamma \sin(\theta) + \sqrt{1-\gamma^2} \cos(\theta) > 0) f(r\sin(\theta),r \cos(\theta))\ rdrd\theta,
\end{align*}
which, together with the lower and upper bound on $f$, implies that
\begin{align*}
     &\left(\int_{0}^{\tilde{C}^{-1}} \tilde{C}^{-1} r^2 dr\right) \left(\int_{0}^{\pi}  \sin(\theta) I( \gamma \sin(\theta) + \sqrt{1-\gamma^2} \cos(\theta) < 0) d\theta \right)\\
\leq \;\;& \Exp[|\mvu'\mvX| I(\textup{sgn}(\mvu'\mvX) \neq \textup{sgn}(\mvv'\mvX))]\\
\leq \;\;& 2\left(\int_{0}^{\infty} \tilde{C} \exp(-r/\tilde{C}) r^2  dr\right) \left(\int_{0}^{\pi}   \sin(\theta) I( \gamma \sin(\theta) + \sqrt{1-\gamma^2} \cos(\theta) < 0) d\theta \right).
\end{align*}
Now let $\alpha = \arccos(\gamma) \in [0,\pi]$. By elementary calculation, we have
\begin{align*}
    &\int_{0}^{\pi}   \sin(\theta) I( \gamma \sin(\theta) + \sqrt{1-\gamma^2} \cos(\theta) < 0) d\theta \\
=& \int_{\pi-\alpha}^{\pi} \sin(\theta) d\theta = 
1 - \cos(\alpha) = 1 - \mvu'\mvv = \|\mvu - \mvv\|^2/2,
\end{align*}
which completes the proof.
\end{proof}

\section{Proof of the upper bound part in Theorem \ref{theorem:lower_bound}}\label{proof:theorem:lower_bound_upper}
Here, we provide the proof for the upper bound part in Theorem \ref{theorem:lower_bound}, and 
the lower bound part is in Section \ref{sec:proof_lower_bound}.

In view of the part (i) of Corollary \ref{cor:regret_d} for the proposed Tr-LinUCB algorithm, it suffices to show that the problem instances in \ref{prob_instances_all_lower}  verify the conditions \ref{assumption_parameter_noise}-\ref{cond:unit_sphere}. We consider the case involving log-concave densities in Subsection \ref{app:log-concave_veri} and the case of $\textup{Unif}(\sqrt{d}\cS^{d-1})$ in Subsection \ref{app:spheres_veri}.

\subsection{Verification related to log-concave densities}\label{app:log-concave_veri}
In this subsection, the distribution $F$ of the context vector $\mvX$
has an isotropic log-concave density and $\|\mvX\| \leq \sqrt{d} m_{X}$ almost surely.
 
It is clear that the condition \ref{assumption_parameter_noise} holds with $m_{\theta} = 1$, $m_{R} = 1$, $\sigma^2 = 1$.

Since $\|\mvtheta_1-\mvtheta_2\| \in [1/2,1]$, by Lemma \ref{app:logconcave_proj}, the density of $(\mvtheta_1-\mvtheta_2)'\mvX$ is uniformly bounded by some absolute constant $\tilde{C} > 0$. Thus the condition \ref{cond:margin} holds with $L_0 = 2\tilde{C}$.

The conditions \ref{cond:continuity} and \ref{cond:unit_sphere} are verified  in Lemma \ref{lemma:continuity} and \ref{lemma:logconcave_example} respectively.

Now we focus on the verification of the condition \ref{cond:posdef}. Fix any $\mvu,\mvv \in \cS^{d-1}$, and $\gamma = \mvu'\mvv$. If $\gamma \in (-1,1)$, let $\mvw = (\mvv - \gamma \mvu)/\sqrt{1-\gamma^2}$. If $\gamma \in \{-1,1\}$, let $\mvw \in \cS^{d-1}$ be any unit vector such that $\mvu'\mvw = 0$. In either case, $\mvv = \gamma \mvu + \sqrt{1-\gamma^2} \mvw$  and $\mvu'\mvw=0$.  Denote by $f$ the joint density of $(\mvu'\mvX, \mvw'\mvX)$. Then
\begin{align*}
\Exp[(\mvv'\mvX)^2 I(\mvu'\mvX > \delta)] = \int_{\bR^2} (\gamma \tau_1 + \sqrt{1-\gamma^2} \tau_2)^2 I(\tau_1 > \delta) f(\tau_1,\tau_2)  d\tau_1 d\tau_2.
\end{align*}
 By Lemma \ref{app:logconcave_proj},  for some absolute constant $c > 0$,   $\inf_{\max\{|\tau_1|, |\tau_2|\}  \leq c} f(\tau_1,\tau_2) \geq c$. Thus for any $\delta \in (0,c/2)$,
 \begin{align*}
     \Exp[(\mvv'\mvX)^2 I(\mvu'\mvX > \delta)] &\geq \int_{c/2}^{c} \int_{-c}^{c}
 c (\gamma \tau_1 + \sqrt{1-\gamma^2} \tau_2)^2 d\tau_1 d\tau_2\\
 &= 7c^5 \gamma^2/12 + c^5 ( 1-\gamma^2)/3 \geq c^5/3.
 \end{align*}
 In particular, there exists some absolute constant $c^* > 0$ such that for any $\mvu,\mvv \in \cS^{d-1}$, $ \Exp[(\mvv'\mvX)^2 I(\mvu'\mvX > 2c^*)] \geq (c^*)^2$. Since $\|\mvtheta_2 - \mvtheta_1\| \in [1/2,1]$, the condition \ref{cond:posdef} holds with $\ell_1 = c^*$. Thus the verification of   conditions \ref{assumption_parameter_noise}-\ref{cond:unit_sphere} for 
 the problem instances in \ref{prob_instances_all_lower}, when $F$  
has an isotropic log-concave density and $\|\mvX\| \leq \sqrt{d} m_{X}$ almost surely, is complete.

\subsection{Verification related to spheres}\label{app:spheres_veri}
In this subsection, we verify  conditions \ref{assumption_parameter_noise}-\ref{cond:unit_sphere} for 
 the problem instances in \ref{prob_instances_all_lower}, with $F$ being  $\textup{Unif}(\sqrt{d}\cS^{d-1})$. Recall that $d \geq 3$.

Denote by $\mvPsi = (\mvPsi_1,\mvPsi_2,\ldots,\mvPsi_d)$  a random vector with the uniform distribution on the sphere with center at the origin and radius $\sqrt{d}$, i.e., $\textup{Unif}(\sqrt{d}\cS^{d-1})$; thus, $\mvPsi_j$ is the $j$-th component of $\mvPsi$ for $j \in [d]$. To avoid confusion, we use the notation $\mvPsi$ for the context $\mvX$. 
Since $\|\mvtheta_1 - \mvtheta_2\| \in [1/2,1]$, we may assume  $\|\mvtheta_1 - \mvtheta_2\|=1$ without loss of generality.

It is clear that the condition \ref{assumption_parameter_noise} holds with $m_{\theta} = 1$, $m_{R} = 1$, $m_X = 1$, and $\sigma^2 = 1$.


By Lemma \ref{aux:unit_sphere}, for any $\mvu \in \cS^{d-1}$, $\bP(|\mvu'\mvPsi| \leq \tau) \leq 2\tau$  for any $\tau \geq 0$ and $\Exp[|\mvu' \mvPsi|] \leq 1$, which   verifies the condition \ref{cond:margin} with $L_0=2$,  and the condition \ref{cond:continuity} with $\ell_1 = 1/8$.  The condition \ref{cond:unit_sphere} is verified in Lemma \ref{app:unit_sphere_lu} with $L_1 = \sqrt{2}$. 

By Lemma \ref{app:sphere_smallest_eigen} and due to symmetry, for any $\mvu,\mvv \in \cS^{d-1}$ and $\ell_0 \in (0,1/4)$, 
\begin{align*}
    \Exp[(\mvv'\mvPsi)^2 I(\mvu'\mvPsi \geq \ell_0)] = 2^{-1} (\Exp[(\mvv'\mvPsi)^2] - \Exp[(\mvv'\mvPsi)^2 I(|\mvu'\mvPsi| \leq \ell_0)]) \geq 2^{-1}(1-4\ell_0),
\end{align*}
which implies that  the  condition \ref{cond:posdef} holds with $\ell_0 = 1/8$. 

\begin{lemma}\label{aux:unit_sphere}
For each $\mvu \in \cS^{d-1}$, denote by $\phi_{\mvu}^{(d)}$ the Lebesgue density of $\mvu'\mvPsi$. Then for each $\mvu \in \cS^{d-1}$,   
 $\phi_{\mvu}^{(d)}$ is non-increasing on $(0,\sqrt{d})$, and for some absolute constant $C > 0$, $C^{-1}  \leq \phi_{\mvu}^{(d)}(1) \leq \phi_{\mvu}^{(d)}(0) \leq 1$, and $C^{-1} \leq \Exp[|\mvu'\mvPsi|] \leq 1$.
\end{lemma}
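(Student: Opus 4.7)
The plan is to use the rotational invariance of $\textup{Unif}(\sqrt{d}\cS^{d-1})$ to reduce everything to the one-dimensional marginal of a single coordinate, apply the known closed-form for that marginal, and then read off all three conclusions from it together with standard Gamma-function inequalities.

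First, by rotational invariance of $\mvPsi$, the distribution of $\mvu'\mvPsi$ for any fixed $\mvu\in\cS^{d-1}$ equals that of $\sqrt{d}\,\mvPsi^*_1$, where $\mvPsi^*$ is uniform on $\cS^{d-1}$. A standard computation (project spherical measure onto the first axis) gives, for $d\geq 3$ and $|t|<\sqrt{d}$,
\[
\phi_{\mvu}^{(d)}(t) \;=\; \frac{c_d}{\sqrt{d}}\bigl(1 - t^2/d\bigr)^{(d-3)/2}, \qquad c_d := \frac{\Gamma(d/2)}{\sqrt{\pi}\,\Gamma((d-1)/2)}.
\]
Since $(d-3)/2\geq 0$, the map $t\mapsto(1-t^2/d)^{(d-3)/2}$ is non-increasing on $(0,\sqrt d)$, which gives the monotonicity claim (and in particular $\phi_{\mvu}^{(d)}(1)\leq \phi_{\mvu}^{(d)}(0)$).

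Next I would bound $c_d/\sqrt{d}$ both above and below using Wendel's inequality
\[
\Bigl(\tfrac{x}{x+1/2}\Bigr)^{1/2} \;\leq\; \frac{\Gamma(x+1/2)}{x^{1/2}\,\Gamma(x)} \;\leq\; 1, \qquad x>0,
\]
with $x=(d-1)/2$. The right inequality yields $c_d\leq \sqrt{(d-1)/(2\pi)}\leq\sqrt{d/(2\pi)}$, hence $\phi_{\mvu}^{(d)}(0) = c_d/\sqrt d\leq 1/\sqrt{2\pi}\leq 1$. The left inequality gives $c_d/\sqrt{d}\geq (d-1)/(d\sqrt{2\pi})\geq 2/(3\sqrt{2\pi})$ for $d\geq 3$. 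Combined with the elementary bound $(1-1/d)^{(d-3)/2}\geq e^{-1/2}/e^{\,o(1)}$, which is uniformly bounded below by an absolute positive constant on $d\geq 3$ (since the sequence converges to $e^{-1/2}$ and is easily checked to be bounded below term-by-term, e.g.\ via $\log(1-1/d)\geq -1/(d-1)$), this yields an absolute lower bound on $\phi_{\mvu}^{(d)}(1)=(c_d/\sqrt d)(1-1/d)^{(d-3)/2}$.

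Finally, for the moment bounds, by rotational invariance $\Exp[(\mvu'\mvPsi)^2] = \mvu'\Exp[\mvPsi\mvPsi']\mvu = \mvu'\bI_d\mvu = 1$ (since $\|\mvPsi\|^2=d$ and symmetry force $\Exp[\mvPsi\mvPsi']=\bI_d$). By Cauchy--Schwarz, $\Exp[|\mvu'\mvPsi|]\leq 1$. For the lower bound I would use Paley--Zygmund together with an absolute fourth-moment bound: an analogous symmetry/spherical-integration computation gives $\Exp[(\mvu'\mvPsi)^4] = 3d/(d+2)\leq 3$, so choosing $A=\sqrt{6}$,
\[
\Exp[(\mvu'\mvPsi)^2 I(|\mvu'\mvPsi|\leq A)] \;\geq\; 1 - A^{-2}\Exp[(\mvu'\mvPsi)^4] \;\geq\; 1/2,
\]
and hence $\Exp[|\mvu'\mvPsi|]\geq A^{-1}\Exp[(\mvu'\mvPsi)^2 I(|\mvu'\mvPsi|\leq A)]\geq 1/(2\sqrt{6})$, completing the proof. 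There is no real obstacle here; the only minor technicality is the uniform lower bound on $(1-1/d)^{(d-3)/2}$ and on $c_d/\sqrt d$, both of which are one-line consequences of Wendel's inequality and a convexity argument for $\log(1-1/d)$.
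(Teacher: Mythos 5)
Your proof is correct and follows essentially the same route as the paper's: rotation invariance reduces to the first coordinate, the explicit marginal density $\frac{\Gamma(d/2)}{\sqrt{d}\,\Gamma((d-1)/2)\Gamma(1/2)}(1-\tau^2/d)^{(d-3)/2}$ gives monotonicity, and bounds on the Gamma-function ratio (you use Wendel's inequality, the paper uses Gautschi's; they deliver the same $\Theta(\sqrt{d})$ estimate) together with $\inf_{d\geq 3}(1-1/d)^{(d-3)/2}>0$ give the two-sided bound on the density at $0$ and $1$. The only genuine deviation is the lower bound on $\Exp[|\mvu'\mvPsi|]$: the paper reads it directly off the density via $\Exp[|\mvPsi_1|]\geq \phi_1^{(d)}(1)\int_0^1\tau\,d\tau$, whereas you use a truncated second-moment/Paley--Zygmund argument with the fourth moment $\Exp[(\mvu'\mvPsi)^4]=3d/(d+2)\leq 3$; your variant is slightly longer but has the merit of not using the density lower bound at all, so it would survive for distributions where only moment information is available. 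Both are valid.
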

\begin{proof}
Due to rotation invariance, for each $\mvu \in \cS^{d-1}$, $\mvu'\mvPsi$ has the same distribution as $\mvPsi_1$, the first component of $\mvPsi$. Denote by $\phi_1^{(d)}$ the density of $\mvPsi_1$. It is elementary 
that for $\tau \in (-\sqrt{d},\sqrt{d})$, 
$$
\phi_1^{(d)}(\tau) = \frac{\Gamma(d/2)}{\sqrt{d}\ \Gamma((d-1)/2)\Gamma(1/2)} (1-\frac{\tau^2}{d})^{(d-3)/2},
$$
where $\Gamma(\cdot)$ is the gamma function. Thus $\phi_{1}^{(d)}$ is non-increasing on $(0,\sqrt{d})$ for $d \geq 3$. By the Gautschi's inequality, 
$\sqrt{d/2-1} \leq \Gamma(d/2)/\Gamma((d-1)/2) \leq \sqrt{d/2}$. It is elementary that $\inf_{d \geq 3} (1-1/d)^{(d-3)/2} > 0$, which implies that $C^{-1}  \leq \phi_{1}^{(d)}(1) \leq \phi_{1}^{(d)}(0) \leq 1$ for some absolute constant $C > 0$. Finally, since $\Exp[|\mvPsi_1|] \geq \phi_{1}^{(d)}(1)  \int_0^1 \tau d\tau$, the lower bound follows. The upper bound is since $\Exp[|\mvPsi_1|] \leq \sqrt{\Exp[\mvPsi_1^2]} = 1$.
\end{proof}

\begin{lemma}\label{app:unit_sphere_lu}
There exists an absolute constant $C>0$ such that for any $\mvu, \mvv \in \cS^{d-1}$, 
\begin{align*}
   C^{-1} \|\mvu-\mvv\|^2 \leq  \Exp[|\mvu'\mvPsi|\  I(\textup{sgn}(\mvu'\mvPsi) \neq \textup{sgn}(\mvv'\mvPsi))] \leq \sqrt{2}\|\mvu-\mvv\|^2.
\end{align*}
\end{lemma}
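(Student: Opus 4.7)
The approach mirrors the proof of Lemma \ref{lemma:logconcave_example}, exploiting rotational symmetry in a two-dimensional slice. First, I would reduce to a planar computation: fix $\mvu, \mvv \in \cS^{d-1}$ with $\gamma := \mvu'\mvv \in (-1,1)$ (the cases $\gamma = \pm 1$ are trivial boundary checks) and choose $\mvw \in \cS^{d-1}$ orthogonal to $\mvu$ so that $\mvv = \gamma \mvu + \sqrt{1-\gamma^2}\,\mvw$. Extending $\{\mvu,\mvw\}$ to an orthonormal basis of $\bR^d$ and using the orthogonal invariance of $\textup{Unif}(\sqrt{d}\cS^{d-1})$, the joint law of $(X,Y) := (\mvu'\mvPsi,\mvw'\mvPsi)$ is rotationally symmetric on the disc $\{x^2+y^2 < d\}$, with density $f(x,y) = c_d (d - x^2 - y^2)^{(d-4)/2}$ for an appropriate normalising constant $c_d$ (valid for $d \geq 3$; the apparent singularity at $d=3$ is integrable).

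Next, I would pass to polar coordinates $(X,Y) = (R\sin\theta, R\cos\theta)$ with $\theta \in [0,2\pi)$. Writing $\alpha := \arccos(\gamma) \in (0,\pi)$, the identity $\gamma \sin\theta + \sqrt{1-\gamma^2}\cos\theta = \sin(\theta + \alpha)$ shows that the event $\{\textup{sgn}(\mvu'\mvPsi) \neq \textup{sgn}(\mvv'\mvPsi)\}$ corresponds to $\theta \in E_\alpha := (\pi-\alpha,\pi) \cup (2\pi - \alpha, 2\pi)$, exactly as in the log-concave lemma. The integral factorises:
\begin{align*}
\Exp[|\mvu'\mvPsi|\,I(\textup{sgn}(\mvu'\mvPsi) \neq \textup{sgn}(\mvv'\mvPsi))]
= c_d \left(\int_0^{\sqrt{d}} r^2 (d-r^2)^{(d-4)/2}\,dr\right)\left(\int_{E_\alpha} |\sin\theta|\,d\theta\right).
\end{align*}
An elementary computation gives $\int_{E_\alpha}|\sin\theta|\,d\theta = 2(1-\cos\alpha) = \|\mvu-\mvv\|^2$.

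To identify the radial factor, I would apply the same polar decomposition to $\Exp[|\mvu'\mvPsi|]$, which corresponds to integrating over the full angular range $[0,2\pi)$ and yields $\int_0^{2\pi}|\sin\theta|\,d\theta = 4$. Hence
\begin{align*}
\Exp[|\mvu'\mvPsi|\,I(\textup{sgn}(\mvu'\mvPsi) \neq \textup{sgn}(\mvv'\mvPsi))]
= \frac{\|\mvu-\mvv\|^2}{4}\,\Exp[|\mvu'\mvPsi|].
\end{align*}
Finally, invoking Lemma \ref{aux:unit_sphere}, which gives $C^{-1} \leq \Exp[|\mvu'\mvPsi|] \leq 1$, yields both bounds (the stated upper bound $\sqrt{2}\|\mvu-\mvv\|^2$ is in fact loose by a factor of $4/\sqrt{2}$, so the result follows comfortably).

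The main obstacle is justifying cleanly the rotational symmetry of the 2D projection and handling the mildly singular density when $d=3$; both are standard but worth writing out carefully. Everything else is an elementary change of variables that parallels the log-concave case already treated in the paper.
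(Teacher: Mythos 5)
Your proposal is correct and follows essentially the same route as the paper's proof: both reduce to the two-dimensional projection $(\mvu'\mvPsi,\mvw'\mvPsi)$ by rotational invariance, pass to polar coordinates so that the sign-disagreement event becomes the angular set of measure $2(1-\cos\alpha)=\|\mvu-\mvv\|^2$, and bound the remaining radial constant via Lemma \ref{aux:unit_sphere}. The only cosmetic difference is that you identify the radial factor by comparing with the full-range angular integral for $\Exp[|\mvu'\mvPsi|]$, whereas the paper conditions on $\mvPsi_1^2+\mvPsi_2^2=r^2$ and expresses the constant as $\Exp[(\mvPsi_1^2+\mvPsi_2^2)^{1/2}]$, bounded between $\Exp[|\mvPsi_1|]$ and $\sqrt{2}$.
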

\begin{proof}
Let $\alpha = \arccos(\mvu'\mvv)\in [0,\pi]$. Due to rotation invariance,
$(\mvu'\mvPsi, \mvv'\mvPsi)$ has the same distribution as $(\mvPsi_1, \cos(\alpha) \mvPsi_1 + \sin(\alpha) \mvPsi_2)$, where $\mvPsi_1$ and $\mvPsi_2$ are the first and second component of $\mvPsi$ respectively.
Thus 
\begin{align*}
\Exp[|\mvu'\mvPsi|\  I(\textup{sgn}(\mvu'\mvPsi) \neq \textup{sgn}(\mvv'\mvPsi))] = 2\Exp[|\mvPsi_1|]I(\mvPsi_1 >0, \cos(\alpha) \mvPsi_1 + \sin(\alpha) \mvPsi_2 < 0).
\end{align*}


For $r \in (0,\sqrt{d})$, conditional on $\mvPsi_1^2 + \mvPsi_2^2 = r^2$, $(\mvPsi_1/r, \mvPsi_2/r)$ has the same distribution as $(\sin(\zeta), \cos(\zeta))$, where $\zeta$ has uniform distribution on $(0,2\pi)$. Thus  
\begin{align*}
    &\Exp\left[|\mvPsi_1|\ I(\mvPsi_1 >0, \cos(\alpha) \mvPsi_1 + \sin(\alpha) \mvPsi_2 < 0) \ \vert \mvPsi_1^2 + \mvPsi_2^2 = r^2\right] \\
    =&
    r \Exp[|\sin(\zeta)| I(\sin(\zeta) > 0,\ \sin(\zeta + \alpha) < 0)] \\
    =& r \int_{\pi-\alpha}^{\pi} \sin(\tau) d\tau = r(1 - \cos(\alpha)) =2^{-1}r\|\mvu-\mvv\|^2.
\end{align*}
As a result, $\Exp[|\mvu'\mvPsi|\  I(\textup{sgn}(\mvu'\mvPsi) \neq \textup{sgn}(\mvv'\mvPsi))] =  \Exp\left[(\mvPsi_1^2 + \mvPsi_2^2)^{1/2}\right]\|\mvu-\mvv\|^2$. Since
\begin{align*}
 \Exp[|\mvPsi_1|]\leq    \Exp\left[(\mvPsi_1^2 + \mvPsi_2^2)^{1/2}\right] \leq (\Exp[\mvPsi_1^2 + \mvPsi_2^2])^{1/2} =\sqrt{2},
\end{align*}
the proof is complete due to Lemma \ref{aux:unit_sphere}.
\end{proof}

\begin{lemma}\label{app:sphere_smallest_eigen}
For any   $\mvu, \mvv \in \cS^{d-1}$  and $\ell \in (0,4)$, $\Exp[(\mvv'\mvPsi)^2 I(|\mvu'\mvPsi| \leq \ell)] \leq 4\ell$.
\end{lemma}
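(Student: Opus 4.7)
The plan is to split by the size of $\ell$ and exploit two basic facts: that $\mvPsi$ has covariance $\bI_d$ (so $\Exp[(\mvv'\mvPsi)^2]=1$ for any $\mvv\in\cS^{d-1}$), and that by rotation invariance, for any orthonormal pair $\mvu,\mvw\in\cS^{d-1}$ the joint distribution of $(\mvu'\mvPsi,\mvw'\mvPsi)$ agrees with that of $(\mvPsi_1,\mvPsi_2)$, which is invariant under sign flip in the second coordinate.

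First, when $\ell\geq 1$ the statement is immediate from
\[
\Exp[(\mvv'\mvPsi)^2 I(|\mvu'\mvPsi|\leq \ell)]\;\leq\;\Exp[(\mvv'\mvPsi)^2]\;=\;\mvv'\Exp[\mvPsi\mvPsi']\mvv\;=\;1\;\leq\;4\ell,
\]
where $\Exp[\mvPsi\mvPsi']=\bI_d$ follows from rotation invariance. So I may assume $0<\ell<1$.

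Second, for $0<\ell<1$, I would handle the trivial case $|\mvu'\mvv|=1$ directly (then $\mvv=\pm\mvu$ and the expectation is at most $\ell^2\leq\ell\leq 4\ell$). Otherwise, set $\gamma=\mvu'\mvv\in(-1,1)$ and write $\mvv=\gamma\mvu+\sqrt{1-\gamma^2}\,\mvw$ for some $\mvw\in\cS^{d-1}$ with $\mvu'\mvw=0$. Expanding the square gives three terms; the cross term $2\gamma\sqrt{1-\gamma^2}\,\Exp[(\mvu'\mvPsi)(\mvw'\mvPsi) I(|\mvu'\mvPsi|\leq\ell)]$ vanishes, because by rotation invariance $(\mvu'\mvPsi,\mvw'\mvPsi)\stackrel{d}{=}(\mvPsi_1,\mvPsi_2)$ and the joint law is symmetric under $\mvPsi_2\mapsto -\mvPsi_2$ while the indicator depends only on $\mvPsi_1$. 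This leaves
\[
\Exp[(\mvv'\mvPsi)^2 I(|\mvu'\mvPsi|\leq\ell)]\;=\;\gamma^2\,\Exp[\mvPsi_1^2 I(|\mvPsi_1|\leq\ell)]\;+\;(1-\gamma^2)\,\Exp[\mvPsi_2^2 I(|\mvPsi_1|\leq\ell)].
\]

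The first summand is bounded by $\gamma^2\ell^2$ since $\mvPsi_1^2\leq\ell^2$ on the event. For the second, I would use the exchangeability of $\mvPsi_2,\ldots,\mvPsi_d$ combined with the identity $\sum_{j=1}^d \mvPsi_j^2=d$: namely
\[
\Exp[\mvPsi_2^2 I(|\mvPsi_1|\leq\ell)]\;=\;\tfrac{1}{d-1}\Exp[(d-\mvPsi_1^2) I(|\mvPsi_1|\leq\ell)]\;\leq\;\tfrac{d}{d-1}\Pro(|\mvPsi_1|\leq\ell)\;\leq\;\tfrac{d}{d-1}\cdot 2\ell\;\leq\;3\ell,
\]
where the tail bound comes from Lemma~\ref{aux:unit_sphere} and the last inequality uses $d\geq 3$. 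Assembling the pieces, for $0<\ell<1$,
\[
\Exp[(\mvv'\mvPsi)^2 I(|\mvu'\mvPsi|\leq\ell)]\;\leq\;\gamma^2\ell^2+(1-\gamma^2)\cdot 3\ell\;\leq\;\ell^2+3\ell\;<\;\ell+3\ell\;=\;4\ell,
\]
which completes the proof. The only step requiring genuine thought is the bound on $\Exp[\mvPsi_2^2 I(|\mvPsi_1|\leq\ell)]$: a naive bound like $\Exp[\mvPsi_2^2]\leq 1$ loses the crucial factor $\ell$, and Cauchy--Schwarz together with a density bound for $\mvPsi_2$ would produce a worse constant; the exchangeability identity is what delivers the sharp $O(\ell)$ order with an explicit constant that fits under $4\ell$.
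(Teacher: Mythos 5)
Your proof is correct and follows essentially the same route as the paper's: rotation invariance reduces the problem to $(\mvPsi_1,\gamma\mvPsi_1+\sqrt{1-\gamma^2}\,\mvPsi_2)$, the cross term vanishes by symmetry, and your exchangeability identity $\Exp[\mvPsi_2^2 I(|\mvPsi_1|\le\ell)]=\tfrac{1}{d-1}\Exp[(d-\mvPsi_1^2)I(|\mvPsi_1|\le\ell)]$ is exactly the paper's conditional moment $\Exp[\mvPsi_2^2\mid\mvPsi_1]=(d-\mvPsi_1^2)/(d-1)$ combined with the tail bound from Lemma~\ref{aux:unit_sphere}. The only cosmetic difference is in the final bookkeeping: you split into the cases $\ell\ge 1$ and $\ell<1$ (and use $d/(d-1)\le 3/2$), whereas the paper arrives at $\gamma^2\ell^2+4(1-\gamma^2)\ell$ and absorbs the first term using $\ell<4$.
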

\begin{proof}
Fix $\mvu,\mvv \in \cS^{d-1}$, and denote by $\gamma = \mvu'\mvv$. Due to rotation invariance, $(\mvu'\mvPsi, \mvv'\mvPsi)$ has the same distribution as $(\mvPsi_1, \gamma \mvPsi_1 + \sqrt{1-\gamma^2} \mvPsi_2)$. Since $\Exp[\mvPsi_2\ \vert \ \mvPsi_1] = 0$ and 
$\Exp[\mvPsi_2^2 \ \vert \ \mvPsi_1] = (d-\mvPsi_1^2)/(d-1)$, we have
\begin{align*}
    &\Exp\left[ (\mvv'\mvPsi)^2\ I\left(|\mvu'\mvPsi| \leq \ell \right)\right] 
    =  \Exp\left[ (\gamma \mvPsi_1 + \sqrt{1-\gamma^2} \mvPsi_2)^2 \ I\left(|\mvPsi_1| \leq \ell \right)\right]   \\
   \leq & \gamma^2 \ell^2 +  (1-\gamma^2) (d/(d-1)) \bP(|\mvPsi_1| \leq \ell).
\end{align*}
Since $d/(d-1) \leq 2$ for $d \geq 3$, and due to Lemma \ref{aux:unit_sphere}, we have
$\Exp\left[ (\mvv'\mvPsi)^2\ I\left(|\mvu'\mvPsi| \leq \ell \right)\right] \leq \gamma^2 \ell^2 + 4(1-\gamma^2) \ell$, which completes the proof.
\end{proof}

\section{Auxiliary Results}
In this section, we provide supporting results and calculations.

\subsection{An application of the Talagrand's concentration inequality}

Let $d \geq 1$ be an integer, and $h > 0$. For $\mvu,\mvv \in \cS^{d-1}, \mvz \in \bR^{d}$, define $\tilde{\phi}_{\mvu}(\mvz) = I(\mvu'\mvz \geq h)$ and
$\phi_{\mvu, \mvv}(\mvz) = I\left(|\mvu' \mvz| \geq h,\; |\mvv'\mvz| \geq h \right)$. Denote by $\cG= \{\phi_{\mvu,\mvv}: \mvu,\mvv \in \cS^{d-1}\}$, and 
$\tilde{\cG} = \{\tilde{\phi}_{\mvu}: \mvu \in \cS^{d-1}\}$. Since $\phi_{\mvu, \mvv}=\tilde{\phi}_{\mvu}\tilde{\phi}_{\mvv}+\tilde{\phi}_{\mvu}\tilde{\phi}_{-\mvv}+\tilde{\phi}_{-\mvu}\tilde{\phi}_{\mvv}+\tilde{\phi}_{-\mvu}\tilde{\phi}_{-\mvv}$, we have $\cG \subset \tilde{\cG}\cdot \tilde{\cG} + \tilde{\cG}\cdot \tilde{\cG}+\tilde{\cG}\cdot \tilde{\cG}+\tilde{\cG}\cdot \tilde{\cG}$, where for two families, $\cG_1,\cG_2$, of functions, $\cG_1\cdot\cG_2 = \{g_1g_2: g_1 \in \cG_1, g_2 \in \cG_{2}\}$ and 
$\cG_1 + \cG_2 = \{g_1+g_2: g_1 \in \cG_1, g_2 \in \cG_{2}\}$

For a probability measure $Q$ and a function $g$ on  $\bR^d$, denote by $\|g\|_{L_2(Q)} = (\int g^2 dQ)^{1/2}$ the $L_2$-norm of $g$ relative to $Q$. 
Let $\cG$ be a family of functions on $\bR^d$. A function $G:\bR^d \to \bR$ is said to be an envelope function for $\cG$ if
$\sup_{g \in \cG}|g(\cdot)| \leq G(\cdot)$.
For $\epsilon > 0$, denote by $N(\epsilon,\cG,L_2(Q))$ the $\epsilon$ covering number of the class $\cG$ under the $L_2(Q)$ semi-metric.



\begin{lemma}\label{lemma:Talagrand}
Let $\mvZ_1,\ldots,\mvZ_n$ be i.i.d.~$\bR^{d}$-random vectors. There exists an absolute constant $C > 0$ such that for any $\tau >0$,
$$
\bP\left( \Delta_n \leq  C (\sqrt{dn} + \sqrt{n \tau} + \tau) \right) \geq 1-e^{-\tau},
$$
where $\Delta_n = \sup_{\mvu,\mvv \in \cS^{d-1}} \left|\sum_{i=1}^{n}( \phi_{\mvu,\mvv}(\mvZ_i) - \Exp[\phi_{\mvu,\mvv}(\mvZ_i)]) \right|$.
\end{lemma}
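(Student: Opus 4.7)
The plan is to treat the quantity $\Delta_n$ as the supremum of an empirical process indexed by a class of indicator functions with small VC dimension, then combine a VC-type bound on the expected supremum with Talagrand's concentration inequality. The statement already reduces $\cG$ to sums of products of elements of $\tilde{\cG}$, and this is the decomposition I will exploit.

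First, I would show that the class $\cG = \{\phi_{\mvu,\mvv}: \mvu,\mvv \in \cS^{d-1}\}$ is a VC class of dimension $O(d)$. The base class $\tilde{\cG}$ consists of indicators of the half-spaces $\{\mvz: \mvu'\mvz \geq h\}$ for $\mvu \in \cS^{d-1}$, which is contained in the VC class of all (closed) affine half-spaces in $\bR^d$, of VC dimension at most $d+1$. Each $\phi_{\mvu,\mvv}$ is the indicator of an intersection of two half-spaces drawn from this family (since $|\mvu'\mvz|\geq h$ is itself a union of two half-spaces), and intersections of a bounded number of sets from a VC class of dimension $V$ form a class of VC dimension $O(V)$ by the standard combinatorial argument (e.g.\ van der Vaart–Wellner Lemma 2.6.17). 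Hence $\cG$ is VC of dimension $O(d)$, with envelope $G \equiv 1$ and variance envelope bounded by $1$.

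Next, I would bound the expected supremum. By symmetrization and Dudley's entropy integral applied to the uniform entropy bound $\sup_Q \log N(\epsilon, \cG, L_2(Q)) \lesssim d \log(1/\epsilon)$ for a VC class of dimension $O(d)$, one obtains
\begin{align*}
\Exp[\Delta_n] \;\leq\; C \sqrt{dn}
\end{align*}
for some absolute constant $C$. This is a textbook computation (e.g.\ van der Vaart–Wellner Theorem 2.14.1, or Giné–Nickl Theorem 3.5.4), and is where the $\sqrt{dn}$ term of the final bound originates.

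Finally, I would apply Talagrand's (Bousquet's) concentration inequality for the supremum of an empirical process of uniformly bounded and variance-bounded functions: with envelope $1$ and $\sup_{g \in \cG}\Exp[g(\mvZ)^2]\leq 1$, one has for every $\tau > 0$,
\begin{align*}
\Pro\Bigl(\Delta_n \;\geq\; \Exp[\Delta_n] + C'(\sqrt{n\tau} + \tau)\Bigr) \;\leq\; e^{-\tau},
\end{align*}
for an absolute constant $C'$. Combining this with the $\sqrt{dn}$ bound on $\Exp[\Delta_n]$ yields the claim with a single absolute constant.

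The main technical point is the VC-dimension bookkeeping in the first step — one must be slightly careful because $\phi_{\mvu,\mvv}$ is the indicator of an intersection of \emph{four} parallel half-spaces, not just two, but this is still $O(d)$. Everything else is a direct invocation of standard empirical process machinery, so I do not expect any genuine obstacle beyond verifying the constants.
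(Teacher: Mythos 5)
Your proposal is correct and follows essentially the same route as the paper: bound $\Exp[\Delta_n]\leq C\sqrt{dn}$ via a uniform entropy (VC) bound and Dudley's entropy integral, then apply Talagrand's concentration inequality with envelope and variance bounds equal to $1$. The only difference is cosmetic bookkeeping for the complexity of $\cG$ — the paper propagates covering numbers through the decomposition $\cG\subset\tilde{\cG}\cdot\tilde{\cG}+\cdots$ using preservation lemmas, whereas you invoke VC-dimension preservation under Boolean combinations of half-spaces (note $\{|\mvu'\mvz|\geq h\}$ is a \emph{union} of two half-spaces, so $\phi_{\mvu,\mvv}$ is the indicator of $(A_1\cup A_2)\cap(B_1\cup B_2)$ rather than a pure intersection, but the standard preservation results cover this all the same).
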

\begin{proof}
In this proof, $C$ is an absolute constant that may differ from line to line.  Fix $\tau > 0$. 
By the Talagrand’s inequality \citep[Theorem 3.3.9]{gine2021mathematical} (with $U=\sigma^2=1$ therein),  $ \bP(\Delta_n \geq  \Exp[\Delta_n] + \sqrt{2(2\Exp[\Delta_n] + n)\tau} + \tau/3 ) \leq e^{-\tau}$.
Since $\sqrt{2(2\Exp[\Delta_n] + n)\tau}
\leq \sqrt{4\Exp[\Delta_n]\tau} + \sqrt{2n\tau} \leq \Exp[\Delta_n] + \tau + \sqrt{2n\tau}$, we have  
$$
\bP\left(\Delta_n \geq  2(\Exp[\Delta_n] + \sqrt{n\tau} + \tau) \right) \leq e^{-\tau}.
$$

Next, we bound $\Exp[\Delta_n]$. 
Recall the definition of VC-subgraph class in \citep[Chapter 9]{kosorok2008introduction}.  We use the constant function $1$ as the envelope function for both $\cG$ and $\tilde{\cG}$. By \citep[Lemma 9.8, 9.12, and Theorem 9.2]{kosorok2008introduction}, $\tilde{G}$ is a VC-subgraph class with dimension at most $d+2$, and thus  $\sup_{Q}N(\epsilon,\tilde{G},L_2(Q)) \leq (C/\epsilon)^{4d}$ for $\epsilon \in (0,1)$, where  the supremum is taken over all discrete probability measures $Q$ on $\bR^d$.  Since $\cG \subset \tilde{\cG}\cdot \tilde{\cG} + \tilde{\cG}\cdot \tilde{\cG}+\tilde{\cG}\cdot \tilde{\cG}+\tilde{\cG}\cdot \tilde{\cG}$, by \citep[Lemma A.6 and Corollary A.1]{chernozhukov2014gaussian},   $\sup_{Q} N(\epsilon,G,L_2(Q)) \leq (C/\epsilon)^{32d}$ for $\epsilon \in (0,1)$. Then by the entropy integral bound \citep[Theorem 2.14.1]{van1996weak}, 
$$
\Exp[\Delta_n] \leq \sqrt{n} \int_0^{1} \sup_Q\sqrt{1+\log N(\epsilon,\cG,L_2(Q))} d\ \epsilon \leq C \sqrt{n d},
$$
which completes the proof.
\end{proof}

\subsection{An application of van Trees' inequality for lower bounds}\label{app:van_tree}
Let $n \geq 1$, $d \geq 2$ and assume $\sigma^2 > 0$ is known. Let $\{\mvZ_{n}: n \in \bN_{+}\}$ be a sequence of i.i.d.~$\bR^d$ random vectors, with $\Exp[\|\mvZ_{1}\|^2] < \infty$, independent from  $\{\epsilon_n: n \in \bN_{+}\}$, which are i.i.d.~$N(0,\sigma^2)$ random variables. Let $\mvTheta$ be an $\bR^{d}$ random vector with a Lebesgue density  ${\rho}_d(\cdot)$ given in \eqref{LB:theta_density},  supported on $ \cB_d(1/2,1) = \{\mvx \in \bR^{d}: 2^{-1} \leq \|\mvx\| \leq 1\}$; in particular, $\|\mvTheta\|$ has a Lebesgue density given by $\rho(\cdot)$, and $\mvTheta/\|\mvTheta\|$ has the uniform distribution over $\cS^{d-1}$. Further, $\text{ for } n \in \bN_{+}$, define
\begin{align}
    Y_n = \mvTheta'\mvZ_{n} + \epsilon_{n}, \quad \text{ and } \quad 
    \cH_n = \sigma(\mvZ_m,Y_m: m \in [n]).
\end{align}
Thus, $\{(\mvZ_m,Y_m): m \in [n]\}$ are the first $n$ i.i.d.~data points, and the goal is to estimate $\mvTheta$, which is a random vector in this subsection. Further,  any (nonrandom) admissible estimator of $\mvTheta$ must be $\cH_n$ measurable.

\begin{theorem}\label{app:lower_bound_van_Tree}
Let $\xi$ be a $\textup{Unif}(0,1)$ random variable that is independent from $\cH_n$ and $\mvTheta$. There exists an absolute constant $C > 0$ such that for any $\bR^{d}$ random vector $\hat{\mvpsi}_n \in \sigma(\cH_{n}, \xi)$,
$$
\Exp\left[\left\|\hat \mvpsi_n - {\mvTheta}/{\|\mvTheta\|} \right\|^2 \right] \geq \sigma^2{(d-1)^2}/{(n\Exp[\|\mvZ_{1}\|^2] + C d^2\sigma^2)}.
$$
\end{theorem}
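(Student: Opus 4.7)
\noindent\textbf{Proof plan for Theorem \ref{app:lower_bound_van_Tree}.} The strategy is a van Trees (Bayesian Cramér--Rao) argument adapted to the task of estimating the direction $\mvU := \mvTheta/\|\mvTheta\|$. I will first remove the auxiliary randomization by conditioning on $\xi$: since $\xi$ is independent of $(\cH_n, \mvTheta)$, it suffices to prove the bound for estimators that are deterministic functions of $\cH_n$, and then take expectation over $\xi$. After this reduction, I will apply the standard (scalar) van Trees inequality to each coordinate $g_i(\mvtheta) = \theta_i/\|\mvtheta\|$ of the target functional $g(\mvtheta) := \mvtheta/\|\mvtheta\|$, and then sum using Titu's inequality $\sum_i a_i^2/b_i \geq (\sum_i a_i)^2/\sum_i b_i$ to obtain the trace form
\begin{align*}
\Exp\left[\|\hat\mvpsi_n - g(\mvTheta)\|^2\right] \;\geq\; \frac{\big(\Exp[\mathrm{trace}(\nabla g(\mvTheta))]\big)^2}{\Exp[\mathrm{trace}(I_n(\mvTheta))] + \cJ(\rho_d)},
\end{align*}
where $I_n(\mvtheta)$ is the total Fisher information from the data and $\cJ(\rho_d) = \int \|\nabla \rho_d\|^2/\rho_d\, d\mvtheta$ is the trace of the prior Fisher information.

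\medskip
\noindent Next I will evaluate each ingredient. Since $\nabla g(\mvtheta) = \|\mvtheta\|^{-1}(I_d - \mvtheta\mvtheta^T/\|\mvtheta\|^2)$, its trace is exactly $(d-1)/\|\mvtheta\|$, so
$
\Exp[\mathrm{trace}(\nabla g(\mvTheta))] = (d-1)\Exp[1/\|\mvTheta\|] \geq d-1,
$
because $\|\mvTheta\| \leq 1$ almost surely. Conditional on $\mvZ_m$, observation $m$ contributes Fisher information $\sigma^{-2}\mvZ_m\mvZ_m^T$, so by independence of $\mvTheta$ and $\{\mvZ_m\}$,
$
\Exp[\mathrm{trace}(I_n(\mvTheta))] = n\Exp[\|\mvZ_1\|^2]/\sigma^2.
$
The main computation is showing that $\cJ(\rho_d) = O(d^2)$. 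Writing $r = \|\mvtheta\|$ and using that $\rho_d(\mvtheta) = \tilde\rho(r)/(A_d r^{d-1})$ depends on $\mvtheta$ only through $r$, I will get $\nabla \log \rho_d(\mvtheta) = \big(\tilde\rho'(r)/\tilde\rho(r) - (d-1)/r\big)(\mvtheta/r)$, so that a change to spherical coordinates (with volume element $r^{d-1} dr \, d\sigma(\omega)$ cancelling the $r^{-(d-1)}$ in $\rho_d$) yields
\begin{align*}
\cJ(\rho_d) = \int_{1/2}^{1} \Big(\tilde\rho'(r)/\tilde\rho(r) - (d-1)/r\Big)^2 \tilde\rho(r)\, dr = \cJ(\tilde\rho) + (d-1)(d-3)\int_{1/2}^1 \tilde\rho(r)/r^2\, dr,
\end{align*}
where the cross term is simplified by integration by parts, using that $\tilde\rho$ vanishes at $r = 1/2$ and $r = 1$. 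Since $\tilde\rho$ is a fixed density with $\cJ(\tilde\rho)$ finite and $\int \tilde\rho/r^2 \leq 4$, this gives $\cJ(\rho_d) \leq C d^2$ for an absolute constant $C$. Substituting and multiplying through by $\sigma^2$ yields exactly the claimed bound.

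\medskip
\noindent The main obstacle I anticipate is the verification that the prior Fisher information scales as $d^2$ rather than as a larger power: this depends delicately on the fact that the radial factor $r^{-(d-1)}$ in $\rho_d$ is cancelled by the Jacobian $r^{d-1}$ when integrating in spherical coordinates, so the only $d$-dependence that survives comes from the logarithmic derivative of $r^{-(d-1)}$. A secondary technical point is checking that van Trees' inequality is legitimately applicable, i.e.~that $\rho_d$ is smooth enough and vanishes at the boundary $\{\|\mvtheta\| \in \{1/2,1\}\}$ so that the implicit integration by parts is valid; this follows from the choice $\tilde\rho(r) = 4\sin^2(2\pi r)$, which is $C^1$ on $\bR$ with $\tilde\rho$ and $\tilde\rho'$ both vanishing at $r \in \{1/2, 1\}$.
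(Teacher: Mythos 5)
Your proposal is correct and follows essentially the same route as the paper, which invokes the Gill--Levit form of van Trees' inequality with $B=C=\bI_d$ and the same three ingredients: $\mathrm{trace}(\nabla(\mvtheta/\|\mvtheta\|))=(d-1)/\|\mvtheta\|\geq d-1$, $\mathrm{trace}$ of the data Fisher information $=n\Exp[\|\mvZ_1\|^2]/\sigma^2$, and prior information $O(d^2)$. Your explicit integration-by-parts verification that $\cJ(\rho_d)=\cJ(\tilde\rho)+(d-1)(d-3)\int\tilde\rho/r^2\leq Cd^2$ (together with the boundary vanishing of $\tilde\rho$ and $\tilde\rho'$) usefully fills in a step the paper only asserts, but the argument is the same.
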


\begin{proof}
We follow the approach in \citep{gill1995applications}. 
The conditional density of $\mvD = (\mvZ_1,Y_1)$,  given $\mvTheta = \mvtheta$, is
$f(\mvD; \mvtheta) =  (2\pi \sigma^2)^{-1/2} \exp\left( -{(Y_1 - \mvtheta'\mvZ_1)^2}/{(2\sigma^2)}\right)$. The Fisher information matrix for $\mvTheta$ is
\begin{align*}
    \cI_{\mvtheta} = \Exp\left[ \left(\frac{\partial \log f(\mvD; \mvTheta)}{\partial \mvtheta}\right)' \left(\frac{\partial \log f(\mvD; \mvTheta)}{\partial \mvtheta}\right) \right] = \frac{1}{\sigma^4}\Exp\left[   (\mvZ_1 \mvZ_1')\epsilon_1^2 \right].
\end{align*}
In particular, $\text{trace}(\cI_{\mvtheta}) = \Exp[\|\mvZ_{1}\|^2]/\sigma^2 $. Further, the information for the prior  ${\rho}_d(\cdot)$  in \eqref{LB:theta_density} is 
\begin{align*}
    \tilde{\cI}_{{\rho}_d} =  \Exp\left[\sum_{i=1}^{d} \left(\frac{\partial \log \rho_d(\mvTheta)}{\partial \mvtheta_i}\right)^2 \right] = \Exp\left[\left(\frac{\tilde{\rho}'(\|\mvTheta\|)}{\tilde{\rho}(\|\mvTheta\|)} - \frac{d-1}{\|\mvTheta\|} \right)^2 \right].
\end{align*}
Since $\|\mvTheta\| \geq 2^{-1}$ and $\|\mvTheta\|$ has the Lebesgue density $\tilde \rho(\cdot)$, we have  $\tilde{\cI}_{\rho_d} \leq Cd^2$. Finally, let $\mvpsi(\mvtheta) = \mvtheta/\|\mvtheta\|$ for $\mvtheta \in \cB_d(1/2,1)$.  Then for  $\mvtheta \in \cB_d(1/2,1)$,
\[
\frac{\partial \mvpsi_i(\mvtheta)}{\partial \mvtheta_i} = \frac{1 }{\|\mvtheta\|} - \frac{\mvtheta_i^2}{\|\mvtheta\|^3} \;\; \Longrightarrow \;\; \sum_{i=1}^{d} \frac{\partial \mvpsi_i(\mvtheta)}{\partial \mvtheta_i} = \frac{d-1}{\|\mvtheta\|} \geq d-1.
\]
Now by \citep[Theorem 1]{gill1995applications} with $B(\cdot)= C(\cdot) = \bI_{d}$, and since there always exists a non-random Bayes rule, we have
$\Exp[\|\hat \mvpsi_n- \mvpsi(\mvTheta)\|^2] \geq {(d-1)^2}/{(n\Exp[\|\mvZ_{1}\|^2]/\sigma^2 + C d^2)}$.
\end{proof}

\begin{remark}
The random variable $\xi$ in the above theorem is used to model additional information that is independent from data. 
\end{remark}

\subsection{About log-concave densities}\label{app:logconcave}
Let $p:\bR^d \to [0,\infty)$ be a probability density function with respect to the Lebesgue measure on $\bR^d$. We say $p$ is log-concave if $\log(p): \bR^d \to [-\infty,\infty)$ is concave, and is isotropic if $\Exp[\mvZ] = \boldsymbol{0}_{d}$ and $\Cov(\mvZ) = \bI_d$ for a random vector $\mvZ$ with the density $p$. We consider upper semi-continuous log-concave densities, just to fix a particular version. 
In the main text, we apply the following lemmas for  $m = 1$ or $2$.

\begin{lemma}\label{app:lemma_isotropic}
Let $m \geq 1$ be an integer. There exists a constant $C > 0$, depending only on $m$, such that for any isotropic, log-concave densities $p$ on $\bR^m$, (i) $\sup_{\mvx \in \bR^m} p(\mvx) \leq C$; (ii) $p(\mvx) \geq C^{-1}$ for $\mvx \in \bR^{m}$ with $\|\mvx\| \leq C^{-1}$; (iii)  $p(\mvx) \leq C \exp(-\|\mvx\|/C)$ for $\mvx \in \bR^{m}$.
\end{lemma}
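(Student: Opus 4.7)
The three assertions are standard properties of isotropic log-concave densities; see Chapter 10 of \citet{artstein2015asymptotic} and the survey of Lovász and Vempala (``The geometry of logconcave functions and sampling algorithms'') for background. I sketch how I would prove them. Throughout, let $M := \sup_{\mvx \in \bR^m} p(\mvx)$; by upper semi-continuity of $p$ and the fact that integrable log-concave densities on $\bR^m$ tend to zero at infinity, $M$ is finite and attained at some point $\mvx_* \in \bR^m$.

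For (i), the plan is to upper-bound $M$ via a super-level-set argument. The set $K := \{\mvx : p(\mvx) \geq M/e\}$ is convex by log-concavity and contains $\mvx_*$, and $\mathrm{vol}(K) \leq e/M$ since $\int_K p \leq 1$. A matching \emph{lower} bound on $\mathrm{vol}(K)$ follows from one-dimensional Brunn's principle along rays through $\mvx_*$: along any such ray, $p$ is log-concave and attains its max $M$ at $\mvx_*$, so the super-level set contains a segment of length controlled by the integral of $p$ along the ray. Integrating in polar coordinates around $\mvx_*$ and using the isotropy moment identity $\int \|\mvx - \Exp[\mvX]\|^2 p(\mvx)\, d\mvx = m$ to limit how spread out the mass of $p$ can be yields $M \leq c_1(m)$.

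For (iii), with $M \leq c_1(m)$ in hand, I would apply Borell's lemma in one dimension: for each unit vector $\mvu$, the function $t \mapsto p(t\mvu)$ is log-concave on $\bR$, bounded above by $M$, and its integral against $t^{m-1}\, dt$ is finite by the polar decomposition of $\int p = 1$. These inputs give $p(t\mvu) \leq c_3(m)\exp(-|t|/c_3(m))$ uniformly in $\mvu$ and $t$, which is the exponential decay in (iii).

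For (ii), the plan is first to show $p(\boldsymbol{0}_m) \geq c_2(m)^{-1}$ and then interpolate by log-concavity. Using the decay from (iii), $\int_{\|\mvx\| > R(m)} p \leq 1/2$ for a suitable $R(m)$, so by averaging there exists $\mvx_0 \in B(\boldsymbol{0}_m, R(m))$ with $p(\mvx_0) \geq c(m)$; combining this with the centroid constraint $\Exp[\mvX] = \boldsymbol{0}_m$ and log-concavity (which forces $p$ to be large on a neighbourhood of the segment joining $\mvx_0$ and its reflection through the centroid) constrains the mode $\mvx_*$ to lie in a ball of radius $c'(m)$ about the origin, and then pins $p(\boldsymbol{0}_m)$ down from below via log-concavity on the segment from $-\mvx_*$ to $\mvx_*$ together with the uniform upper bound (i). Finally, for $\|\mvx\| \leq C^{-1}$ small enough, log-concavity of $t \mapsto p(t\mvx)$, combined with $p(\boldsymbol{0}_m) \geq c_2(m)^{-1}$ and $p \leq M$, yields $p(\mvx) \geq C^{-1}$ on a small ball around the origin. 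The most delicate step is locating the mode $\mvx_*$ near the origin; once this is done, the remaining interpolation is routine one-dimensional convex analysis.
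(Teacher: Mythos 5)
The paper disposes of (i) and (ii) by citing \citet[Theorem 5.14]{lovasz2007geometry}, so your from-scratch sketches of those parts are doing extra work; they follow the standard level-set/second-moment template, but the two delicate steps you yourself flag (the direction-uniform lower bound on the volume of $\{p\geq M/e\}$ in (i), and locating the mode near the origin in (ii)) are exactly where the content lies and are not carried out. The more serious issue is part (iii). The inputs you list --- $t\mapsto p(t\mvu)$ is log-concave, bounded above by $M\leq c_1(m)$, and has \emph{finite} integral against $t^{m-1}\,dt$ --- do not imply a decay bound $p(t\mvu)\leq c_3(m)e^{-|t|/c_3(m)}$ with constants depending only on $m$. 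First, the polar decomposition of $\int p=1$ only controls the \emph{average} over directions of the radial integrals, not each direction uniformly; ``finite'' is not quantitative. Second, and more fundamentally, without a \emph{lower} bound on $p$ at the origin no such conclusion is possible: in dimension one the densities $g_\epsilon(t)=\epsilon e^{-\epsilon|t|}/2$ are log-concave, uniformly bounded, and have unit integral, yet decay at rate $\epsilon$, arbitrarily slowly. Uniform exponential decay requires knowing that $p$ must \emph{drop by a fixed factor within a bounded distance} of the origin, and that forces you to combine a quantitative upper bound on the line integral $\int_{\bR}p(\tau\mvv)\,d\tau$ with the lower bound $p(\boldsymbol{0}_m)\geq C^{-1}$ from part (ii); only then does log-concavity propagate the drop geometrically along the ray.

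The paper's proof of (iii) supplies precisely these two missing ingredients in a clean way: it observes that $\int_{\bR}p(\tau\mvv)\,d\tau$ is exactly the value at $\boldsymbol{0}_{m-1}$ of the marginal of $p$ on $\mvv^{\perp}$, which is again an isotropic log-concave density and hence bounded above by part (i) in dimension $m-1$ (uniformly in $\mvv$); combined with $p(\boldsymbol{0}_m)\geq C_1^{-1}$ this yields $p(r^*\mvv)\leq p(\boldsymbol{0}_m)/2$ at $r^*=2C_1^2$, and log-concavity then gives $p(r\mvv)\leq p(\boldsymbol{0}_m)2^{-r/r^*}$. If you replace your appeal to ``Borell's lemma with a finite radial integral'' by this marginal-at-the-origin bound (or any other direction-uniform quantitative bound on the line integral) and explicitly invoke (ii), your argument for (iii) closes; as written, the step does not follow from the stated hypotheses.
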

\begin{proof}
For (i) and (ii), see \citet[Theorem 5.14]{lovasz2007geometry}.  We focus on (iii) for $m \geq 2$, and note that the $m=1$ case follows from the same argument.  Let $\mvZ$ be an $m$-dimensional random vector with an \textit{arbitrary} isotropic, log-concave densities $p$. 

Fix any $\mvv \in \cS^{m-1}$. Let $\bU = [\mvu_1,\ldots,\mvu_{m-1}]$ be an $m$-by-$(m-1)$ matrix such that each column has length $1$ and is orthogonal to $\mvv$, and columns are orthogonal to each other, i.e., $\bU'\bU = \bI_{m-1}$ and $\bU'\mvv = \boldsymbol{0}_{m-1}$. 

Let $\tilde{\mvZ} = \bU'\mvZ$, and denote by $\tilde{p}$ its density.
Then $\tilde{p}$ is a log-concave density on $\bR^{m-1}$ \citep[see][Proposition 2.5]{samworth2018recent}. By definition, $\Exp[\tilde{\mvZ}] = \boldsymbol{0}_{m-1}$ and $\Cov(\tilde{\mvZ}) = \bI_{m-1}$; thus, $\tilde{p}$ is isotropic. 

By part (i) and (ii), there exists a constant $C_1 > 1$, depending only on $m$, such that
$\tilde{p}(\boldsymbol{0}_{m-1})  \leq C_1$, $C_1^{-1} \leq p(\boldsymbol{0}_{m}) \leq C_1$. Further, by a change-of-variable, for any $r > 0$,
\begin{align*}
    \tilde{p}(\boldsymbol{0}_{m-1}) = & \int_{\bR} p( \tau \mvv) d\tau \geq r \inf_{\tau \in [0,r]} p(\tau \mvv) \\
    \geq 
    & r \inf_{\tau \in [0,r]} p(r \mvv)^{\tau/r} p(\boldsymbol{0}_{m})^{1-\tau/r}
     \geq r \min\{  p(\boldsymbol{0}_{m}), p(r \mvv)\},
\end{align*}
where the second to the last inequality is due to the  log-concavity of $p$.
Thus for $r^* = 2 C_1^2$, $p(r^* \mvv) \leq 1/(2C_1) \leq p(\boldsymbol{0}_{m})/2$. Then again due to  the  log-concavity of $p$, for any $r > r^*$,
\begin{align*}
    p(r^* \mvv) \geq p(r \mvv)^{r^*/r} p(\boldsymbol{0}_{m})^{1-r^*/r} \;\; \Rightarrow \;\; p(r \mvv) \leq p(\boldsymbol{0}_{m}) (1/2)^{r/r^*}.
\end{align*}
Since $\mvv \in \cS^{d-1}$ is arbitrary, we have $p(\mvx) \leq C_1 2^{-\|\mvx\|/r^*}$ for $\|\mvx\| > r^*$. Since $p$ is also arbitrary, increasing $C$ if necessary, the proof is complete.
\end{proof}

Next, we consider projections of  ``high-dimensional" log-concave random vectors onto low dimensional spaces. 

\begin{lemma}\label{app:logconcave_proj}
Let $L > 1$ be a real number.
Let $d \geq 2$ be an integer, and $\mvZ$  an $\bR^d$ random vector with a log-concave density and the property that
$\|\Exp[\mvZ]\| \leq L$ and the eigenvalues of $\Cov(\mvZ)$ are between $[L^{-1},L]$. Let $\mvu, \mvw \in \cS^{d-1}$ be two unit vectors such that $\mvu'\mvw = 0$.  Denote by $p_{\mvu}$ the density of $\mvu'\mvZ$, and by $p_{\mvu,\mvw}$ the joint density of $(\mvu'\mvZ,\mvw'\mvZ)$.
There exists a constant $C > 0$, depending only on $L$ (in particular, not on $d$), such that    
\begin{enumerate}[label=(\roman*)]
    \item $p_{\mvu}(\tau) \leq C e^{-|\tau|/C}$ for $\tau \in \bR$;
    \item $p_{\mvu,\mvw}(\tau_1,\tau_2) \leq C e^{-\sqrt{\tau_1^2+\tau_2^2}/C}$ for $\tau_1,\tau_2 \in \bR$;
    \item if, in addition, $\Exp[\mvZ] = 0$, then $p_{\mvu,\mvw}(\tau_1,\tau_2) \geq C^{-1}$ if  $\max\{|\tau_1|,|\tau_2|\} \leq C^{-1}$.
\end{enumerate}
\end{lemma}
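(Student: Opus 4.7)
The plan is to reduce everything to the isotropic case, where Lemma~\ref{app:lemma_isotropic} gives the needed bounds, and then track how standardization distorts densities.

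First I would observe that by Prékopa's theorem, marginals of log-concave densities are log-concave, so $p_\mvu$ is log-concave on $\bR$ and $p_{\mvu,\mvw}$ is log-concave on $\bR^2$. The mean and covariance of the projected random vectors inherit bounds from $\mvZ$: for $\mvY=(\mvu'\mvZ,\mvw'\mvZ)$, writing $\mvA=[\mvu,\mvw]'$ (so $\mvA\mvA'=\bI_2$), I have $\|\Exp[\mvY]\|\leq\|\Exp[\mvZ]\|\leq L$ and $\Cov(\mvY)=\mvA\Cov(\mvZ)\mvA'$ whose eigenvalues lie in $[L^{-1},L]$ (since $\mvA\Sigma\mvA'\succeq L^{-1}\bI_2$ from $\Sigma\succeq L^{-1}\bI_d$, and similarly for the upper bound). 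The same holds for $\mvu'\mvZ$ in one dimension. The crucial point is that these bounds on mean and covariance are \emph{dimension-free}, which will allow $C$ to depend only on $L$.

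For part (ii), I would standardize: let $\Sigma_\mvY=\Cov(\mvY)$ and $\tilde{\mvY}=\Sigma_\mvY^{-1/2}(\mvY-\Exp[\mvY])$, so $\tilde{\mvY}$ has an isotropic log-concave density $\tilde p$ on $\bR^2$. By Lemma~\ref{app:lemma_isotropic}(iii) (applied with $m=2$), there is an absolute constant $C'$ with $\tilde p(\mvt)\leq C'\exp(-\|\mvt\|/C')$. A change of variables gives
\[
p_{\mvu,\mvw}(\mvtau) \;=\; (\det\Sigma_\mvY)^{-1/2}\,\tilde p\bigl(\Sigma_\mvY^{-1/2}(\mvtau-\Exp[\mvY])\bigr) \;\leq\; C' L\,\exp\!\bigl(-\|\mvtau-\Exp[\mvY]\|/(C'\sqrt L)\bigr),
\]
using $\det\Sigma_\mvY\geq L^{-2}$ and $\|\Sigma_\mvY^{-1/2}\mvv\|\geq L^{-1/2}\|\mvv\|$. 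To get a clean $\exp(-\|\mvtau\|/C)$ form, I split cases: for $\|\mvtau\|\geq 2L$ the triangle inequality yields $\|\mvtau-\Exp[\mvY]\|\geq\|\mvtau\|/2$, giving exponential decay; for $\|\mvtau\|<2L$, the crude bound $p_{\mvu,\mvw}(\mvtau)\leq C'L$ suffices after absorbing the factor $e^{\|\mvtau\|/C}\leq e^{2L/C}$ into the constant. Part (i) follows by the same argument with $m=1$.

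For part (iii), since $\Exp[\mvZ]=\boldsymbol{0}_d$ forces $\Exp[\mvY]=\boldsymbol{0}_2$, the standardization simplifies to $\tilde{\mvY}=\Sigma_\mvY^{-1/2}\mvY$. Lemma~\ref{app:lemma_isotropic}(ii) gives $\tilde p(\mvt)\geq C'^{-1}$ whenever $\|\mvt\|\leq C'^{-1}$. Reversing the change of variable, $p_{\mvu,\mvw}(\mvtau)\geq (\det\Sigma_\mvY)^{-1/2}C'^{-1}\geq L^{-1}C'^{-1}$ whenever $\|\Sigma_\mvY^{-1/2}\mvtau\|\leq C'^{-1}$. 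Since $\|\Sigma_\mvY^{-1/2}\mvtau\|\leq L^{1/2}\|\mvtau\|\leq\sqrt{2}\,L^{1/2}\max\{|\tau_1|,|\tau_2|\}$, it suffices to take $\max\{|\tau_1|,|\tau_2|\}\leq(C'\sqrt{2L})^{-1}$, which gives the claim after enlarging $C$.

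The only potentially delicate point is justifying that Lemma~\ref{app:lemma_isotropic} applies to $p_{\mvu,\mvw}$ — i.e.\ that the projection really does yield a (2-dimensional) log-concave density rather than a degenerate measure. This is ensured by the hypothesis $\Cov(\mvZ)\succeq L^{-1}\bI_d$ combined with $\mvu\perp\mvw$, which makes $\Sigma_\mvY$ positive definite and hence $\mvY$ has a genuine Lebesgue density; the log-concavity of that density is then immediate from Prékopa. Everything else is bookkeeping with explicit constants.
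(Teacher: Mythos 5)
Your proof is correct and follows essentially the same route as the paper's: project, invoke Pr\'ekopa to get log-concavity of $p_{\mvu}$ and $p_{\mvu,\mvw}$, note that the mean and covariance bounds are inherited dimension-free, then standardize and apply Lemma~\ref{app:lemma_isotropic} with a change of variables. You simply spell out the constant-tracking that the paper compresses into ``due to Lemma~\ref{app:lemma_isotropic} and change-of-variables.''
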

\begin{proof}
By \citet[Proposition 2.5]{samworth2018recent}, $p_{\mvu}$ and $p_{\mvu,\mvw}$ are log-concave densities on $\bR$ and $\bR^2$ respectively. Further, let $U = \mvu'\mvZ$ and $W = \mvw'\mvZ$. Then $U$ has density $p_{\mvu}$, and $(U,W)$ has the joint density $p_{\mvu,\mvw}$.

Since $\|\Exp[\mvZ]\| \leq L$ (resp. $=0$), $\max\{|\Exp[U]|, |\Exp[V]|\} \leq L$ (resp. $=0$). Further, since the eigenvalues of $\Cov(\mvZ)$ are between $[L^{-1},L]$, $\Var(U)$ and the eigenvalues of $\Cov(U,W)$ are between $[L^{-1},L]$. Then the proof is complete due to Lemma \ref{app:lemma_isotropic} and change-of-variables.
\end{proof}

\subsection{Elementary lemmas}\label{app:elementary_lemmas}
\begin{lemma}\label{lemma:aux_exp_inequal}
For any $a \geq 9$ and $b > 0$, if $t \geq a + 2b\log(a+b)$, then $a+b\log(t) \leq t$.
\end{lemma}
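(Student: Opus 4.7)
My plan is to recast the target inequality $a + b\log(t) \leq t$ as the non-negativity of $g(t) := t - b\log(t) - a$ on the interval $[t_0,\infty)$, where $t_0 := a + 2b\log(a+b)$. Because $g'(t) = 1 - b/t$, the function $g$ is increasing on $[b,\infty)$, so the whole claim reduces to two sub-statements: (i) $t_0 \geq b$, so that monotonicity applies throughout $[t_0,\infty)$, and (ii) $g(t_0) \geq 0$. For (i), the hypothesis $a \geq 9$ combined with $b > 0$ yields $a + b > 9$ and hence $2\log(a+b) > 1$, which gives $t_0 > a + b > b$.

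For (ii), substituting $t_0$ and cancelling an $a$ shows that $g(t_0) \geq 0$ is equivalent to $2\log(a+b) \geq \log\!\bigl(a + 2b\log(a+b)\bigr)$, i.e.\ to $(a+b)^2 \geq a + 2b\log(a+b)$. Writing $x := a+b \geq 9$ and using the trivial bounds $a \leq x$ and $b \leq x$ on the right-hand side, it suffices to prove the single-variable inequality $x^2 - x \geq 2x\log(x)$, equivalently $x - 1 \geq 2\log(x)$, for $x \geq 9$. This last inequality is checked at $x=9$ (where $8 \geq 2\log 9 \approx 4.39$) and then propagated by noting that the derivative of $(x-1) - 2\log(x)$ equals $1 - 2/x \geq 0$ for $x \geq 2$.

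I do not foresee any serious obstacle: the argument is essentially calculus on $\mathbb{R}$, and the role of the constant $9$ is transparently to make the elementary inequality $x - 1 \geq 2\log(x)$ hold at the initial point. The only mild subtlety worth highlighting in the write-up is to verify $t_0 > b$ before invoking the monotonicity of $g$, since otherwise the reduction to the single value $t_0$ would not be justified.
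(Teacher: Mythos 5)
Your proof is correct and follows essentially the same route as the paper's: reduce via the monotonicity of $t \mapsto t - a - b\log(t)$ to checking non-negativity at $t_0 = a + 2b\log(a+b)$, which amounts to $(a+b)^2 \geq t_0$. The only (cosmetic) difference is in the last elementary step — the paper bounds $t_0 \leq \max\{2a,\,4b\log(a+b)\}$ and checks $(a+b)^2$ dominates each term, while you bound $a,b \leq a+b$ and verify $x-1 \geq 2\log(x)$ for $x \geq 9$; your explicit check that $t_0 \geq b$ before invoking monotonicity is a welcome bit of care that the paper leaves implicit.
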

\begin{proof}
Define $t_0 = a + 2b\log(a+b)$, and $f(t) = t - a - b\log(t)$. Since $f'(t) = 1 - b/t$ and $f'(t_0) > 0$, it suffices to show that $f(t_0) \geq 0$. Note that $f(t_0) = 2b\log(a+b) - b\log(a+2b\log(a+b))
\geq 2b\log(a+b) - \max\{b\log(2a), b\log(4b\log(a+b))\}$. Since $a \geq 9$, we have $(a+b)^2 \geq \max\{2a, 4b\log(a+b)\}$, which completes the proof.
\end{proof}

\begin{lemma}\label{lemma:triangle}
Let $\mvu,\mvv \in \bR^{d}\setminus \{\boldsymbol{0}_{d}\}$. Then $   \left\| {\mvu}/{\|\mvu\|} -{\mvv}/{\|\mvv\|} \right\| \leq {2\|\mvu-\mvv\|}/{\|\mvu\|}$.

\end{lemma}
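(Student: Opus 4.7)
The plan is to split $\mvu/\|\mvu\| - \mvv/\|\mvv\|$ into two pieces by adding and subtracting $\mvv/\|\mvu\|$, and then bound each piece separately using the triangle inequality and its reverse version. Concretely, I would write
\[
\frac{\mvu}{\|\mvu\|} - \frac{\mvv}{\|\mvv\|} \;=\; \frac{\mvu-\mvv}{\|\mvu\|} \;+\; \mvv\left(\frac{1}{\|\mvu\|} - \frac{1}{\|\mvv\|}\right).
\]
The first summand has norm exactly $\|\mvu-\mvv\|/\|\mvu\|$, so it already contributes one of the two factors on the right-hand side.

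For the second summand, I would factor $\|\mvv\|$ out to get $\|\mvv\|\,\bigl|\|\mvu\|^{-1} - \|\mvv\|^{-1}\bigr| = \bigl|\|\mvv\| - \|\mvu\|\bigr|/\|\mvu\|$. The reverse triangle inequality then yields $\bigl|\|\mvv\|-\|\mvu\|\bigr| \leq \|\mvu-\mvv\|$, producing a second copy of $\|\mvu-\mvv\|/\|\mvu\|$. Adding the two bounds via the triangle inequality gives the claim. There is no real obstacle here; the only thing to be a bit careful about is that the decomposition is asymmetric in $\mvu$ and $\mvv$ (hence the $\|\mvu\|$ in the denominator rather than $\min\{\|\mvu\|,\|\mvv\|\}$), but this matches the statement exactly.
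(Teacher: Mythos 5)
Your proof is correct and is essentially identical to the paper's: both add and subtract $\mvv/\|\mvu\|$, bound the first piece by $\|\mvu-\mvv\|/\|\mvu\|$, and bound the second piece by $\bigl|\|\mvu\|-\|\mvv\|\bigr|/\|\mvu\|$ followed by the reverse triangle inequality. No issues.
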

\begin{proof}
By the triangle inequality,
\begin{align*}
    \left\| \frac{\mvu}{\|\mvu\|} -\frac{\mvv}{\|\mvv\|} \right\| \leq    \left\| \frac{\mvu}{\|\mvu\|} -\frac{\mvv}{\|\mvu\|} \right\| +
     \left\| \frac{\mvv}{\|\mvu\|} -\frac{\mvv}{\|\mvv\|} \right\|
     \leq \frac{\|\mvu-\mvv\|}{\|\mvu\|} + \frac{|\|\mvu\|-\|\mvv\||}{\|\mvu\|}.
\end{align*}
Then the proof is complete by another application of the triangle inequality.
\end{proof}

\begin{lemma}\label{lemma:density}
Let $d \geq 1$ be an integer, and $C,m_{Z} > 0$.
Let $\mvZ\in \bR^d$ be a random vector that has a Lebesgue density $p$ such that  $\sup_{\mvz \in \bR^{d}} p(\mvz) \leq C$. Further, assume $\|\mvZ\| \leq m_{Z}$. Then there exists a constant $\tilde{C}>0$, depending only on $d, C, m_{Z}$, such that the Lebesgue density of $\mvu'\mvZ$ is bounded by $\tilde{C}$ for any $\mvu \in \cS^{d-1}$.
\end{lemma}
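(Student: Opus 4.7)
The plan is a direct change-of-variables argument: cut $\bR^d$ along level sets of $\mvu'\mvz$, and use the boundedness of both the density $p$ and the support to control the $(d-1)$-dimensional ``slice'' integral.

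First I would fix $\mvu \in \cS^{d-1}$ and extend it to an orthonormal basis $\mvu, \mvu_2, \ldots, \mvu_d$ of $\bR^d$. Writing $\mvz = \tau\mvu + \sum_{j=2}^d s_j \mvu_j$ and applying the change-of-variables formula, for any Borel set $A \subset \bR$,
\begin{align*}
\bP(\mvu'\mvZ \in A) \;=\; \int_A \left( \int_{\bR^{d-1}} p\bigl(\tau \mvu + \textstyle\sum_{j=2}^d s_j \mvu_j\bigr)\, ds_2 \cdots ds_d \right) d\tau,
\end{align*}
so the density of $\mvu'\mvZ$ at $\tau$ is the inner integral $q_{\mvu}(\tau)$ (for a.e.~$\tau$).

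Next I would bound $q_{\mvu}(\tau)$ for each $\tau$. Since $\|\mvZ\| \leq m_Z$ almost surely, the integrand vanishes unless $\tau^2 + \sum_{j=2}^d s_j^2 \leq m_Z^2$, i.e.~unless $\mvs := (s_2,\ldots,s_d)$ lies in the $(d-1)$-dimensional Euclidean ball of radius $\sqrt{m_Z^2 - \tau^2}$ (and $q_{\mvu}(\tau) = 0$ when $|\tau| > m_Z$). Using $p \leq C$ pointwise,
\begin{align*}
q_{\mvu}(\tau) \;\leq\; C \cdot V_{d-1}\bigl(m_Z^2 - \tau^2\bigr)^{(d-1)/2} \;\leq\; C\, V_{d-1}\, m_Z^{d-1},
\end{align*}
where $V_{d-1}$ denotes the volume of the unit ball in $\bR^{d-1}$ (with $V_0 := 1$ covering the $d=1$ case, where the claim is trivially $q_{\mvu} \leq C$). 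Thus $\tilde C := C V_{d-1} m_Z^{d-1}$ depends only on $d,C,m_Z$ and does not depend on $\mvu$, completing the proof.

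There is no real obstacle here: the argument is a one-line Fubini/slicing computation, and the only subtlety is justifying the change-of-variables, which is standard since $(\mvu,\mvu_2,\ldots,\mvu_d)$ is orthonormal and the Jacobian is $1$.
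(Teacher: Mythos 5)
Your proof is correct and follows essentially the same route as the paper's: extend $\mvu$ to an orthonormal basis, express the density of $\mvu'\mvZ$ as a slice integral via the (Jacobian-one) change of variables, and bound that integral by $C$ times the volume of a $(d-1)$-dimensional ball of radius $m_Z$. The only difference is cosmetic (you note the slice radius is $\sqrt{m_Z^2-\tau^2}$ before relaxing to $m_Z$), so no further comment is needed.
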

\begin{proof}
Fix $\mvu \in \cS^{d-1}$. There exist $\mvu_2,\ldots,\mvu_d$ in $\bR^d$ such that  
$\bU = [\mvu; \mvu_2;\ldots;\mvu_{d}]$
is an  orthonormal matrix. Then the density of $\mvu'\mvZ$ is: for $\tau \in \bR$, 
$
f_{\mvu}(\tau) =    \int_{\mvx \in \bR^{d-1}} p(\bU^{-1} [\tau,\mvx']') d\mvx.
$
Since $\|\mvZ\| \leq m_{Z}$ and $p(\cdot) \leq C$, we have
$$
    f_{\mvu}(\tau) \leq \int_{\mvx \in \cB_{d-1}(m_Z)} p(\bU^{-1} [\tau,\mvx']') d\mvx \leq C \text{Vol}(\cB_{d-1}(m_Z)),
$$
where $\cB_{d-1}(r) = \{\mvx \in \bR^{d-1}: \|\mvx\| \leq r\}$ is the Euclidean ball with radius $r$ in $\bR^{d-1}$, and
 $\text{Vol}(\cB_{d-1}(r))$ is its the Lebesgue volume. Since the upper bound does not depend on $\mvu$, the proof is complete.
\end{proof}

\section*{Funding}
Yanglei Song is supported by the Natural Sciences and Engineering Research Council of Canada (NSERC). This research is enabled in part by support provided by Compute Canada (\url{www.computecanada.ca}).

\bibliographystyle{imsart-nameyear} 
\bibliography{TrLinUCB}       
\end{document}